\newcommand{\titlename}{Jointly Efficient and Optimal Algorithms for Logistic Bandits}
\begin{document}

\doparttoc
\faketableofcontents

\twocolumn[

\aistatstitle{\titlename}

\aistatsauthor{ Louis Faury \And Marc Abeille \And  Kwang-Sung Jun \And Cl\'ement Calauz\`enes}

\aistatsaddress{ Criteo AI Lab \And  Criteo AI Lab \And University of Arizona \And Criteo AI Lab } ]

\begin{abstract}
  	Logistic Bandits have recently undergone careful scrutiny by virtue of their combined theoretical and practical relevance. This research effort delivered statistically efficient algorithms,  improving the regret of previous strategies by exponentially large factors. Such algorithms are however strikingly costly as they require $\bigomega{t}$ operations at each round. On the other hand, a different line of research focused on computational efficiency ($\bigo{1}$ per-round cost), but at the cost of letting go of the aforementioned exponential improvements. Obtaining the best of both world is unfortunately not a matter of marrying both approaches. 
	Instead we introduce a new \emph{learning} procedure for Logistic Bandits. It yields confidence sets which sufficient statistics can be easily maintained online without sacrificing statistical tightness. Combined with efficient \emph{planning} mechanisms we design fast algorithms which regret performance still match the problem-dependent lower-bound of \cite{abeille2020instance}. To the best of our knowledge, those are the first Logistic Bandit algorithms that simultaneously enjoy statistical and computational efficiency. 

\end{abstract}

\section{INTRODUCTION}
\paragraph{Logistic Bandit.}
The Logistic Bandit (\logb) framework describes sequential decision making problems in which an agent receives {structured binary} bandit feedback for her decisions. This namely allows to model numerous real-world situations where actions are evaluated by success/failure feedback (\emph{e.g.} click/no-click in ad-recommandation problems). From a theoretical standpoint, the \logb{} framework allows a neat and concise study of the interactions between non-linearity and the exploration/exploitation trade-off.  
Recent research efforts on this front were conducted by \cite{faury2020improved,abeille2020instance,jun2020improved}, relying on improved confidence sets for the design and analysis of regret-minimizing \logb{} algorithms. 
This has led to significant improvement over the seminal work of \cite{filippi2010parametric}, deflating the regret bounds by exponentially large factors. Their approach testifies of the importance of a careful handling of non-linearity in order to achieve optimal performances (\emph{i.e} matching the regret lower-bound from \citet[Theorem 2]{abeille2020instance}). 
From a learning-theoretic standpoint, this line of work brings the understanding of \logb{} almost to a tie with the  Linear Bandit (\lb). It highlights that some highly non-linear \logb{} instances are easier to solve (in some sense) than their \lb{} counterparts and brings forward algorithms with largely improved practical performances (see \citet[Section H]{abeille2020instance}). 
\paragraph{Limitations.}
A severe drawback of those improved \logb{} algorithms resides in their tremendous \emph{computational} cost. For instance, the \texttt{OFULog-r} algorithm of \cite{abeille2020instance} requires to maintain batch maximum-likelihood estimators (which cannot be updated recursively) and to solve at every round expensive convex programs.
The computational hardness of those tasks (respectively related to the \emph{learning} and \emph{planning} mechanisms of the algorithm) largely exceeds their \lb{} counterparts and lead to a painfully slow algorithm - prohibitively so for situations where decisions must be made on the fly. As a result, statistically efficient yet fast \logb{} algorithms are still missing - which is the topic of this paper.

\paragraph{Main Contributions.}
Our main contribution is \textbf{(1)} a new \emph{learning} procedure for \logb{}. It yields \textbf{(2)} a new confidence set which sufficient statistics can be maintained at each round with $\bigotilde{1}$ operations, without sacrificing statistical tightness. Furthermore \textbf{(3)} the shape of this set enables the deployment of efficient \emph{planning} strategies as a plug-in. This enables the design of computationally efficient algorithms whose regret guarantees match the lower-bound of \cite{abeille2020instance}. To the best of our knowledge, those \logb{} algorithms are the first to enjoy both statistical and computational efficiency simultaneously. We summarize our contributions in \cref{tab:regret_comparison}.

\paragraph{Organization.} We formally introduce the learning problem in \cref{sec:preliminaries} and discuss previous works, their limitations and remaining challenges. In \cref{sec:results} we describe our new estimation method, coined \underline{E}ffi\underline{c}ient L\underline{o}cal Learning for \underline{Log}istic Bandits (\texttt{ECOLog}). We then analyze an optimistic algorithm leveraging this procedure and claim that it enjoys the same regret guarantees obtained by \cite{abeille2020instance} while being critically less computationally hungry. We exhibit the main technical arguments needed to obtain this result and discuss potential extensions as well as limitations of our approach. In \cref{sec:dilution} we detail a variant of our algorithm, more complex but better suited for deployment in real-life situations. We provide similar guarantees for this algorithm and illustrate its good practical behavior with numerical simulations.

\begin{table*}[t]
    \centering
    \begin{tabular}{|c||c|c|c|c|}
    \hline
         \textbf{Algorithm} & \textbf{Regret Bound} & \textbf{Cost Per-Round} &  \textbf{Minimax} &  \textbf{Efficient}\\
         \hline
         \begin{tabular}{c} \texttt{GLM-UCB} \\ \cite{filippi2010parametric}\end{tabular} &  $\bigOtilde{\kappa d\sqrt{T}}$& $\bigO{d^2K+d^2T}$ & \crossmarkc & \crossmarkc \\ 
         \hline
         \begin{tabular}{c} \texttt{GLOC}, \texttt{OL2M} \\ \cite{jun2017scalable} \\ \cite{zhang2016online}\end{tabular} &  $\bigOtilde{\kappa d\sqrt{T}}$& $\bigO{d^2K}$ & \crossmarkc & \checkmarkc \\ 
         \hline
         \begin{tabular}{c} \texttt{OFULog-r} \\ \cite{abeille2020instance}\end{tabular} &  $\bigOtilde{d\sqrt{T\dot\mu(a_\star\transp\theta_\star)}}$& $\bigO{d^2KT}$ & \checkmarkc& \crossmarkc \\  
         \hline
          \begin{tabular}{c} \texttt{(ada-)OFU-ECOLog} \\ ({this paper})\end{tabular} &  $\bigOtilde{d\sqrt{T\dot\mu(a_\star\transp\theta_\star)}}$& $\bigOtilde{d^2K}$ & \checkmarkc & \checkmarkc \\
           \hline
    \end{tabular}
    \caption{Comparison of frequentist regret guarantees and computational cost for different \logb{} algorithms, on instances where $\vert\mcal{A}\vert=K<+\infty$. An algorithm is called minimax-optimal if it matches the regret lower-bound of \cite[Theorem 2]{abeille2020instance} and efficient if it matches the computational cost of \lb{} algorithms (up to logarithmic factors).}
    \label{tab:regret_comparison}
\end{table*}

\section{PRELIMINARIES}
\label{sec:preliminaries}
\subsection{The Learning Problem}
\paragraph{Setting.} The \logb{} framework describes a repeated game between an agent and her environment. At each round, the agent selects an action (a vector in some Euclidean space) and receives a binary, Bernoulli distributed reward. More precisely, given an arm-set\footnote{For the sake of exposition we here only consider the static arm-set case. As later detailed, our results also apply to time-varying arm-sets and contextual settings.} $\mcal{A}\subset\mbb{R}^d$ the agent plays at each round $t$ an arm $a_t\in\mcal{A}$ and receives a stochastic reward $r_{t+1}$ following:
\begin{align}
	r_{t+1} \sim \text{Bernoulli}\left(\mu(a_t\transp\theta_\star)\right)\; ,\label{eq:model}
\end{align}
where $\mu(z) = (1+\exp(-z))^{-1}$ is the logistic function. The parameter $\theta_\star$ is unknown to the agent. We will work under the following assumption, standard for the study of \logb. 
\begin{ass}[Bounded Decision Set] 
    For any $a\in\mcal{A}$ we have $\|a\|\leq 1$. Also, $\|{\theta_\star}\|\leq S$ where $S$ is known. 
\end{ass}
Denote $a_\star \defeq \argmax_{a\in\mcal{A}} a\transp\theta_\star$ the best action in hindsight. The goal of the agent is to minimize her cumulative pseudo-regret up to time $T$: 
\begin{align*}
    \regret(T) \defeq T\mu\!\left(a_\star\transp\theta_\star\right)-\sum_{t=1}^T\mu\!\left(a_t\transp\theta_\star\right)\; .
\end{align*}

\paragraph{Reward Sensitivity.} Central to the analysis of \logb{} is the inverse minimal reward sensitivity $\kappa$. This \emph{problem-dependent} constant is defined as:
\begin{align*}
	\kappa \defeq 1/ \min_{a\in\mcal{A}}\min_{\|\theta\|\leq S}  \dot\mu(a\transp\theta) \; .
\end{align*}
Briefly, $\kappa$ measures the level of non-linearity of the reward signal, usually high in \logb{} problems. As such $\kappa$ is typically very large (numerically) even for reasonable configurations. We refer the reader to \citet[Section 2]{faury2020improved} for a detailed discussion on the importance of this quantity.

\paragraph{Additional Notations.} 
For any $t\geq 1$ we denote the $\mcal{F}_t\defeq \sigma(a_1,r_2, .., a_t)$ the $\sigma$-algebra encoding the information acquired after playing $a_t$ and before observing $r_{t+1}$. Throughout the paper time indexes reflect the measurability w.r.t $\mcal{F}_t$ (for example, $a_t$ is $\mcal{F}_t$-measurable but not $\mcal{F}_{t+1}$-measurable). For any pair $(x,y)\in\mbb{R}\!\times\!\{0,1\}$ we define:
\begin{align*}
	\ell(x,y) = -y\log\mu(x) - (1-y)\log(1-\mu(x))\; ,
\end{align*}
and the log-loss associated with the pair $(a_t, r_{t+1})$ writes $\ell_{t+1}(\theta) \defeq \ell(a_t\transp\theta, r_{t+1})$. Given a compact set $\Theta\subset\mbb{R}^d$ its diameter under the arm-set $\mcal{A}$ is:
\begin{align*}
	\diam_\mcal{A}(\Theta) = \max_{a\in\mcal{A}} \max_{\theta_1,\theta_2} \vert a\transp(\theta_1-\theta_2)\vert\; .
\end{align*}
We will use throughout the paper the symbol $\const$ to denote universal constants (\emph{i.e} independent of $S$, $\kappa$, $d$ or $T$) which exact value can vary at each occurrence. Similarly, we use the generic notation $\gamma_t(\delta)$ to denote various \emph{slowly} growing functions - more precisely such that $\gamma_t(\delta) = \const \text{poly}(S)d\log(t/\delta)$. The exact values for the different occurrences of such functions are carefully reported in the supplementary materials.

\subsection{Previous Work, Limitations and Remaining Challenges}

\bulletpoint{Filippi, then other work that are efficient but still sub-optimal} 
\subbulletpoint{give regret bound, insist that are over-explorative by design so usually do not perform well}
Being a member of the Generalized Linear Bandit family, the first algorithm for \logb{} was given by \cite{filippi2010parametric}. Their algorithm enjoys a regret scaling as $\bigotilde{\kappa d\sqrt{T}}$ - which although tight in $d$ and $T$, suffers from a prohibitive dependency in $\kappa$. Further, it is computationally inefficient as it requires the computation of a batch estimator for $\theta_\star$ at each round  (see \cref{app:cost_others} for a detailed discussion). This efficiency issue was fixed by \cite{zhang2016online,jun2017scalable,ding2021efficient} who proposed fully online estimation procedures. Their approaches however still suffer from detrimental dependencies in $\kappa$.

\paragraph{Statistical Optimality.} 
\bulletpoint{Faury, Abeille and Jun}
\subbulletpoint{improved initially by Faury. Then Abeille statistically optimal w.r.t $\kappa_\star$ and Jun improves for finite arm-sets}
The $\kappa$ dependency was trimmed by \cite{faury2020improved} who introduced an algorithm enjoying $\bigotilde{d\sqrt{T}}$ regret. Their approach defers the effect of non-linearity (embodied by $\kappa$) to a second-order term in the regret, dominated for large values of $T$. Similar results were also achieved by \cite{dong2019ts}, but only for the Bayesian regret. From a statistical viewpoint the story was closed by \cite{abeille2020instance} who proved a $\bigomega{d\sqrt{\dot\mu(a_\star\transp\theta_\star)T}}$ regret lower-bound and matching regret upper-bounds (up to logarithmic factors) for their algorithm \texttt{OFULog-r}. Given the typical scalings of $\kappa\propto\exp(\|\theta_\star\|)$ and $\dot\mu(a_\star\transp\theta_\star)\propto\exp(-\|\theta_\star\|)$ this deflates the regret of previous approaches by exponentially large factors.

\paragraph{Computational Cost.}
\subbulletpoint{OFULog-r: very inefficient because relies on the MLE estimator that requires batch computations. also planning is expensive for it requires solving one expensive convex problem for each arm. summarize in table}
Albeit statistically optimal, the algorithms  proposed by \cite{abeille2020instance} are strikingly computationally demanding and consequently prohibitively slow for practical situations. After inspection, two main computational bottlenecks of their approach emerge from their \emph{learning} and \emph{planning} mechanisms. From the learning side, they construct confidence regions of the form:
\begin{align}
	\left\{\theta, \; \| \theta - \hat\theta_{t}\|^2_{\mbold{H}_{t-1}(\theta)} \leq \gamma_t(\delta) \right\} \; ,\label{eq:prev_conf_set}
\end{align}
where $\hat\theta_{t}=\argmin_{\mbb{R}^d} \sum_{s=1}^{t-1} \ell_{s+1}(\theta)+\lambda\|\theta\|^2$ and 
\begin{align*}
\mbold{H}_t(\theta) = \sum_{s=1}^t \dot\mu(a_s\transp\theta)a_sa_s\transp+\lambda\mbold{I}_d\; .
\end{align*}
Those sufficient statistics are expensive to compute as both require a linear pass (at least) on the data. Note that simply testing whether a point lies in this set is costly - it requires $\bigomega{t}$ operations. The planning mechanism which leverages this confidence region suffers from this downside; to find an optimistic arm it must solve one expensive convex program per arm at every round. This program involves the complete log-loss, which evaluation also takes $\Omega(t)$ operations. Furthermore, bypassing optimism through randomized exploration (\emph{e.g.} Thompson Sampling) is particularly challenging as the results of \cite{agrawal2013thompson,abeille2017linear} do not apply to non-ellipsoidal confidence regions.

\bulletpoint{challenge}
\paragraph{Challenges.}
Our goal is to develop an \emph{efficient} algorithm (\emph{i.e} with reduced per-round computational cost)  which still enjoys statistical optimality (\emph{i.e} matches the lower-bound of \cite{abeille2020instance}). In light of the previous discussion, a crucial step is to derive an alternative to the confidence set from \cref{eq:prev_conf_set} which sufficient statistics can be updated at little cost. This must be done without sacrificing the confidence set's appreciation of the \emph{effective} reward sensitivity, captured by the matrix $\Hmat_t(\theta)$ and central for optimal performance. In other words, we seek to develop an efficient estimation procedure that captures the \emph{local} effects of non-linearity. This rules out merging the refined concentration tools of \cite{faury2020improved} with the online approaches of \cite{zhang2016online,jun2017scalable} which explicitly ressorts to \emph{global} quantities (\emph{e.g.} $\kappa$) in their estimation routines.

\section{MAIN RESULTS}\label{sec:results} 

\begin{algorithm*}[tb]
   \caption{\texttt{OFU-ECOLog}}
   \label{alg:ofuecolog}
	\begin{algorithmic}
	\REQUIRE{failure level $\delta$, warm-up length $\tau$.}
	\STATE Set $\Theta \leftarrow \texttt{WarmUp}(\tau)$ (see \cref{alg:warmup}). \hfill\COMMENT{forced-exploration}
   \STATE Initialize $\theta_{\tau+1}\in\Theta$, $\Wmat_{\tau+1} \leftarrow \mbold{I}_d$ and $\mcal{C}_{\tau+1}(\delta)\leftarrow \Theta$.
   \FOR{$t\geq \tau+1$}
    \begin{spacing}{1.2}
   \STATE Play $a_t \in \argmax_{a\in\mcal{A}} \max_{\theta\in\mcal{C}_t(\delta)} a\transp\theta\; .$ \hfill\COMMENT{planning} 
   \STATE Observe reward $r_{t+1}$, construct loss $\ell_{t+1}(\theta)=\ell(a_t\transp\theta, r_{t+1})$.
   \STATE  Compute $(\theta_{t+1},\, \Wmat_{t+1})\leftarrow \texttt{ECOLog}(1/t, \Theta,\ell_{t+1},\Wmat_t,\theta_t)$ (see \cref{alg:ecolog}).\hfill\COMMENT{learning}
   \end{spacing}
   \vspace{2pt}
   \STATE Compute $\mcal{C}_{t+1}(\delta) \leftarrow\left \{\left\| \theta-\theta_{t+1} \right\|^2_{\Wmat_{t+1}} \leq \gamma_t(\delta)\right\}$.
   \ENDFOR
\end{algorithmic}
\end{algorithm*}

\begin{procedure}[tb]
   \caption{\texttt{WarmUp}}
   \label{alg:warmup}
	\begin{algorithmic}
	\REQUIRE{length $\tau$.}
	\STATE Set $\lambda\leftarrow \gamma_\tau(\delta)$, initialize $\Vmat_0\leftarrow\lambda \mbold{I}_d$.
   \FOR{$t\in[1,\tau]$}
   	\STATE Play $a_t\in\argmax_{\mcal{A}} \| a\|_{\Vmat^{-1}_{t-1}}$, observe $r_{t+1}$. 
	\STATE Update $ \Vmat_t \leftarrow \Vmat_{t-1} + a_ta_t\transp/\kappa$.
   \ENDFOR
   \STATE Compute $\hat\theta_{\tau+1} \leftarrow \argmin_\theta \sum_{s=1}^\tau \ell_{s+1}(\theta) + \lambda\|\theta\|^2$.
   \ENSURE $\Theta = \left\{ \theta, \left\|\theta-\hat\theta_{\tau+1}\right\|^2_{\Vmat_{\tau}} \leq{\gamma_{\tau}(\delta)}\right\}$.
   \end{algorithmic}
\end{procedure}

\begin{procedure}[tb]
   \caption{\texttt{ECOLog} }
   \label{alg:ecolog}
	\begin{algorithmic}
	\REQUIRE{accuracy $\varepsilon$, convex set $\Theta$, $\ell_{t+1}$, $\Wmat_t$, $\theta_t$.}
		\STATE Compute $D\leftarrow \diam_{\mcal{A}}(\Theta)$, set $\eta\leftarrow (2+D)^{-1}$.
		\STATE Solve to precision $\varepsilon$:
		$$\theta_{t+1} = \argmin_{\theta\in\Theta} \Big[\eta \left\| \theta-\theta_t\right\|^2_{\Wmat_t}+ \ell_{t+1}(\theta)\Big]\; .$$
		\vspace{-10pt}
		\STATE Update $\Wmat_{t+1}\leftarrow \Wmat_t + \dot\mu(a_t\transp\theta_{t+1})a_ta_t\transp$. 
		\ENSURE $\theta_{t+1}, \Wmat_{t+1}$
   \end{algorithmic}
\end{procedure}

In this section we present our approach to address the aforementioned challenges. We introduce \texttt{OFU-ECOLog}, an optimistic algorithm whose pseudo-code is provided in \cref{alg:ofuecolog}. It is built on top on three building blocks; \textbf{(1)} a short warm-up phase (forced-exploration) of size $\tau$ described in \cref{alg:warmup}, \textbf{(2)} the \texttt{ECOLog} estimation procedure described in \cref{alg:ecolog} and \textbf{(3)} an optimistic planning mechanism.

We provide in \cref{sec:tg} theoretical guarantees for the regret of \texttt{OFU-ECOLog} (\cref{thm:regret_ofuecolog}) and quantify its per-round computational cost (\cref{prop:per_round_cost}). It demonstrates that \texttt{OFU-ECOLog} enjoys both statistical and computational efficiency. 

Each building block (\textbf{1-3}) and their specific roles are detailed in subsequent sections. \Cref{sec:warmup_disc} is concerned with the initial forced-exploration  phase and its length $\tau$. \Cref{sec:ecolog_disc} details the estimation procedure \texttt{ECOLog} and the confidence region it induces. \Cref{sec:planning_disc} details the efficient deployment of the optimistic exploration strategy and describes the extension of \texttt{OFU-ECOLog} to \texttt{TS-ECOLog}, where optimism is replaced
with randomization.

\subsection{Statistical and Computational Efficiency}\label{sec:tg}
We claim the following result, which proof is deferred to \cref{app:regret_ofu}.

\begin{restatable}[Regret Bound]{thm}{thmregretoful}\label{thm:regret_ofuecolog}
\ifdetail
Let $\delta\in(0,1]$. Setting $\tau= \const \kappa S^6 d^2 \log(T/\delta)^2$ ensures the regret of \textnormal{\texttt{OFU-ECOLog}($\delta,\tau$)} satisfies with probability at least $1-2\delta$:
\begin{align*}
	\regret(T) \leq \const Sd\log(T/\delta)\sqrt{T\dot\mu(a_\star\transp\theta_\star)} + \const S^6\kappa d^2 \log(T/\delta)^2\; .
\end{align*}
\else
Let $\delta\in(0,1]$. Setting $\tau\!=\!\kappa S^6\gamma_T(\delta)^2$ ensures the regret of \textnormal{\texttt{OFU-ECOLog}($\delta,\tau$)} satisfies with probability at least $1-2\delta$:
\begin{align*}
	\regret(T) \leq \const Sd\sqrt{T\dot\mu(a_\star\transp\theta_\star)}\log(T/\delta) \\ + \const S^6\kappa d^2\log(T/\delta)^2\; .
\end{align*}
\fi
\end{restatable}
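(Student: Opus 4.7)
The plan is to follow the standard optimistic regret decomposition, but with two key non-standard ingredients: a correctness guarantee for the online confidence set produced by \texttt{ECOLog}, and a careful treatment of the fact that the weight matrix $\Wmat_t$ uses the online estimates $\theta_{s+1}$ rather than $\theta_\star$. I would split the horizon into the warm-up phase (rounds $1,\ldots,\tau$) and the post-warm-up phase ($t>\tau$), handling them separately.

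\textbf{Warm-up phase.} For $t\leq\tau$ I would upper bound each instantaneous regret by $1$ (since $\mu\in[0,1]$). The role of \texttt{WarmUp} is purely statistical: it must produce a convex set $\Theta$ containing $\theta_\star$ w.h.p.\ whose diameter $\diam_{\mcal{A}}(\Theta)$ is bounded by a constant. This follows from a standard $G$-optimal / greedy D-optimal design argument combined with the concentration analysis of the regularized MLE under the $\Vmat_\tau$ metric; the choice $\tau\asymp\kappa S^6\gamma_T(\delta)^2$ is precisely what is needed to kill the $\kappa$ prefactor of the warm-up MLE confidence bound and bring $\diam_{\mcal{A}}(\Theta)=\bigO{1}$. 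The total regret contribution of this phase is thus at most $\tau=\bigotilde{\kappa S^6 d^2}$, yielding the second-order term in the bound.

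\textbf{Confidence set validity.} The key novel step is proving that $\theta_\star\in\mcal{C}_t(\delta)$ for all $t>\tau$ with probability $1-\delta$. The procedure \texttt{ECOLog} is an online (projected) Newton-style update on the log-loss $\ell_{t+1}$ with step-size $\eta=(2+D)^{-1}$ tuned to the self-concordance of $\ell$ on $\Theta$. I would prove a potential-style inequality of the form
\begin{align*}
\|\theta_\star-\theta_{t+1}\|^2_{\Wmat_{t+1}} \leq \|\theta_\star-\theta_t\|^2_{\Wmat_t} + \text{martingale increment}_t,
\end{align*}
by expanding a second-order Taylor estimate of $\ell_{t+1}(\theta_\star)-\ell_{t+1}(\theta_{t+1})$ around $\theta_{t+1}$, using self-concordance to control the remainder by $\dot\mu(a_t\transp\theta_{t+1})(a_t\transp(\theta_\star-\theta_{t+1}))^2$ (this is exactly what forces the choice of $\eta$ and why $\Wmat_{t+1}$ uses $\dot\mu(a_t\transp\theta_{t+1})$), and exploiting optimality of $\theta_{t+1}$ on $\Theta$. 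Telescoping and applying a self-normalized vector martingale concentration (à la Abbasi-Yadkori) to the noise terms $\epsilon_{t+1}=r_{t+1}-\mu(a_t\transp\theta_\star)$ in the metric $\Wmat_{t+1}$ gives the advertised radius $\gamma_t(\delta)$.

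\textbf{Regret decomposition.} Conditional on $\theta_\star\in\mcal{C}_t(\delta)$ and on the optimistic choice $(a_t,\theta_t^{\text{ucb}})$, the instantaneous pseudo-regret is bounded by $\mu(a_t\transp\theta_t^{\text{ucb}})-\mu(a_t\transp\theta_\star)$. A self-concordant mean-value expansion of $\mu$ then yields
\begin{align*}
r_t \leq \const\,\dot\mu(a_t\transp\theta_\star)\,|a_t\transp(\theta_t^{\text{ucb}}-\theta_\star)| + \text{higher-order},
\end{align*}
and $|a_t\transp(\theta_t^{\text{ucb}}-\theta_\star)|\leq 2\sqrt{\gamma_t(\delta)}\|a_t\|_{\Wmat_t^{-1}}$ because both points lie in $\mcal{C}_t(\delta)$. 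Summing, Cauchy--Schwarz gives
\begin{align*}
\sum_{t>\tau} r_t \leq \const\sqrt{\gamma_T(\delta)}\sqrt{\sum_{t>\tau}\dot\mu(a_t\transp\theta_\star)}\sqrt{\sum_{t>\tau}\dot\mu(a_t\transp\theta_\star)\|a_t\|^2_{\Wmat_t^{-1}}} + \text{h.o.t.}
\end{align*}
The first sum is bounded by $T\dot\mu(a_\star\transp\theta_\star)$ up to a constant by a second-order expansion and the same regret $\sum r_t$ (a self-bounding argument as in \cite{abeille2020instance}). The second sum is bounded by $\bigotilde{d}$ via an elliptical-potential lemma, provided $\dot\mu(a_t\transp\theta_\star)\leq \const\,\dot\mu(a_t\transp\theta_{t+1})$, so that the numerator weights can be absorbed into $\Wmat_t$.

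\textbf{Main obstacle.} I expect the delicate step to be precisely this last equivalence between $\dot\mu(a_t\transp\theta_\star)$ and the online weights $\dot\mu(a_t\transp\theta_{t+1})$ that \texttt{ECOLog} puts into $\Wmat_{t+1}$. This is where the warm-up is crucial: one needs to show that once $\Theta$ has $\bigO{1}$ diameter along $\mcal{A}$, every iterate $\theta_{t+1}$ stays within a constant multiplicative factor of $\theta_\star$ in reward sensitivity (using self-concordance of $\mu$, $|\dot\mu(x)/\dot\mu(y)|\leq e^{|x-y|}$). Combining the three pieces — warm-up regret $\bigotilde{\kappa S^6 d^2}$, elliptical potential $\bigotilde{d}$, and self-bounding on $\sum\dot\mu(a_t\transp\theta_\star)$ — yields the claimed bound after a standard union bound over the concentration events.
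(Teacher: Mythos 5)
Your overall architecture coincides with the paper's proof: regret during warm-up is crudely bounded by $\tau=\bigotilde{\kappa S^6 d^2}$; after warm-up, optimism plus an exact second-order Taylor expansion splits the regret into a first-order term $\sum_t \dot\mu(a_t\transp\theta_\star)a_t\transp(\tilde\theta_t-\theta_\star)$ and a second-order term; self-concordance together with $\diam_\mcal{A}(\Theta)\leq 1$ lets one trade $\dot\mu(a_t\transp\theta_\star)$ for the online weights $\dot\mu(a_t\transp\theta_{t+1})$ so that the elliptical potential applies directly in the $\Wmat_t$ metric; and the self-bounding inequality $\sum_t\dot\mu(a_t\transp\theta_\star)\leq R_T + T\dot\mu(a_\star\transp\theta_\star)$ from \cite{abeille2020instance} closes the argument. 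You also correctly identified the comparability of reward sensitivities as the place where the warm-up is essential.

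However, there is a genuine gap in your ``confidence set validity'' step. Your telescoping does yield (as in \cref{lemma:decomp}) $\|\theta_\star-\theta_{t+1}\|^2_{\Wmat_{t+1}}\lesssim \sum_{s\leq t}\ell_{s+1}(\theta_\star)-\ell_{s+1}(\theta_{s+1})$, but the increments you then propose to control are \emph{not} martingale differences: $\theta_{s+1}$ is $\mcal{F}_{s+1}$-measurable — it is computed \emph{from} the reward $r_{s+1}$ — so the excess loss $\ell_{s+1}(\theta_\star)-\ell_{s+1}(\theta_{s+1})$ correlates with the noise $r_{s+1}-\mu(a_s\transp\theta_\star)$, and a self-normalized concentration bound à la Abbasi-Yadkori cannot be applied as written. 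This anticipativity is precisely the obstruction the paper resolves by introducing the $\mcal{F}_s$-measurable intermediary
\begin{align*}
	\bar\theta_s = \argmin_{\theta\in\Theta}\; \eta\|\theta-\theta_s\|^2_{\Wmat_s} + \ell(a_s\transp\theta,0)+\ell(a_s\transp\theta,1)\; ,
\end{align*}
which regularizes the update in the direction $a_s$ by the two antipodal losses. The excess loss is then split (\cref{eq:loss_decomp}) into a stochastic part $\sum_s \ell_{s+1}(\theta_\star)-\ell_{s+1}(\bar\theta_s)$ — handled by a one-dimensional, \emph{variance-adapted} concentration result (\citet[Theorem 1]{faury2020improved}), not plain linear-bandit concentration, which is needed to retain the $\dot\mu$-weighted radius — and a deterministic stability part $\sum_s \ell_{s+1}(\bar\theta_s)-\ell_{s+1}(\theta_{s+1})$, bounded by $\lesssim d\log t$ via first-order optimality conditions, the self-concordant comparison $\dot\mu(a_s\transp\bar\theta_s)\lesssim\dot\mu(a_s\transp\theta_{s+1})$ (valid thanks to the warm-up), and the elliptical potential (\cref{lemma:last_term_int,lemma:last_term_end}). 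This device, adapted from \cite{jezequel2020efficient}, is the central technical novelty; without it (or an equivalent decoupling), your confidence-set step fails, and with it the rest of your outline matches the paper's proof.
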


As promised the dominating term in \texttt{OFU-ECOLog}'s regret-bound matches the lower-bound of \cite{abeille2020instance} and scales with the reward sensitivity at the best action $a_\star$. Further, the second-order term identically matches its counterpart from previous work in its scaling w.r.t $d$, $T$ and $\kappa$. This establishes the statistical efficiency and we now move up to the computational cost. 
For this, we claim the following bound on the complexity of \texttt{OFU-ECOLog}.

\begin{restatable}[Computational Cost]{prop}{propperroundcost}\label{prop:per_round_cost}
	Let $\vert \mcal{A}\vert = K <\infty$. Each round $t$ of \textnormal{\texttt{OFU-ECOLog}} can be completed within $\bigo{Kd^2+ d^2\log(t)^2}$ operations.
\end{restatable}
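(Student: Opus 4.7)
The plan is to bound the cost of each per-round operation in \Cref{alg:ofuecolog} separately: planning, learning (\texttt{ECOLog}), and confidence-set bookkeeping.

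For the planning step, the confidence region $\mcal{C}_t(\delta)$ is an ellipsoid centered at $\theta_t$ with shape matrix $\Wmat_t$, so the inner maximization admits the closed form
\begin{align*}
\max_{\theta \in \mcal{C}_t(\delta)} a\transp\theta = a\transp\theta_t + \sqrt{\gamma_t(\delta)}\, \|a\|_{\Wmat_t^{-1}}.
\end{align*}
Maintaining $\Wmat_t^{-1}$ online via the Sherman--Morrison formula applied to the rank-one update $\Wmat_{t+1} = \Wmat_t + \dot\mu(a_t\transp\theta_{t+1})\, a_t a_t\transp$ costs $O(d^2)$ per round. Evaluating the displayed quantity for a single arm is also $O(d^2)$, giving $O(K d^2)$ across all $K$ arms.

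For the \texttt{ECOLog} learning step, the subproblem only involves the single new datapoint $(a_t, r_{t+1})$:
\begin{align*}
\theta_{t+1} = \argmin_{\theta \in \Theta}\Big[\eta \|\theta - \theta_t\|^2_{\Wmat_t} + \ell_{t+1}(\theta)\Big].
\end{align*}
Evaluating the objective, its gradient and its Hessian each cost $O(d^2)$, dominated by the quadratic term (the log-loss gradient $(\mu(a_t\transp\theta) - r_{t+1})\, a_t$ contributes only $O(d)$). I would next argue that this objective is $2\eta$-strongly convex (since $\Wmat_t \succeq \mbold{I}_d$ by construction from $\Wmat_{\tau+1} = \mbold{I}_d$ and subsequent PSD rank-one increments) and enjoys Lipschitz Hessian, so a damped / interior-point Newton-type routine attains $\varepsilon$-accuracy on the ellipsoidal feasible set $\Theta$ (returned by \texttt{WarmUp}) in $O(\log(1/\varepsilon))$ outer iterations. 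Setting $\varepsilon = 1/t$ gives $O(\log t)$ outer iterations; accounting for an additional $O(\log t)$ inner one-dimensional search to enforce the ellipsoidal constraint $\Theta$ (or to perform the barrier/projection step) yields a total learning cost of $O(d^2 \log^2 t)$.

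The remaining bookkeeping — assembling $\mcal{C}_{t+1}(\delta)$ from $(\theta_{t+1}, \Wmat_{t+1})$ and performing the rank-one update of $\Wmat_{t+1}$ — is $O(d^2)$ and absorbed. Summing all three contributions yields the claimed $O(K d^2 + d^2 \log^2 t)$. The main obstacle is the convergence-rate analysis of the \texttt{ECOLog} subproblem solver: one must commit to a specific convex optimization method, verify that its iteration count scales as $O(\log t)$ to reach accuracy $1/t$ uniformly across rounds (using that $\Wmat_t$'s condition number grows at most polynomially in $t$ and that the choice $\eta = (2+D)^{-1}$ in \Cref{alg:ecolog} keeps the relevant curvature constants well-controlled), and carefully account for the cost of enforcing the constraint $\Theta$ to recover the $\log^2 t$ factor rather than just $\log t$.
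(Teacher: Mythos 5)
Your overall decomposition — planning at $\bigo{Kd^2}$ via the closed-form ellipsoidal maximization $a\transp\theta_t+\sqrt{\gamma_t(\delta)}\|a\|_{\Wmat_t^{-1}}$ with Sherman--Morrison maintenance of $\Wmat_t^{-1}$, learning at $\bigo{d^2\log^2 t}$ with $\eps_t=1/t$, and absorbed bookkeeping — is exactly the paper's, and the planning half is complete and correct. The gap sits in the learning half, and it is the one you yourself flag as "the main obstacle": you never actually establish the $\bigo{\log t}$ outer-iteration count. Your strong-convexity observation gives Euclidean modulus $2\eta\lambda_{\min}(\Wmat_t)\geq 2\eta$, but the Euclidean smoothness constant is $2\eta\lambda_{\max}(\Wmat_t)+1/4=\bigo{t}$, so the condition number of the raw objective grows linearly in $t$ and first-order methods on it need $\Omega(t)$ iterations. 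Pivoting to a "damped/interior-point Newton-type routine" does not repair this for free: damped Newton's global complexity carries an additive term governed by the initial suboptimality (not merely $\log(1/\eps)$), and a projected or barrier Newton scheme over the constraint $\Theta$ would require its own convergence analysis — one which you have not supplied and which the proposition's proof cannot simply wave at.

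The paper's missing idea is an explicit \emph{preconditioning} that makes the subproblem $\bigo{1}$-conditioned uniformly in $t$, after which everything is elementary. Write $\Wmat_t=\Lmat_t\Lmat_t\transp$ (the Cholesky factor is maintained online at $\bigo{d^2}$ per round thanks to the rank-one update of $\Wmat_t$) and change variables $z=\Lmat_t\transp\theta$; the transformed objective $\bar L_{t+1}$ then satisfies $(1+D/2)^{-1}\mbold{I}_d\preceq\nabla^2\bar L_{t+1}\preceq\big((1+D/2)^{-1}+1/4\big)\mbold{I}_d$, a condition number that is an absolute constant (with $D=1$ after the warm-up). Plain projected gradient descent hence reaches accuracy $\eps_t=1/t$ in $\bigo{\log(\diam(\Theta)\,t)}=\bigo{\log t}$ iterations (\cref{prop:convex_optim_precision}; the paper also verifies $\diam(\Theta)\leq\poly(S)$ so the diameter contributes only logarithmically in $S$), each iteration costing $\bigo{d^2}$ for the gradient plus $\bigo{d^2\log t}$ for projecting onto the transformed (still ellipsoidal) set $\Theta$, which reduces to a one-dimensional concave dual program (\cref{lemma:proj_ellipsoid}) with matrix--vector work per dual step. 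Multiplying the outer iteration count by the per-iteration projection cost is exactly where the $\log^2 t$ comes from: your "additional $\bigo{\log t}$ inner one-dimensional search" has the right shape, but without the preconditioning step the outer $\bigo{\log t}$ factor it multiplies is unsubstantiated.
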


The proof is deferred to \cref{app:cost}. This result mainly relies on the fact that the \textnormal{\texttt{ECOLog}} routine (\cref{alg:ecolog}) solves convex programs that are cheap (\emph{i.e} for which gradients are inexpensive to compute) and that can be efficiently preconditioned. Furthermore \textnormal{\texttt{OFU-ECOLog}} leverages ellipsoidal confidence sets, for which optimism can be efficiently enforced (at least for finite arm-sets). This fulfills our promise of computational efficiency.

\subsection{Warm-Up}\label{sec:warmup_disc}
One of the main challenge to avoid prohibitive exponential dependencies in \logb{} is to tightly control the reward sensitivity across $\mcal{A}\times\Theta$ - that is, without resorting to global problem-dependent constants (\emph{e.g.} $\kappa$). Following \cite{faury2020improved} a first useful step in that direction is to leverage the self-concordance property of the logistic function. It ensures that for any $a\in\mcal{A}$:
\begin{align}
	 \forall\theta_1,\theta_2\in\Theta,\;  \dot\mu(a\transp\theta_1) \leq \dot\mu(a\transp\theta_2)\exp(\diam_\mcal{A}(\Theta))\; .\label{eq:sc_link}
\end{align}
The role of the warm-up phase is to identify a set $\Theta$ containing $\theta_\star$ (with high probability) and which diameter is a constant, independent of problem-dependent quantities (\emph{e.g.} $\|\theta_\star\|$ or $S$). The warm-up mechanism described in \cref{alg:warmup} constructs such a set $\Theta$ which diameter is controlled through the length $\tau$ of this forced-exploration phase. In particular, we show that if $\tau\!\propto\!\kappa$ (\cref{prop:diameter_warmup} in the appendix):
\begin{align}
	\diam_\mcal{A}(\Theta)\leq 1 \; \text{for} \; \Theta\leftarrow \text{\warmup{\tau}}\; .\label{eq:diam_claim}
\end{align}
Combining \cref{eq:sc_link,eq:diam_claim} allows to control the reward sensitivity across the set $\Theta$ at little cost (\emph{i.e} independent of problem-dependent constants). Theoretically speaking, the regret incurred during the warm-up phase forms a second-order term, dominated in the overall regret bound. From a practical perspective however, resorting to forced-exploration is inconvenient - a downside we address in \cref{sec:dilution}. 

\subbulletpoint{refined warm up for contextual case, give reference}
\begin{rem*}[Optimal Design]
There exists alternatives warm-up strategies which ensures similar guarantees - see for instance \cite{jun2020improved} for a solution based on optimal design. It involves more complex mechanisms to reduce the length $\tau$ - we stick here to a simple strategy for the sake of exposition. 
\end{rem*}

\subsection{Efficient Local Learning} \label{sec:ecolog_disc}
We now describe \texttt{ECOLog}, a new estimation routine summarized in \cref{alg:ecolog} which is at the core of the online construction of tight confidence sets. It operates on the convex set $\Theta$ returned by the warm-up procedure. It maintains estimates $\{\theta_{t}\}_t$ of $\theta_\star$ following the update rule:
\begin{align}
	&\theta_{t+1} = \argmin_{\theta\in\Theta} \Big[\eta\left \| \theta - \theta_{t}\right\|_{\Wmat_t}^2 + \ell_{t+1}(\theta)\Big] \; , \label{eq:def_theta_again} \\
	\text{where } &\Wmat_{t} = \sum_{s=1}^{t-1}\dot\mu(a_s\transp\theta_{s+1})a_sa_s\transp + \lambda\mbold{I}_d \; .\notag 
\end{align}
The learning rate $\eta$ is tied to the diameter $\diam_\mcal{A}(\Theta)$ of the decision set $\mcal{A}\times\Theta$. After round $t$, the next estimate $\theta_{t+1}$ minimizes an approximation of the true cumulative log-loss $\sum_{s=1}^{t} \ell_{s+1}(\theta)$ that is decomposed in two terms. The first consists in a quadratic proxy for the past losses constructed through the sequence $\{\theta_s\}_{s\leq t}$. It is designed to incorporate the information acquired so far in the update since:
\begin{align*}
	\argmin_\theta  \sum_{s=1}^{t-1} \ell_{s+1}(\theta) \approx \argmin_\theta \left\| \theta - \theta_t \right\|^2_{\Wmat_{t}} \; .
\end{align*}

On the other hand, the second term is the instantaneous log-loss $\ell_{t+1}(\theta)$ which accounts for the novel information of the pair $(a_t,r_{t+1})$. The motivation behind the overall structure of the update is the following: while the cumulative log-loss is strongly convex and can therefore be well approximated by a quadratic function, the instantaneous loss $\ell_{t+1}$ has flat tails which cannot be captured by a quadratic shape. 

\begin{rem*}[Comparison with ONS]
While at first glance it resembles the Online Newton Step (ONS) mechanisms used by \cite{zhang2016online,jun2017scalable} there are two important differences. First, the update is driven by the matrix $\Wmat_{t}$ which relies on the \emph{estimated} reward sensitivity, and not on its worst-case alternative $\kappa$. Second, we do not rely on (potentially loose) approximations for $\ell_{t+1}$. This rules out having access to a closed-form for $\theta_{t+1}$. 
\end{rem*}

Since the solution of \cref{eq:def_theta_again} does not admit a closed-form expression, one can only solve it up to an $\eps$ accuracy (\emph{e.g.} with projected gradient descent). Formally, we compute estimators $\theta'_{t+1}$ such that $\|\theta'_{t+1} - \theta_{t+1}\| \leq \eps$. The following statement guarantees that this can be done at little cost. 
\begin{prop}[Computational Cost]\label{prop:mt_cost_ecolog}
	Running \textnormal{\texttt{ECOLog}} up to $\eps>0$ accuracy requires $\bigo{d^2\log(1/\eps)^2}$ operations.
\end{prop}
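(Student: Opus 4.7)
My plan is to solve the strongly convex inner program of \texttt{ECOLog} using a preconditioned projected first-order method, with $\Wmat_t^{-1}$ serving both to tame the conditioning and as a cached computational primitive maintained across rounds. The key structural observations are the following. First, $F(\theta) := \eta\|\theta-\theta_t\|_{\Wmat_t}^2 + \ell_{t+1}(\theta)$ has Hessian $\nabla^2 F(\theta) = 2\eta\Wmat_t + \dot\mu(a_t\transp\theta)\,a_ta_t\transp$, a rank-one perturbation of $2\eta\Wmat_t$. Second, in the coordinates $\phi = \Wmat_t^{1/2}(\theta-\theta_t)$ this Hessian becomes $2\eta\mbold{I}_d + \dot\mu(\cdot)(\Wmat_t^{-1/2}a_t)(\Wmat_t^{-1/2}a_t)\transp$, whose condition number is at most $1 + \dot\mu\,\|a_t\|^2_{\Wmat_t^{-1}}/(2\eta) \leq 1 + 1/(8\eta) = O(D)$: this uses $\Wmat_t \succeq \mbold{I}_d$ (by the initialization in \cref{alg:ofuecolog}) and $\|a_t\|\leq 1$ to bound $\|a_t\|^2_{\Wmat_t^{-1}} \leq 1$, together with the warm-up guarantee $D = \diam_{\mcal{A}}(\Theta) = O(1)$ (precisely why the step size $\eta = (2+D)^{-1}$ is defined that way). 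Third, $\Theta$ is the ellipsoid $\{\|\theta-\hat\theta_{\tau+1}\|^2_{\Vmat_\tau}\leq \gamma_\tau(\delta)\}$ returned by \cref{alg:warmup}.

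I would then amortize linear algebra across rounds. Since $\Wmat_{t+1}-\Wmat_t$ is rank-one, $\Wmat_t^{-1}$ is maintained via Sherman-Morrison at $O(d^2)$ per round and passed to \texttt{ECOLog} as a cached byproduct of the outer loop. A one-off $O(d^3)$ diagonalization of the fixed shape matrix $\Vmat_\tau$ right after the warm-up turns projection onto $\Theta$ into a 1-D monotone root-finding on the Lagrange multiplier, solvable by bisection in $O(\log(1/\varepsilon))$ steps plus $O(d)$ reconstruction. Given the cached $\Wmat_t^{-1}$, I precompute $\Wmat_t^{-1}a_t$ once per call to \texttt{ECOLog} in $O(d^2)$; thereafter each preconditioned projected-gradient iterate $\theta \leftarrow \Pi_\Theta\!\bigl[\theta - L^{-1}\Wmat_t^{-1}\nabla F(\theta)\bigr]$ costs $O(d)$ for the step itself and $O(\log(1/\varepsilon))$ for the projection. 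The bounded preconditioned condition number guarantees linear convergence, so $O(\log(1/\varepsilon))$ outer iterations suffice to reach $\varepsilon$-accuracy, and the total cost is $O(d^2) + O(\log(1/\varepsilon))\cdot O(d + \log(1/\varepsilon)) = O(d^2\log^2(1/\varepsilon))$ as claimed.

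The delicate step is certifying that the preconditioned condition number is genuinely $O(1)$ uniformly in $t$: it rests jointly on $\Theta$ having constant diameter (so that $\eta = \Theta(1)$) and on $\Wmat_t \succeq \mbold{I}_d$ throughout the run (so that $\|a_t\|_{\Wmat_t^{-1}}\leq 1$), neither of which is self-evident without invoking the warm-up guarantee and the inductive structure of the $\Wmat_t$ recursion. A clean alternative, should handling the projection within a gradient-method analysis prove cumbersome, is to apply damped Newton directly to $F$: each Newton step remains $O(d^2)$ via a Sherman-Morrison inversion of $2\eta\Wmat_t + \dot\mu\,a_ta_t\transp$ from the cached $\Wmat_t^{-1}$, and self-concordance of the logistic loss combined with strong convexity again yields the same $O(\log(1/\varepsilon))$ iteration count and the same overall bound.
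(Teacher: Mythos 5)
Your overall strategy coincides with the paper's own proof (\cref{prop:computational_cost} in the appendix): precondition the inner program by $\Wmat_t$, observe that the preconditioned Hessian lies between $2\eta\mbold{I}_d$ and $(2\eta+1/4)\mbold{I}_d$ thanks to $\Wmat_t\succeq\mbold{I}_d$, $\dot\mu\leq 1/4$ and $\|a_t\|\leq 1$, conclude an $O(1)$ condition number from the warm-up guarantee $D=O(1)$, run a projected first-order method for $O(\log(\diam(\Theta)/\eps))$ iterations, and maintain $\Wmat_t$, its inverse and its factorization by rank-one updates at $O(d^2)$ per round. There is, however, a genuine gap in how you combine the preconditioner with the projection. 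Your iterate is $\theta\leftarrow\Pi_\Theta\bigl[\theta - L^{-1}\Wmat_t^{-1}\nabla F(\theta)\bigr]$, and your implementation of $\Pi_\Theta$ (one-off diagonalization of $\Vmat_\tau$ plus 1-D Lagrange root-finding) computes the \emph{Euclidean} projection onto $\Theta$. But a $\Wmat_t^{-1}$-preconditioned gradient step followed by a Euclidean projection is not projected gradient descent in any single metric, and the linear-rate guarantee you invoke does not apply to it: for the scaled method the projection must be taken in the norm $\|\cdot\|_{\Wmat_t}$, equivalently as a Euclidean projection in the coordinates $\phi=\Wmat_t^{1/2}(\theta-\theta_t)$ in which you carried out the conditioning analysis. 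The $\Wmat_t$-metric projection onto the $\Vmat_\tau$-ellipsoid involves two distinct quadratic forms, which your fixed diagonalization of $\Vmat_\tau$ cannot simultaneously diagonalize ($\Wmat_t$ changes every round, so a generalized eigendecomposition would cost $O(d^3)$ per round). The paper resolves exactly this point via the Cholesky change of variables $z=\Lmat_t\transp\theta$: in $z$-coordinates the method is plain PGD with Euclidean projection onto the transformed constraint set, which is still an ellipsoid, and that projection is computed through the one-dimensional dual program of \cref{lemma:proj_ellipsoid} at cost $O(d^2\log(1/\eps))$ per call --- coarser than your $O(d)$ claim, but amply within the $O(d^2\log(1/\eps)^2)$ budget.

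Your fallback --- damped Newton applied ``directly to $F$'' --- has the same problem in sharper form: it ignores the constraint $\theta\in\Theta$ altogether. The minimizer in \cref{eq:def_theta_again} is constrained and the constraint can be active (this is precisely why the paper needs \cref{lemma:proj_ellipsoid}), so unconstrained Newton can converge to the wrong point. The rest of your argument is sound, and one ingredient is a genuine improvement over the paper: writing $\Wmat_t^{-1}\nabla F(\theta)=2\eta(\theta-\theta_t')+(\mu(a_t\transp\theta)-r_{t+1})\Wmat_t^{-1}a_t$ with $\Wmat_t^{-1}a_t$ cached makes each step $O(d)$ rather than the paper's $O(d^2)$; once the projection metric is fixed this would even tighten the bound. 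A last small omission: the iteration count is $O(\log(\diam(\Theta)/\eps))$, so you also need $\diam(\Theta)\leq\poly(S)$ (established in \cref{app:proof_per_round_cost_no_more_naming_please}) to absorb the diameter into the suppressed constants.
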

For the sake of exposition, we ignore optimization errors in the following since $\eps$ can be arbitrarily small. The induced errors and their propagation are addressed in formal proofs in the supplementary. 

Finally, the use of \texttt{ECOLog} at each round within \cref{alg:ofuecolog} yields a sequence $\{\theta_t, \Wmat_t\}_t$ associated with the sets:
\begin{align*}
	\mcal{C}_t(\delta) \defeq \left\{ \theta, \left\|\theta - \theta_t\right\|^2_{\Wmat_t}\leq \gamma_t(\delta) \right\}\;,
\end{align*}
which are confidence regions for $\theta_\star$. 

\begin{restatable}[Confidence Set]{prop}{thmourconfset}\label{prop:confset}
\ifdetail
Let $\delta\in(0,1]$ and $\{(\theta_t, \Wmat_t)\}_t$ the parameters maintained by \cref{alg:ofuecolog} with $\tau$ set according to \cref{prop:diameter_warmup}. Then:
\begin{align*}
	\mbb{P}\left(\forall t\geq 1, \; \big\|\theta_\star-\theta'_{t+1}\big\|^2_{\Wmat_{t+1}}\leq \sigma_t(\delta)\text{ and }\theta_\star\in \Theta\right)\geq 1-2\delta\; .
\end{align*}
\else
Under the conditions of \cref{thm:regret_ofuecolog}:
\begin{align*}
	\mbb{P}\!\left( \forall t \geq \tau, \; \theta_\star \in \mcal{C}_t(\delta)  \right) \geq 1\!-\delta\; .
\end{align*}
\fi
\end{restatable}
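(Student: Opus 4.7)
The plan is to first peel off the warm-up success event $\{\theta_\star\in\Theta\}$ --- which holds with probability at least $1-\delta$ by the standard batch-MLE concentration used in the construction of \cref{alg:warmup} --- and then to establish a clean one-step recursion on the squared error $\|\theta_\star-\theta_{t+1}\|^2_{\Wmat_{t+1}}$. Since the update defining $\theta_{t+1}$ is a convex program over the closed convex set $\Theta$ and $\theta_\star\in\Theta$, first-order optimality at the feasible comparator $\theta_\star$ yields
\begin{align*}
\langle 2\eta\,\Wmat_t(\theta_{t+1}-\theta_t)+g_{t+1},\,\theta_\star-\theta_{t+1}\rangle\geq 0,
\end{align*}
with $g_{t+1}:=(\mu(a_t\transp\theta_{t+1})-r_{t+1})a_t$. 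Expanding the quadratic part through the three-point identity gives the mirror-descent-type inequality
\begin{align*}
\eta\|\Delta_{t+1}\|^2_{\Wmat_t}\leq \eta\|\Delta_t\|^2_{\Wmat_t}-\eta\|\theta_{t+1}-\theta_t\|^2_{\Wmat_t}-\langle g_{t+1},\Delta_{t+1}\rangle,
\end{align*}
where $\Delta_s:=\theta_s-\theta_\star$.

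The crux of the argument is to promote this into a bound on $\|\Delta_{t+1}\|^2_{\Wmat_{t+1}}=\|\Delta_{t+1}\|^2_{\Wmat_t}+\dot\mu(a_t\transp\theta_{t+1})(a_t\transp\Delta_{t+1})^2$. For this I would split $g_{t+1}$ via $\mu(a_t\transp\theta_{t+1})-r_{t+1}=\tilde m_t-\xi_{t+1}$, where $\tilde m_t:=\mu(a_t\transp\theta_{t+1})-\mu(a_t\transp\theta_\star)$ is deterministic given $\mcal{F}_t$ and $\xi_{t+1}:=r_{t+1}-\mu(a_t\transp\theta_\star)$ is a bounded mean-zero martingale difference. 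Combining an integral mean-value representation with self-concordance of the logistic link and the diameter bound $|a_t\transp\Delta_{t+1}|\leq\diam_{\mcal{A}}(\Theta)\leq 1$ from \cref{eq:diam_claim}, the deterministic piece admits a lower bound of the form $\tilde m_t\cdot a_t\transp\Delta_{t+1}\geq \alpha(D)\,\dot\mu(a_t\transp\theta_{t+1})(a_t\transp\Delta_{t+1})^2$ with $\alpha(D):=(1-e^{-D})/D$. The step-size $\eta=(2+D)^{-1}$ in \cref{alg:ecolog} is tuned so that $\eta\leq\alpha(D)$, hence this curvature bonus delivers precisely the increment $\eta\,\dot\mu(a_t\transp\theta_{t+1})(a_t\transp\Delta_{t+1})^2$ needed to promote $\Wmat_t$ into $\Wmat_{t+1}$ on the left-hand side. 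After dropping the favorable proximal term $-\eta\|\theta_{t+1}-\theta_t\|^2_{\Wmat_t}$ and dividing by $\eta$, one is left with the clean recursion
\begin{align*}
\|\Delta_{t+1}\|^2_{\Wmat_{t+1}}\leq \|\Delta_t\|^2_{\Wmat_t}+\eta^{-1}\xi_{t+1}\,a_t\transp\Delta_{t+1}.
\end{align*}

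Telescoping from the warm-up horizon leaves $\|\Delta_{t+1}\|^2_{\Wmat_{t+1}}$ controlled by the initial energy $\|\Delta_{\tau+1}\|^2_{\Wmat_{\tau+1}}$ (bounded via the warm-up analysis) plus the martingale $\sum_s \xi_{s+1}\,a_s\transp\Delta_{s+1}$. Since $\xi_{s+1}$ is Bernoulli with conditional variance $\dot\mu(a_s\transp\theta_\star)(a_s\transp\Delta_{s+1})^2$, and self-concordance on $\Theta$ relates this quantity to the design-matrix increment $\|\Delta_{s+1}\|^2_{\Wmat_{s+1}}-\|\Delta_{s+1}\|^2_{\Wmat_s}$ up to a universal factor, a self-normalized Bernstein inequality in the spirit of \cite{faury2020improved} bounds this martingale by $\const\sqrt{\gamma_t(\delta)\cdot\max_{s\leq t}\|\Delta_{s+1}\|^2_{\Wmat_{s+1}}}$ plus lower-order terms. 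This yields an implicit inequality $X_t\leq A+B\sqrt{X_t}$ that resolves to $X_t\leq \gamma_t(\delta)$, and a union bound over $t$ together with the warm-up event closes the argument. The main obstacle is the algebraic balancing in the second paragraph: a single self-concordance curvature bonus must cover the entire increment needed to lift $\Wmat_t$ to $\Wmat_{t+1}$, which is what forces the specific tuning $\eta=(2+D)^{-1}$ and the warm-up requirement $D\leq 1$. A secondary difficulty is ensuring that the martingale variance self-normalizes against $\Wmat_t$ without reintroducing any $\kappa$-dependence --- precisely the point of departure from the ONS-style analyses of \cite{zhang2016online,jun2017scalable}.
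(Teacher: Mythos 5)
Your first half is essentially the paper's own argument: first-order optimality at the feasible comparator $\theta_\star$ is exactly \cref{prop:kktapplied}, and the three-point expansion plus the self-concordant curvature bonus that promotes $\Wmat_t$ to $\Wmat_{t+1}$ (with the tuning $\eta=(2+D)^{-1}$) is the content of \cref{lemma:local_lb,lemma:decomp}; your constant $(1-e^{-D})/D$ is a valid lower bound on $\alpha(x,y)$ and sits above $(2+D)^{-1}$, so that balancing is fine modulo constants. The gap is in the stochastic step. You assert that $\tilde m_t=\mu(a_t\transp\theta_{t+1})-\mu(a_t\transp\theta_\star)$ is ``deterministic given $\mcal{F}_t$'' --- it is not: $\theta_{t+1}$ minimizes an objective containing $\ell_{t+1}(\theta)=\ell(a_t\transp\theta,r_{t+1})$ and is therefore $\mcal{F}_{t+1}$-measurable. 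Consequently the telescoped term $\sum_s \xi_{s+1}\,a_s\transp\Delta_{s+1}$ is \emph{not} a martingale: the multiplier $a_s\transp\Delta_{s+1}=a_s\transp(\theta_{s+1}-\theta_\star)$ anticipates $r_{s+1}$, so the self-normalized inequality of \citet[Theorem 1]{faury2020improved} (which requires the multipliers to be $\mcal{F}_s$-measurable) does not apply. Worse, the bias has the adverse sign and does not vanish: the update pulls $a_s\transp\theta_{s+1}$ toward the observed label, so $\xi_{s+1}$ and $a_s\transp\Delta_{s+1}$ are positively correlated and $\mbb{E}[\xi_{s+1}\,a_s\transp\Delta_{s+1}\mid\mcal{F}_s]>0$ in general --- a systematic drift that no martingale inequality can absorb.

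The paper's proof is built precisely to circumvent this. Rather than a direct recursion on $\|\Delta_t\|^2_{\Wmat_t}$, \cref{lemma:decomp} bounds the estimation error by the excess cumulative loss $\sum_s \ell_{s+1}(\theta_\star)-\ell_{s+1}(\theta_{s+1})$, and then introduces the $\mcal{F}_s$-measurable surrogate $\bar\theta_s$ of \cref{eq:def_theta_bar}: the same proximal update but regularized by \emph{both} antipodal losses $\ell(a_s\transp\theta,0)+\ell(a_s\transp\theta,1)$, which makes it independent of $r_{s+1}$. The decomposition \cref{eq:loss_decomp} then splits the excess loss into a genuinely predictable part $\sum_s\ell_{s+1}(\theta_\star)-\ell_{s+1}(\bar\theta_s)$, handled by a one-dimensional application of the concentration result of \cite{faury2020improved} with the $\mcal{F}_s$-measurable multipliers $a_s\transp(\theta_\star-\bar\theta_s)/D$ (\cref{lemma:concentration_loss}), and a pathwise deterministic comparison $\sum_s\ell_{s+1}(\bar\theta_s)-\ell_{s+1}(\theta_{s+1})\lesssim\sum_s\dot\mu(a_s\transp\bar\theta_s)\|a_s\|^2_{\Wmat_{s+1}^{-1}}$, controlled via self-concordance and the elliptical potential lemma (\cref{lemma:last_term_int,lemma:last_term_end}) --- this is also where the warm-up bound $\diam_\mcal{A}(\Theta)\leq 1$ is actually consumed. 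To repair your argument you need an equivalent device: either the surrogate $\bar\theta_s$, or covering the two candidate values $\theta_s^0,\theta_s^1$ of $\theta_{s+1}$ as in condition \eqref{eq:cond_measurable} of the adaptive algorithm; at that point you will have reconstructed the paper's proof. (Minor point: with the warm-up event $\{\theta_\star\in\Theta\}$ included, the statement holds with probability $1-2\delta$, as in the detailed version of the proposition.)
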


\Cref{prop:confset} emulates the original concentration results of \cite{faury2020improved} (see \cref{eq:prev_conf_set}). The matrix $\Wmat_t$ stands as an on-policy proxy for the ``correct'' concentration metric $\Hmat_t(\theta_\star)$. This ultimately preserves statistical tightness (\cref{thm:regret_ofuecolog}) but with sufficient statistics that are now updated online.

\paragraph{Proof Sketch.} We provide here the key technical arguments behind the derivation of \cref{prop:confset}. It is inspired and shares close connections with the work of ~\cite{jezequel2020efficient} - which was conducted for an Online Convex Optimization setting.\\
A crucial ingredient for our analysis is a \emph{local} quadratic lower-bound\footnote{Similar bounds appear in \cite{jezequel2020efficient,abeille2020instance} but are used for different purposes.} for the logistic loss, stating that for any $\theta\in\Theta$:
\begin{align*}
	\ell_{t+1}(\theta_\star) \gtrsim \ell_{t+1}(\theta) + \nabla\ell_{t+1}(\theta)\transp(\theta_\star-\theta)  \\ + \dot\mu(a_t\transp\theta)(a_t\transp(\theta_\star-\theta))^2\; .
\end{align*}
Notice how the above does not depend on any \emph{global} quantities (\emph{e.g.} $S$ or $\|\theta_\star\|$). It allows to tie the parameters uncertainty $\| \theta_\star -  \theta_{t+1}\|^2_{\Wmat_{t+1}}$ to the excess cumulative loss in $\{\theta_{s+1}\}_s$. Indeed algebraic manipulations lead to:
\begin{align*}
	\left\| \theta_\star -  \theta_{t+1} \right\|^2_{\Wmat_{t+1}} \lesssim \sum_{s=1}^{t} \ell_{s+1}(\theta_\star) - \ell_{s+1}(\theta_{s+1})\; . 
\end{align*}
We are therefore left to bound the r.h.s. To do so we introduce an intermediary parameter: 
\begin{align*}
	\bar\theta_s = \argmin_\Theta \eta \|\theta-\theta_s\|^2_{\Wmat_s} + \ell(a_s\transp\theta, 0) + \ell(a_s\transp\theta, 1)\; ,
\end{align*}
and decompose the sum to control as follows:
\begin{align}\label{eq:loss_decomp_main}
	 \sum_{s=1}^t \ell_{s+1}(\theta_\star) \!-\! \ell_{s+1}(\bar\theta_{s})+ \sum_{s=1}^t\ell_{s+1}(\bar\theta_{s})  \!-\! \ell_{s+1}(\theta_{s+1}) \; .
\end{align}
The parameter $\bar\theta_s$ is a $\mcal{F}_s$-measurable version of $\theta_{s+1}$, regularized in the last direction $a_s$ by two logistic losses fitting antipodal rewards ($r_{s+1}=0$ and $1$). The first term in \cref{eq:loss_decomp_main} is tied to the stochastic nature of the observations and is bounded using the concentration inequality of \citet[Theorem 1]{faury2020improved}. With probability at least $1-\delta$,
\begin{align*}
	\sum_{s=1}^t \ell_{s+1}(\theta_\star) - \ell_{s+1}(\bar\theta_{s}) \lesssim  \log(t/\delta)\;.
\end{align*}
Bounding the second term requires quantifying the deviation between $\theta_{s+1}$ and its $\mcal{F}_{s}$-measurable counterpart $\bar\theta_s$. Leveraging convexity leads to the sequence of inequalities:

\begin{align*}
	\sum_{s=1}^t\ell_{s+1}(\bar\theta_{s})  \!-\! \ell_{s+1}(\theta_{s+1}) &\leq \sum_{s=1}^t \dot\mu(a_s\transp\bar\theta_s) \| a_s\|^2_{\Wmat^{-1}_{t+1}}\\
	&\lesssim \sum_{s=1}^t \dot\mu(a_s\transp\theta_{s+1}) \| a_s\|^2_{\Wmat^{-1}_{t+1}}\\
	&\lesssim d\log(t)\; .
\end{align*} 

The second inequality is obtained by relating the reward sensitivities $\dot\mu(a_s\transp\bar\theta_s)$ and $\dot\mu(a_s\transp\theta_{s+1})$. Both are comparable thanks to the warm-up procedure. Indeed from \cref{eq:sc_link,eq:diam_claim},
\begin{align}\label{eq:ps_reward_sensitivity}
	\dot\mu(a_s\transp\bar\theta_s) \leq \exp(\diam_\mcal{A}(\Theta))  \dot\mu(a_s\transp\theta_{s+1}) \lesssim \dot\mu(a_s\transp\theta_{s+1})\; .
\end{align}
The last inequality directly follows from the Elliptical Potential Lemma (see \cref{lemma:ellipticalpotential}).

\begin{rem*}[Warm-Up and Online Newton Step]
It is natural to wonder whether the ONS-like approaches of \cite{zhang2016online,jun2017scalable} could also benefit from the refined parameter set returned by the warm-up procedure. As detailed in \cref{app:ons_warmup} this is not the case. Their respective methods hard-code \emph{global} quantities within their updates steps (such as the minimum curvature of the log-loss, or the exp-concavity constant). Those are related to $\kappa$ and cannot be removed even when operating close to $\theta_\star$. 
\end{rem*}

\subsection{Exploration Strategy}\label{sec:planning_disc}
\paragraph{Optimistic Exploration.} \texttt{OFU-ECOLog} builds on $\mcal{C}_t(\delta)$ (the confidence set of \cref{prop:confset}) to find an optimistic arm. Formally, it prescribes playing:
\begin{align*}
	a_t \in \argmax_{a\in\mcal{A}} \max_{\theta\in\mcal{C}_t(\delta)} a\transp\theta\; .
\end{align*}
A solution for this program might be expensive to compute in general. However, the ellipsoidal nature of $\mcal{C}_t(\delta)$ may simplify this task as it allows for an equivalent definition of $a_t$ as:
\begin{align*}
	a_{t} \in \argmax_{a\in\mcal{A}} a\transp\theta_{t} + \sqrt{\gamma_t(\delta)} \|a\|_{\Wmat_{t}^{-1}} \; .
\end{align*}
For finite arm-sets ($\vert \mcal{A}\vert = K \!<\!+\infty$) this program can be solved by enumerating over the arms - bringing the total cost of the optimistic planning to $\bigo{d^2 K}$.

\paragraph{Thompson-Sampling extension.} 
The shape of $\mcal{C}_{t+1}(\delta)$ also enables the use of \emph{randomized} exploration mechanisms in a principled fashion. For instance, Thompson Sampling (TS) replaces the burden to find an optimistic parameter by sampling in slightly inflated confidence sets (see \cite{abeille2017linear}). It is often preferred in practical applications for its simplicity and good empirical performances. It also allows to deal with infinite arm-sets, whenever an oracle for computing $a_\star(\theta) = \argmax_{a\in\mcal{A}} a\transp\theta$ is cheaply available for any $\theta$ (\emph{e.g.} when the action space is the unit-ball $\mcal{B}_d$). We introduce \texttt{TS-ECOLog}  in \cref{app:regret_ts}, a TS version of \texttt{OFU-ECOLog}. It enjoys similar regret bounds, but inflated by a $\sqrt{d}$ factor (as in the \lb{} case). The algorithm displays little conceptual novelty compared to its linear counterpart, but its analysis requires additional technical care to prove its statistical efficiency.  Overall, this answers positively the question opened by \cite{faury2020improved} about the extension of their approach to randomized strategies.

\begin{algorithm*}[tb]
   \caption{\texttt{ada-OFU-ECOLog}}
   \label{alg:dilutedofuecolog}
	\begin{algorithmic}
	\REQUIRE{failure level $\delta$.}
   \STATE Initialize $\Theta_1 = \{ \| \theta \| \leq S \}$, $\mcal{C}_1(\delta) \leftarrow \Theta_1$, $\theta_{1}\in\Theta$, $\Wmat_{1} \leftarrow \mbold{I}_d$ and $\mcal{H}_1\leftarrow \emptyset$. 
   \FOR{$t\geq 1$}
    \begin{spacing}{1.2}
    \STATE Play $a_t \in \argmax_{a\in\mcal{A}} \max_{\theta\in\mcal{C}_t(\delta)} a\transp\theta$, observe reward $r_{t+1}$.
    \STATE Compute the estimators $\theta_t^0$, $\theta_t^1$ (see \cref{eq:defbaru}) and $\bar\theta_t$. 
    \IF{$\dot{\mu}(a_t\transp\bar\theta_t)\leq 2\dot{\mu}(a_t\transp\theta_t^0)$ and $\dot{\mu}(a_t\transp\bar\theta_t)\leq 2\dot{\mu}(a_t\transp\theta_t^1)$}
    	\STATE Form the loss $\ell_{t+1}$ and compute $(\theta_{t+1},\, \Wmat_{t+1})\leftarrow\texttt{ECOLog}(1/t, \Theta_t,\ell_{t+1},\Wmat_t,\theta_t)$.
	\STATE Compute $\mcal{C}_{t+1}(\delta) \leftarrow \left\{\| \theta-\theta_{t+1} \|^2_{\Wmat_{t+1}} \leq \gamma_t(\delta)\right\}$, set $\mcal{H}_{t+1} \leftarrow \mcal{H}_t$.
	\ELSE
		\STATE Set $\mcal{H}_{t+1} \leftarrow \mcal{H}_t\cup \{ a_t, r_{t+1} \}$ and compute $\hat\theta^{\mcal{H}}_{t+1} = \argmin \sum_{(a,r)\in\mcal{H}_{t+1}} \ell(a\transp\theta, r) + \gamma_t(\delta)\|\theta\|^2$.
		\STATE  Update $\mbold{V}^{\mcal{H}}_{t}\leftarrow \sum_{a\in\mcal{H}_{t+1}} aa\transp/\kappa + \gamma_t(\delta)\mbold{I}_d$, $\theta_{t+1}\leftarrow \theta_t$ and $\Wmat_{t+1}\leftarrow \Wmat_t$.
		\STATE Compute $\Theta_{t+1} = \left\{ \| \theta - \hat\theta^{\mcal{H}}_{t+1} \|^2_{\Vmat^{\mcal{H}}_{t}} \leq \gamma_t(\delta)\right\}\; .$
    \ENDIF
   \end{spacing}
   \vspace{0pt}
   \ENDFOR
\end{algorithmic}
\end{algorithm*}

\section{REMOVING THE WARM-UP} \label{sec:dilution}

\paragraph{Practical Limitations.} Despite being rather common in the Generalized Linear Bandit literature (\emph{e.g.} \citep{li2017provably,kveton2020randomized,jun2020improved,ding2021efficient}) the use of warm-up phases is concerning from a practical stand-point. Indeed \textbf{(1}) it \emph{hard-codes} a forced-exploration regime lasting at least $\kappa$ rounds at the beginning of any experiment. Given the typical scaling of $\kappa$ in practical situations this implies that the algorithm selects actions at random for the first few thousand steps. While it only impacts low-order terms in the regret bound, it is problematic to suffer this price by design, even when not necessary (see \citet[Section 4]{abeille2020instance}). Furthermore,  \textbf{(2)} generalizing warm-up phases to handle \emph{contextual} arm-sets requires adopting strong distributional assumptions on the contexts - leaving out the case where an adversary picks context.

\paragraph{Data-Driven Alternative.} We relied so far on warm-up phases to isolate the different challenges (locality, efficiency, statistical tightness). We now switch gears and propose a refined approach which addresses the issues raised by forced-exploration - however at the cost of a more intricate algorithm. At the heart of this refinement lies a \emph{data-dependent} version of our confidence set. This allows  \textbf{(1)} the design an \emph{adaptive} mechanism which preserves statistical efficiency while ultimately removing the need for forced-exploration. Furthermore, it \textbf{(2)} extends the regret bound derived in \cref{sec:results} to the contextual case, without requiring any distributional assumptions on the exogenous contexts. To our knowledge, this is a first for approaches resorting to warm-ups.

\subsection{An Adaptive Approach}

\paragraph{Intuition.} As highlighted by the proof sketch from \cref{sec:ecolog_disc}, the warm-up phase allows to tightly control the radius of $\mcal{C}_t(\delta)$. More precisely, it constructs a {small} admissible set $\Theta$ that constrains the reward sensitivities $\dot\mu(a_s\transp\bar\theta_s)$ and $\dot\mu(a_s\transp\theta_{s+1})$ to be comparable for all $s$ (see \cref{eq:ps_reward_sensitivity}). 

A naive way to remove the need for enforcing this property \emph{a priori} would be to reject \emph{on-the-fly} points that don't conform with the following condition: 
\begin{align}\tag{C$_{0}$}
	\dot\mu(a_s\transp\bar\theta_s) \leq 2\dot\mu(a_s\transp\theta_{s+1}) \; , \label{eq:cond}
\end{align}

This high-level idea is behind the design of our adaptive mechanism, detailed below. 

\paragraph{Adaptive Mechanism.} Given $(a_s,r_{s+1})$ if the associated $\theta_{s+1}$ breaks \eqref{eq:cond} we do not use it to update our current estimate. Instead we leverage this information to ensure that \eqref{eq:cond} is more likely to hold in the future. We maintain $\mcal{H}_{s+1} = \{a_l,r_{l+1}\}_{l\leq s}$ formed by pairs rejected up to round $s$ and compute:
\begin{align*}
	\hat\theta^{\mcal{H}}_{s+1} \in &\argmin\sum_{(a,r)\in\mcal{H}_{s+1}} \ell(a\transp\theta,r)+\gamma_s(\delta)\|\theta\|^2\; ,
\end{align*}
and $\Vmat^\mcal{H}_s = \sum_{a\in\mcal{H}_{s}} aa\transp/\kappa+\gamma_s(\delta)\mbold{I}_d$. We use this to build the parameter set:
\begin{align*}
	\Theta_{s+1} =\left\{ \theta,\; \big\| \theta -\hat\theta^{\mcal{H}}_{s+1} \big\|_{\Vmat^\mcal{H}_s}^2 \leq \gamma_s(\delta)\right\} \; .
\end{align*} 
We will use this convex set in the \texttt{ECOLog} procedure for subsequent rounds. As points are being added to $\{\mcal{H}_s\}_s$ the sequence of $\{\Theta_s\}_s$ deflates. The downstream estimates $\{\theta_{s+1},\bar\theta_s\}_{s}$ are therefore closer and \eqref{eq:cond} is more likely to hold. The key to assert the validity of this mechanism is to ensure that this sequential refinement does not occur too often. 

\begin{figure*}[!th]
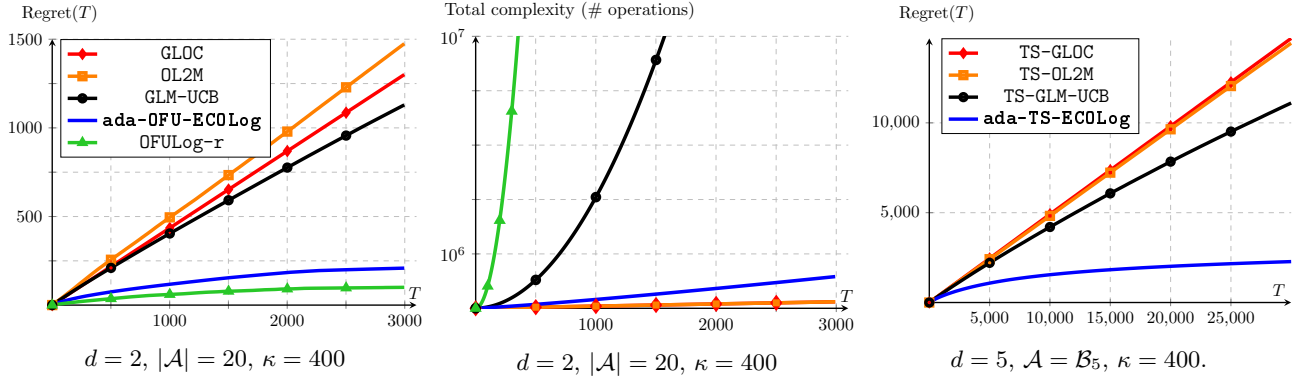

\begin{subfigure}{0.33\textwidth}
	\includegraphics[width=\textwidth]{img/regret_d2_n5.pdf}
	\caption*{$d=2$, $\vert\mcal{A}\vert=20$, $\kappa=400$}
\end{subfigure}
\begin{subfigure}{0.33\textwidth}
	\includegraphics[width=\textwidth]{img/cost_d2_n5.pdf}
	\caption*{$d=2$, $\vert\mcal{A}\vert=20$, $\kappa=400$}
\end{subfigure}
\begin{subfigure}{0.33\textwidth}
	\includegraphics[width=\textwidth]{img/regret_d5_n5_long}
	\caption*{$d=5$, $\mcal{A}=\mcal{B}_5$, $\kappa=400$.}
\end{subfigure}
\caption{Numerical simulations on \logb{} problems. We implement algorithms as prescribed by theory (\emph{e.g.} we do not tune exploration) and average regret curves over 100 independent trajectories. \emph{(left)} Regret curves on a two-dimensional \logb{} problem with 20 arms, sampled at random within the unit ball. We chose a small number of arms along with a short horizon to allow \texttt{OFULog-r} to run in a reasonable time. \emph{(center)} Overall complexity of the different algorithms for this same instance. As hinted by the regret and complexity bounds, \texttt{ada-OFU-ECOLog} is the only one displaying good performances and at little computational cost. \emph{(right)} Numerical simulations with infinite arm-set (5-dimensional unit-ball) for which we evaluate the TS version of each algorithm (this excludes \texttt{OFULog-r} which does not have a straightforward TS extension).}\label{fig:comp}
\end{figure*}

\paragraph{Technical Adjustment.} The idea presented above needs a slight technical refinement to bear a principled algorithm. \eqref{eq:cond} prescribes filtering the arm $a_s$ according to $\theta_{s+1}$, an $\mcal{F}_{s+1}$-adapted quantities. This breaches concentration properties we need to prove low-regret. To circumvent this issue we fall back on an $\mcal{F}_s$-adapted condition covering the potential values of $\theta_{s+1}$ (depending on the realization of $r_{s+1}$). Let:
\begin{align}\label{eq:defbaru}
	\theta^u_{s} = \argmin_{\theta\in\Theta_s}\left[ \eta\left \| \theta - \theta_{s}\right\|_{\Wmat_s}^2 + \ell(a_s\transp\theta, u)\right]\; ,
\end{align}
for $u\in\{0,1\}$.
Note that $\theta_{s+1}$ is either $\theta^0_{s}$ or $\theta^1_{s}$. We replace \eqref{eq:cond} by the condition:
\begin{align}\label{eq:cond_measurable}\tag{C$_1$}
	\dot\mu(a_s\transp\bar\theta_s) \leq 2\dot\mu(a_s\transp\theta_{s}^u), \; \;  \forall u\in\{0,1\}\; .
\end{align}
The algorithm \texttt{ada-OFU-ECOLog} presented in \cref{alg:dilutedofuecolog} combines this adjustment with the aforementioned adaptive mechanism. 
\subsection{Theoretical Guarantees}

\paragraph{Regret Bound.} Thanks to its adaptivity, we can claim regret guarantees for \texttt{ada-OFU-ECOLog} in contextual settings - \emph{i.e} that holds for \emph{any} sequence of time-varying arm-set $\{\mcal{A}_t\}_{t\geq 1}$. 

\begin{restatable}{thm}{regretdiluted}\label{thm:dilutedofuecolog}
Let $\delta\!\in\![0,1)$. With probability at least $1\!-\!\delta$ the regret of \textnormal{\texttt{ada-OFU-ECOLog}($\delta$)} satisfies:
\ifdetail
\begin{align*}
\regret(T) \leq \const Sd\sqrt{\sum_{t=1}^T\dot\mu(a_{\star,t}\transp\theta_\star)}\log(T/\delta)  + \const S^6\kappa d^2\log(T/\delta)^2\; ,
\end{align*}
\else
\begin{align*}
	\regret(T) \leq \const Sd\sqrt{\sum_{t=1}^T\dot\mu(a_{\star,t}\transp\theta_\star)}\log(T/\delta) \\ + \const S^6\kappa d^2\log(T/\delta)^2\; ,
\end{align*}
\fi
where $a_{\star,t} = \argmax_{a\in\mcal{A}_t} a\transp\theta_\star$. 
\end{restatable}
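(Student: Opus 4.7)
The plan is to split rounds into the set $\mcal{T}_A$ of \emph{accepted} rounds (where \eqref{eq:cond_measurable} holds and the \texttt{ECOLog} update is performed) and the set $\mcal{T}_R$ of \emph{rejected} rounds (where the pair $(a_s,r_{s+1})$ is added to $\mcal{H}_{s+1}$ and only $\Theta$ is refined). The instantaneous regret is trivially bounded by $1$, so the contribution of $\mcal{T}_R$ to $\regret(T)$ is at most $|\mcal{T}_R|$. The goal is then to show $|\mcal{T}_R|=\bigotilde{\kappa d^2}$ (absorbable in the second-order term) and to re-run the proof of \cref{thm:regret_ofuecolog} on the accepted subsequence, with \eqref{eq:cond_measurable} playing the role that \cref{eq:diam_claim} played under warm-up.

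\textbf{Bounding $|\mcal{T}_R|$.} I would first observe that if round $s$ is rejected then, for some $u\in\{0,1\}$, $\dot\mu(a_s\transp\bar\theta_s)>2\dot\mu(a_s\transp\theta_s^u)$. By self-concordance (\cref{eq:sc_link}) this forces $|a_s\transp(\bar\theta_s-\theta_s^u)|\geq \log 2$, and since both $\bar\theta_s$ and $\theta_s^u$ are minimizers over $\Theta_s$, this in turn forces $\diam_{\{a_s\}}(\Theta_s)\geq \log 2$. Because $\Theta_s$ is the ellipsoid $\{\|\theta-\hat\theta^\mcal{H}_{s}\|^2_{\Vmat^\mcal{H}_{s-1}}\leq \gamma_{s-1}(\delta)\}$, this diameter condition translates into the potential lower bound
\begin{align*}
    \|a_s\|^2_{(\Vmat^\mcal{H}_{s-1})^{-1}}\;\gtrsim\; 1/\gamma_{s-1}(\delta)\; .
\end{align*}
Since $\Vmat^\mcal{H}_s = \Vmat^\mcal{H}_{s-1} + a_sa_s\transp/\kappa$ on $\mcal{T}_R$, an application of the Elliptical Potential Lemma (\cref{lemma:ellipticalpotential}) to the sub-sequence $\{a_s\}_{s\in\mcal{T}_R}$ yields $|\mcal{T}_R|\lesssim \kappa d\,\gamma_T(\delta)\log T = \bigotilde{\kappa d^2}$, which is exactly the second-order term in the claimed bound.

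\textbf{Accepted rounds.} For $s\in\mcal{T}_A$ the \texttt{ECOLog} update is performed and \eqref{eq:cond_measurable} certifies $\dot\mu(a_s\transp\bar\theta_s)\lesssim \dot\mu(a_s\transp\theta_{s+1})$, which is the only consequence of the warm-up actually used in the proof sketch of \cref{prop:confset} (cf.\ \cref{eq:ps_reward_sensitivity}). Replaying that derivation on the accepted subsequence gives $\theta_\star\in\mcal{C}_t(\delta)$ w.h.p., conditioned on the event $\{\theta_\star\in\Theta_t\ \forall t\}$. The latter event holds with high probability because $\Theta_t$ is a standard regularized-MLE confidence ellipsoid built from $\mcal{H}_t$ with the worst-case curvature $1/\kappa$ baked into $\Vmat^\mcal{H}$, i.e.\ exactly the construction of \cite{filippi2010parametric}. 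Given the confidence set, optimistic planning plus an elliptical-potential argument on the weighted sequence $\sum_{s\in\mcal{T}_A}\dot\mu(a_s\transp\theta_\star)a_sa_s\transp$ reproduces the dominant term $\const Sd\log(T/\delta)\sqrt{\sum_{t=1}^T\dot\mu(a_{\star,t}\transp\theta_\star)}$, where contextuality poses no obstacle because none of the steps relies on the arm-sets being identical across $t$.

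\textbf{Main obstacle.} The delicate step is the interplay between the shrinking domain $\Theta_t$ and the \texttt{ECOLog} iterates on $\mcal{T}_A$. The excess-loss decomposition \cref{eq:loss_decomp_main} relies on quadratic lower bounds evaluated at $\theta_\star$, so one must ensure $\theta_\star\in\Theta_t$ uniformly; moreover the projection step in the \texttt{ECOLog} update is w.r.t.\ a time-varying $\Theta_t$, so the telescoping argument that controls $\|\theta_\star-\theta_{t+1}\|^2_{\Wmat_{t+1}}$ has to be carried over only across rounds where $\theta_\star$ remains admissible. Provided the high-probability event $\{\theta_\star\in\Theta_t,\;\forall t\}$ is secured up front via a union bound with the GLM concentration inequality underlying $\Theta_t$, the rest of the machinery from \cref{sec:ecolog_disc} transfers with only cosmetic modifications, and combining the bounds on $\mcal{T}_A$ and $\mcal{T}_R$ closes the proof.
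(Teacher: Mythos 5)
Your overall architecture coincides with the paper's: the same split into rejected rounds (contribution bounded by their cardinality) and accepted rounds (where the \texttt{ECOLog} machinery is replayed with \eqref{eq:cond_measurable} substituting for the warm-up), and your bound on $\vert\mcal{T}_R\vert$ is exactly the paper's \cref{lemma:bounding_tau} --- self-concordance forcing $\vert a_s\transp(\bar\theta_s-\theta_s^u)\vert\geq\log 2$, Cauchy--Schwarz against the ellipsoid $\Theta_s$, then the Elliptical Potential Lemma along the subsequence that updates $\Vmat^{\mcal{H}}$. Your treatment of the confidence set also matches \cref{thm:conf_set_dd}: a union bound secures $\theta_\star\in\Theta_t$ via the $1/\kappa$-curvature GLM ellipsoid, and \eqref{eq:cond_measurable} replaces the $\exp(D)$ factor of \cref{lemma:last_term_end} by a constant, which is indeed its role.

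The genuine gap is in the accepted-rounds leading term. You assert that an elliptical-potential argument on the weighted sequence $\sum_{s\in\mcal{T}_A}\dot\mu(a_s\transp\theta_\star)a_sa_s\transp$ reproduces the dominant term, and that \eqref{eq:cond_measurable} is ``the only consequence of the warm-up actually used.'' Neither holds: the algorithm maintains $\Wmat_t$ with weights $\dot\mu(a_s\transp\theta_{s+1})$, not $\dot\mu(a_s\transp\theta_\star)$, so bounding $R_1(T)=\sum_t\dot\mu(a_t\transp\theta_\star)\,a_t\transp(\tilde\theta_t-\theta_\star)$ requires converting the $\theta_\star$-weights into $\theta_{t+1}$-weights before the potential lemma applies. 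In the warm-up proof of \cref{thm:regret_ofuecolog} this conversion is a \emph{second, distinct} use of $\diam_\mcal{A}(\Theta)\leq 1$, which gives $\dot\mu(a_t\transp\theta_\star)\leq e\,\dot\mu(a_t\transp\theta_{t+1})$ by self-concordance. Condition \eqref{eq:cond_measurable} does not supply this: it compares only $\bar\theta_t$ with $\theta_t^u$, while here $\diam_\mcal{A}(\Theta_t)$ can remain of order $S$, so the multiplicative ratio $\dot\mu(a_t\transp\theta_\star)/\dot\mu(a_t\transp\theta_{t+1})$ can reach $e^{2S}\approx\kappa$ and would contaminate the leading term with $\kappa$-type factors. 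The paper repairs this with a device absent from your proposal: split the accepted rounds according to whether $\dot\mu(a_t\transp\theta_\star)\leq\dot\mu(a_t\transp\theta_{t+1})$ (where the conversion is free), and on the complementary rounds use the \emph{additive} exact-Taylor bound $\dot\mu(a_t\transp\theta_\star)\leq\dot\mu(a_t\transp\theta_{t+1})+\vert a_t\transp(\theta_\star-\theta_{t+1})\vert$ (from $\vert\ddot\mu\vert\leq 1$); the resulting cross term is controlled by Cauchy--Schwarz and the Elliptical Potential Lemma and absorbed into the $\const\,\kappa S^2d^2\log(T/\delta)^2$ second-order term. Without this (or an equivalent) step your argument stalls precisely at the dominant term; the remainder of your plan --- including the handling of the time-varying domains $\Theta_t$ and the contextual extension --- is sound and matches the paper.
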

The proof is deferred to \cref{app:ada_ofu_regret}. This result establishes similar (although more general) regret guarantees than \cref{thm:regret_ofuecolog}. The two bounds are identical for constant arm-sets ($\mcal{A}_t\equiv \mcal{A}$). In the contextual case, the leading term is $\sqrt{T}\sqrt{\sum_{t=1}^T\dot\mu(a_{\star,t}\transp\theta_\star)/T}$, replacing the reward sensitivity at the optimal action by its on-trajectory average version. 

\paragraph{Computational Cost.} The per-round computational cost of \cref{alg:dilutedofuecolog} is larger than \texttt{OFU-ECOLog} as it sometimes requires $\bigo{\vert \mcal{H}_t\vert}$ extra operations to compute $\hat\theta^\mcal{H}_t$. The computational overhead is small as we can prove that $\vert \mcal{H}_t\vert \lesssim \kappa$. 

\begin{restatable}{prop}{propdilutedcomplexity}\label{prop:dilutedcomplexity}
The per-round computational cost of \cref{alg:dilutedofuecolog} is bounded by $\bigO{\kappa + Kd^2+ d^2\log(T)^2}$.
\end{restatable}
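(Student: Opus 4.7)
The plan is to decompose the per-round cost of \cref{alg:dilutedofuecolog} into four pieces: the optimistic planning step, the three auxiliary convex programs defining $\theta_t^0$, $\theta_t^1$ and $\bar\theta_t$, the \texttt{ECOLog} update (when \eqref{eq:cond_measurable} holds), and the refinement of $\hat\theta^\mcal{H}$ and $\Theta$ (when \eqref{eq:cond_measurable} fails). The first three pieces fit in $O(Kd^2 + d^2\log(t)^2)$: planning uses the ellipsoidal reformulation $a_t \in \argmax_{a\in\mcal{A}} a\transp\theta_t + \sqrt{\gamma_t(\delta)}\|a\|_{\Wmat_t^{-1}}$ and costs $O(Kd^2)$ by enumeration; each of $\theta_t^0, \theta_t^1, \bar\theta_t$ is structurally identical to the convex program solved inside \texttt{ECOLog} and inherits its $O(d^2\log(t)^2)$ cost by \cref{prop:mt_cost_ecolog}; the \texttt{ECOLog} call in the ``if'' branch has the same cost.

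The only new term is the MLE recomputation $\hat\theta^\mcal{H}_{t+1}$ in the rejection branch, whose worst-case cost scales with $|\mcal{H}_{t+1}|$. The proposition's $O(\kappa)$ contribution is therefore justified by establishing a size bound $|\mcal{H}_T| \lesssim \kappa$ (up to polylog and polynomial-in-$d$ factors absorbed in the $\tilde O$ convention of the paper). To obtain it, I would run an elliptical-potential argument on $\Vmat^\mcal{H}$. For each $s\in\mcal{H}_T$ the acceptance condition \eqref{eq:cond_measurable} fails, so there exists $u\in\{0,1\}$ with $\dot\mu(a_s\transp\bar\theta_s) > 2\dot\mu(a_s\transp\theta_s^u)$. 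Self-concordance, in the form \eqref{eq:sc_link}, immediately gives $|a_s\transp(\bar\theta_s - \theta_s^u)| > \ln 2$. Combining with $\bar\theta_s, \theta_s^u \in \Theta_s$ and Cauchy--Schwarz forces
\begin{align*}
\|a_s\|^2_{(\Vmat^\mcal{H}_{s-1})^{-1}} \;\gtrsim\; \frac{1}{\gamma_{s-1}(\delta)} \qquad \forall\, s\in\mcal{H}_T.
\end{align*}
Since $\Vmat^\mcal{H}$ is updated by $a_s a_s\transp /\kappa$ on rejected rounds, the elliptical potential lemma (\cref{lemma:ellipticalpotential}) applied to the reweighted sequence $(a_s/\sqrt{\kappa})_{s\in\mcal{H}_T}$ yields
\begin{align*}
\frac{|\mcal{H}_T|}{\kappa\,\gamma_T(\delta)} \;\lesssim\; \sum_{s\in\mcal{H}_T} \frac{\|a_s\|^2_{(\Vmat^\mcal{H}_{s-1})^{-1}}}{\kappa} \;\lesssim\; d\log\!\Big(1 + \tfrac{T}{\kappa d\gamma_T(\delta)}\Big).
\end{align*}
Rearranging and absorbing polylog and low-degree $d$ factors gives $|\mcal{H}_T| = \tilde O(\kappa)$. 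A warm-started Newton solve for $\hat\theta^\mcal{H}_{t+1}$ (only one sample is newly added) then fits into the remaining budget, delivering the announced $O(\kappa + Kd^2 + d^2\log(T)^2)$ per-round bound; the $O(d^2)$ bookkeeping for updating $\Vmat^\mcal{H}$, $\Wmat$ and reassembling $\mcal{C}_{t+1}(\delta)$ is absorbed into the existing terms.

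The main obstacle is the potential-style argument itself. The $1/\kappa$ reweighting inside $\Vmat^\mcal{H}$ is precisely what inflates the counting bound from $\tilde O(1)$ (as in the linear-bandit case) to $\tilde O(\kappa)$, and getting it exactly right requires that the lower bound on $|a_s\transp(\bar\theta_s - \theta_s^u)|$ triggered by a failed \eqref{eq:cond_measurable} be \emph{problem-independent}. Self-concordance is what provides this universal threshold $\ln 2$; a naive global-curvature substitute would reintroduce an extra $\kappa$ factor and break the bound.
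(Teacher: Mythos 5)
Your proposal is correct and takes essentially the same route as the paper: the cost decomposition mirrors the proof of \cref{prop:per_round_cost}, and your counting argument for $\vert\mcal{H}_T\vert$ --- a failed \eqref{eq:cond_measurable} forcing $\vert a_s\transp(\bar\theta_s-\theta_s^u)\vert \geq \log 2$ via self-concordance, then Cauchy--Schwarz using $\bar\theta_s,\theta_s^u\in\Theta_s$ and the elliptical potential lemma on the $1/\kappa$-reweighted actions --- is precisely the paper's \cref{lemma:bounding_tau}, which yields $\vert\mcal{H}_T\vert\leq \const S^6\kappa d^2\log(T/\delta)^2$. Your caveat that the $\bigO{\kappa}$ term silently absorbs extra $d$ and $\log$ factors (the true recomputation overhead, after preconditioning by $\Vmat^{\mcal{H}}_t$, is $\bigO{\kappa d\log T}$) is also accurate and is acknowledged in the paper itself.
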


This extra computation is needed only when violating \eqref{eq:cond_measurable} which happens at most for $\kappa$ rounds. 

\paragraph{Adaptivity in Practice.}\cref{thm:dilutedofuecolog,prop:dilutedcomplexity} establish that hard-coding a warm-up phase can be avoided with little to no impact on the worst-case performance or computational cost.  The adaptive nature of \texttt{ada-OFU-ECOLog} allows to enjoy stronger empirical performances in ``nice'' configurations. For instance, in all the numerical experiments that follow we found that the condition \eqref{eq:cond_measurable}  was \emph{never} triggered. In such cases, \cref{alg:dilutedofuecolog} simply reduces to \texttt{OFU-ECOLog} \emph{without} any forced-exploration. This is consistent with the analysis of \citet[Section 4]{abeille2020instance} which suggests that low-order $\kappa$ dependencies (introduced here by the warm-up) can sometimes be avoided.

\subsection{Numerical Simulations}

The numerical illustrations presented in \cref{fig:comp} are consistent with our theoretical findings summarized in \cref{tab:regret_comparison}.\footnote{For reproducing experiments, see \url{https://github.com/criteo-research/logistic_bandit}.} As predicted, \texttt{ada-OFU-ECOLog} enjoys best-of-both-worlds properties by displaying small regret and small computational cost. Additional numerical illustrations for \logb{} instances of higher dimensions can be found in \cref{app:exps}.\\
The value of $\kappa$ for \logb{} instances we consider are reasonable (comparable to real-life situations). Still, it precludes the use of warm-up phase in practice for it will simply last longer than the horizon we consider (for which \texttt{OFULog-r} and \texttt{ada-OFU-ECOLog} already exhibits asymptotic behavior). Note that this is even worse for other approaches using forced-exploration \citep{kveton2020randomized,ding2021efficient} as their respective warm-ups are typically even longer ($\propto \kappa ^2$). Finally, we report results for an infinite arm-set for which our approach yields the only tractable algorithm enjoying statistical efficiency.

\bibliography{bib.bib}

\begin{thebibliography}{}

\bibitem[Abbasi-Yadkori et~al., 2011]{abbasi2011improved}
Abbasi-Yadkori, Y., P\'{a}l, D., and Szepesv\'{a}ri, C. (2011).
\newblock {Improved Algorithms for Linear Stochastic Bandits}.
\newblock In {\em Advances in Neural Information Processing Systems},
  volume~24. Curran Associates, Inc.

\bibitem[Abeille et~al., 2021]{abeille2020instance}
Abeille, M., Faury, L., and Calauzènes, C. (2021).
\newblock {Instance-Wise Minimax-Optimal Algorithms for Logistic Bandits}.
\newblock In {\em Proceedings of The 24th International Conference on
  Artificial Intelligence and Statistics}, volume 130 of {\em Proceedings of
  Machine Learning Research}, pages 3691--3699. PMLR.

\bibitem[Abeille and Lazaric, 2017]{abeille2017linear}
Abeille, M. and Lazaric, A. (2017).
\newblock {Linear Thompson Sampling Revisited}.
\newblock {\em Electronic Journal of Statistics}, 11(2):5165 -- 5197.

\bibitem[Agrawal and Goyal, 2013]{agrawal2013thompson}
Agrawal, S. and Goyal, N. (2013).
\newblock {Thompson Sampling for Contextual Bandits with Linear Payoffs}.
\newblock In {\em Proceedings of the 30th International Conference on Machine
  Learning}, Proceedings of Machine Learning Research, pages 127--135, Atlanta,
  Georgia, USA. PMLR.

\bibitem[Boyd and Vandenberghe, 2004]{boyd_vandenberghe_2004}
Boyd, S. and Vandenberghe, L. (2004).
\newblock {\em Convex Optimization}.
\newblock Cambridge University Press.

\bibitem[Ding et~al., 2021]{ding2021efficient}
Ding, Q., Hsieh, C.-J., and Sharpnack, J. (2021).
\newblock {An Efficient Algorithm For Generalized Linear Bandit: Online
  Stochastic Gradient Descent and Thompson Sampling }.
\newblock In {\em Proceedings of The 24th International Conference on
  Artificial Intelligence and Statistics}, volume 130 of {\em Proceedings of
  Machine Learning Research}, pages 1585--1593. PMLR.

\bibitem[Dong et~al., 2019]{dong2019ts}
Dong, S., Ma, T., and Van~Roy, B. (2019).
\newblock {On the Performance of Thompson Sampling on Logistic Bandits}.
\newblock In Beygelzimer, A. and Hsu, D., editors, {\em Proceedings of the
  Thirty-Second Conference on Learning Theory}, volume~99 of {\em Proceedings
  of Machine Learning Research}, pages 1158--1160, Phoenix, USA. PMLR.

\bibitem[Faury et~al., 2020]{faury2020improved}
Faury, L., Abeille, M., Calauz\`enes, C., and Fercoq, O. (2020).
\newblock {Improved Optimistic Algorithms for Logistic Bandits}.
\newblock In {\em Proceedings of the 37th International Conference on Machine
  Learning}, volume 119 of {\em Proceedings of Machine Learning Research},
  pages 3052--3060, Virtual. PMLR.

\bibitem[Filippi et~al., 2010]{filippi2010parametric}
Filippi, S., Cappe, O., Garivier, A., and Szepesv\'{a}ri, C. (2010).
\newblock {Parametric Bandits: The Generalized Linear Case}.
\newblock In {\em Advances in Neural Information Processing Systems},
  volume~23. Curran Associates, Inc.

\bibitem[Golub and van Loan, 2013]{golub13}
Golub, G.~H. and van Loan, C.~F. (2013).
\newblock {\em {Matrix Computations}}.
\newblock {Johns Hopkins University Press}, fourth edition.

\bibitem[Hazan, 2016]{hazan2016intro}
Hazan, E. (2016).
\newblock {Introduction to Online Convex Optimization}.
\newblock {\em Foundations and Trends® in Optimization}, 2(3-4):157--325.

\bibitem[J{\'e}z{\'e}quel et~al., 2020]{jezequel2020efficient}
J{\'e}z{\'e}quel, R., Gaillard, P., and Rudi, A. (2020).
\newblock {Efficient Improper Learning for Online Logistic Regression}.
\newblock In {\em Proceedings of Thirty Third Conference on Learning Theory},
  volume 125 of {\em Proceedings of Machine Learning Research}, pages
  2085--2108. PMLR.

\bibitem[Jun et~al., 2017]{jun2017scalable}
Jun, K.-S., Bhargava, A., Nowak, R., and Willett, R. (2017).
\newblock {Scalable Generalized Linear Bandits: Online Computation and
  Hashing}.
\newblock In {\em Advances in Neural Information Processing Systems},
  volume~30. Curran Associates, Inc.

\bibitem[Jun et~al., 2021]{jun2020improved}
Jun, K.-S., Jain, L., Mason, B., and Nassif, H. (2021).
\newblock {Improved Confidence Bounds for the Linear Logistic Model and
  Applications to Bandits}.
\newblock In {\em Proceedings of the 38th International Conference on Machine
  Learning}, volume 139 of {\em Proceedings of Machine Learning Research},
  pages 5148--5157. PMLR.

\bibitem[Kveton et~al., 2020]{kveton2020randomized}
Kveton, B., Zaheer, M., Szepesvari, C., Li, L., Ghavamzadeh, M., and Boutilier,
  C. (2020).
\newblock {Randomized Exploration in Generalized Linear Bandits}.
\newblock In Chiappa, S. and Calandra, R., editors, {\em Proceedings of the
  Twenty Third International Conference on Artificial Intelligence and
  Statistics}, volume 108 of {\em Proceedings of Machine Learning Research},
  pages 2066--2076. PMLR.

\bibitem[Li et~al., 2017]{li2017provably}
Li, L., Lu, Y., and Zhou, D. (2017).
\newblock {Provably Optimal Algorithms for Generalized Linear Contextual
  Bandits}.
\newblock In Precup, D. and Teh, Y.~W., editors, {\em Proceedings of the 34th
  International Conference on Machine Learning}, volume~70 of {\em Proceedings
  of Machine Learning Research}, pages 2071--2080. PMLR.

\bibitem[Valko et~al., 2014]{valko14spectral}
Valko, M., Munos, R., Kveton, B., and Kocák, T. (2014).
\newblock {Spectral Bandits for Smooth Graph Functions}.
\newblock In {\em Proceedings of the 31st International Conference on Machine
  Learning}, volume~32 of {\em Proceedings of Machine Learning Research}, pages
  46--54, Bejing, China. PMLR.

\bibitem[Zhang et~al., 2016]{zhang2016online}
Zhang, L., Yang, T., Jin, R., Xiao, Y., and Zhou, Z.-h. (2016).
\newblock {Online Stochastic Linear Optimization under One-bit Feedback}.
\newblock In {\em Proceedings of The 33rd International Conference on Machine
  Learning}, volume~48 of {\em Proceedings of Machine Learning Research}, pages
  392--401, New York, New York, USA. PMLR.

\end{thebibliography}
\bibliographystyle{apalike}

\addcontentsline{toc}{section}{Appendix}
\appendix
\part{}

\appendix 
\onecolumn
\aistatstitle{\titlename \\ Supplementary Material}

\section*{\uppercase{Organization of the Appendix}}
This appendix is organized as follows:
\begin{itemize}[]
    \item In \cref{app:prel} we recall important notations and introduce some central inequalities. 
    \item In \cref{app:warmup} we link the length of the warm-up to the diameter of the set $\Theta$ it returns.
    \item In \cref{app:cs} we prove that $\mcal{C}_t(\delta)$ is a confidence region for $\theta_\star$. 
    \item In \cref{app:regret} we prove the different regret upper-bounds announced in the main paper
    \item In \cref{app:ccost} we detail the computational cost of the different approaches discussed in the main paper. 
    \item In \cref{app:aux} we list some auxiliary results, needed for the analysis.
    \item In \cref{app:exps} we provide additional numerical illustrations.
\end{itemize}

\vspace{1.5cm}
{\hypersetup{linkcolor=black}
\parttoc}
\vfill

\newpage
\section{PRELIMINARIES}\label{app:prel}

\subsection{Notations}
\label{app:notations}

We detail below useful notations that will be used throughout the appendix. Below $T\in\mbb{N}^+$, $U$ is a set, $\Theta\subset\mbb{R}^d$ is a compact set, $a\in\mcal{A}$, $r\in\{0,1\}$ and $x,y\in\mbb{R}^d$.

\begin{tabular}{l@{\hskip 30pt}l}
	$[T]$ &the set of integers from $1$ to $T$.\\[3pt]
	$ \vert U\vert$ & cardinality of $U$. \\[3pt]
	$\diam(\Theta) = \max_{\theta_1,\theta_2}\| \theta_1-\theta_2\|$ &diameter of $\Theta$.\\[3pt]
	$\diam_\mcal{A}(\Theta) = \max_{a\in\mcal{A}} \max_{\theta_1,\theta_2} \vert a\transp(\theta_1-\theta_2)\vert$ &diameter of $\Theta$ under $\mcal{A}$.\\[3pt]
	$\mu(x) = (1+\exp(-x))^{-1}$ &the logistic function at $x$.\\[3pt]
	$\ell(x,r) = -r\log\mu(x) - (1-r)\log(1-\mu(x))$ &log-loss associated to $(x,r)$.\\[3pt]
	$\ell_{t+1}(\theta)=\ell(a_t\transp\theta, r_{t+1})$ &instantaneous log-loss of $\theta$ at round $t$.\\[3pt]
	$\bar\ell_{t+1}(\theta)=\ell(a_t\transp\theta, 1-r_{t+1})$ &``reverse'' instantaneous log-loss of $\theta$ at round $t$.\\[3pt]
	$\Hmat_t(\theta) = \sum_{s=1}^t \dot\mu(a_s\transp\theta)a_sa_s\transp + \lambda\mbold{I}_d$ & Hessian of the cumulative-log loss at $\theta$ up to $t$.\\[3pt]
	$\Vmat_t =  \sum_{s=1}^t a_sa_s\transp/\kappa + \lambda\mbold{I}_d$ &linear-like design-matrix up to $t$.
\end{tabular}

Note that for all $\theta$ s.t $\| \theta\|\leq S$ we have $\dot\mu(a_s\transp\theta)\geq 1/\kappa$ by definition of $\kappa$. Therefore:
\begin{align}\label{eq:Ht2Vt}
		\forall\theta\text{ s.t } \|\theta\|\leq S, \quad \mbold{H}_t(\theta) \succeq \mbold{V}_t \; .
\end{align}

Below we define several ``slowly growing'' functions (uniformly denoted $\gamma_t(\delta)$ in the main paper). They will be used throughout the proofs. 
\begin{align}
	\lambda_t(\delta) &= d\log((4+t/4)/\delta)\;,\label{eq:def_lambda} \\
	\gamma_t(\delta) &=  (S+3/2)^2\lambda_t(\delta)\;,\label{eq:def_gamma}\\
	\beta_t(\delta) &=  \left(5/2+(S+3/2)^2 + S\right)^2\gamma_t(\delta)\;,\label{eq:def_beta}\\
	\nu_t(\delta)&=1/2 + 2\log\left(2\sqrt{t/4+1}/\delta\right)\;,\label{eq:def_nu}\\
	\sigma_t(\delta) &= 8S^2 + 6 + 4\log(t) + 9\nu_t(\delta) + 18\exp(1)d\log(1+t/(4d))\;,\label{eq:def_sigma}\\
	\eta_t(\delta) &= 4 + 4\log(t) + 16S^2 + (2+2S)^2\nu_t(\delta)/2 + 8(1+S)d\log(1+t/d)\;.\label{eq:def_eta}
\end{align}

\subsection{Useful Inequalities and Self-Concordant Control}
A central idea when analyzing \logb{} is to tightly link \emph{estimation} errors (\emph{e.g.} between $\theta_1$ and $\theta_2$) to \emph{prediction} errors (\emph{e.g.} between $\mu(a\transp\theta_1)$ and $\mu(a\transp\theta_2)$). Exact Taylor expansion is a powerful tool to achieve this; as for previous works we will use it abundantly and in the following lines we introduce useful notations to this end. Specifically, for any $a\in\mcal{A}$ and $x,y\in\mbb{R}$ define:
\begin{align}
	\alpha(x,y) &= \int_{v=0}^1 \dot\mu(x+v(y-x))dv = \alpha(y,x) \; ,\\
	\widetilde{\alpha}(x,y) &= \int_{v=0}^1 (1-v)\dot\mu(x+v(y-x))dv\; .
\end{align}  
After exact Taylor expansions we have the following identities for all $\theta_1,\theta_2\in\mbb{R}^d$:
\begin{align}
	\mu(a\transp\theta_2) - \mu(a\transp\theta_1) &= \alpha(a\transp\theta_1,a\transp\theta_2)a\transp(\theta_2-\theta_1)\; ,\label{eq:mvt_mu}\\
	\ell_{t+1}(\theta_2) -\ell_{t+1}(\theta_1) &= \nabla\ell_{t+1}(\theta_1)\transp(\theta_2-\theta_1) + \tilde\alpha(a\transp\theta_1,a\transp\theta_2)(a\transp(\theta_2-\theta_1))^2 \label{eq:mvt_ell}\; .
\end{align}

Below are reminded some useful inequalities that stem from the self-concordance property of the logistic function (the fact that $\vert \ddot\mu\vert \leq \dot\mu$) The proofs can all be found in Appendix F of \cite{abeille2020instance}. 
\begin{align}
	\alpha(x,y)&\geq (1+\vert x-y\vert)^{-1}\dot\mu(z) \text{ for } z\in\{x,y\} \label{eq:link_alpha}\;, \\
	\widetilde\alpha(x,y)&\geq (2+\vert x-y\vert)^{-1}\dot\mu(x)\; , \label{eq:link_alpha_tilde}\\
	\dot\mu(x) &\leq \dot\mu(y) \exp\left(\vert x-y\vert\right) \label{eq:link_mu} \; .
\end{align}

\newpage
\section{WARM-UP PROCEDURE}\label{app:warmup}

We recall the warm-up procedure in \cref{alg:warmup2} for which we now we give the exact values for the ``slowly growing'' functions that we use.

\setcounter{procedure}{0}
\begin{procedure}[tb]
   \caption{\texttt{WarmUp} (detailed)}
   \label{alg:warmup2}
	\begin{algorithmic}
	\REQUIRE{length $\tau$.}
	\STATE Set $\lambda\leftarrow \lambda_\tau(\delta)$, initialize $\Vmat_0\leftarrow\lambda \mbold{I}_d$.\hfill \COMMENT{$\lambda_t(\delta)$ is defined in \cref{eq:def_lambda}}
   \FOR{$t\in[1,\tau]$}
   	\STATE Play $a_t\in\argmax_{\mcal{A}} \| a\|_{\Vmat^{-1}_{t-1}}$, observe $r_{t+1}$. 
	\STATE Update $ \Vmat_t \leftarrow \Vmat_{t-1} + a_ta_t\transp/\kappa$.
   \ENDFOR
   \STATE Compute $\hat\theta_{\tau+1} \leftarrow \argmin_\theta \sum_{s=1}^\tau \ell_{s+1}(\theta) + \lambda\|\theta\|^2/2$.
   \ENSURE $\Theta = \left\{ \theta, \left\|\theta-\hat\theta_{\tau+1}\right\|^2_{\Vmat_{\tau}} \leq{\beta_{\tau}(\delta)}\right\}$. \hfill \COMMENT{$\beta_t(\delta)$ is defined in \cref{eq:def_beta}}
   \end{algorithmic}
\end{procedure}

\subsection{Warm-up Length and Parameter Set}
The goal of this section is to prove the claim behind \cref{eq:diam_claim}, tying the length of the warm-up phase to the diameter of the induced parameter set $\Theta$. The formal claim is made explicit in the following proposition. 
 
\begin{restatable}{prop}{propdiameter}\label{prop:diameter_warmup}
Let $\delta\in(0,1]$. Setting $\tau = \const \kappa S^6 d^2 \log(T/\delta)^2$ ensures that $\Theta$ returned by \warmup{\tau} satisfies:
\begin{align*}
	&\textnormal{(1)}\;\;  \mbb{P}(\theta_\star\in\Theta)\geq1-\delta\; ,\\
	&\textnormal{(2)}\;\; \diam_{\mcal{A}}(\Theta) \leq 1\; .
\end{align*}
\end{restatable}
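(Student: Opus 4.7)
The plan is to prove parts (1) and (2) essentially independently, both hinging on two observations: first, that $V_\tau \preceq H_\tau(\theta_\star)$ because $\dot\mu(a\transp\theta)\geq 1/\kappa$ uniformly on $\mcal{A}\times\{\|\theta\|\leq S\}$; and second, that the greedy exploration rule $a_t \in \argmax_{\mcal{A}}\|a\|_{V_{t-1}^{-1}}$ enforces a G-optimal-design-type bound on $\max_{a\in\mcal{A}}\|a\|_{V_\tau^{-1}}$.

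For (1), I would invoke the MLE concentration result available in the earlier literature (e.g.\ \citet{faury2020improved}, Theorem 1, or \citet{abeille2020instance}): with probability at least $1-\delta$, one has $\|\theta_\star-\hat\theta_{\tau+1}\|_{H_\tau(\theta_\star)}^2 \leq \gamma_\tau(\delta)$. Using $V_\tau \preceq H_\tau(\theta_\star)$ (cf.\ \cref{eq:Ht2Vt}) downgrades the norm and yields $\|\theta_\star-\hat\theta_{\tau+1}\|_{V_\tau}^2 \leq \gamma_\tau(\delta) \leq \beta_\tau(\delta)$, so $\theta_\star\in\Theta$ on the same event. Some care is needed regarding the constants and the fact that $\hat\theta_{\tau+1}$ is unconstrained; this is resolved by the standard inflation of the radius from $\gamma_\tau(\delta)$ to $\beta_\tau(\delta)$ (which already accounts for a factor of order $S$).

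For (2), I take any $\theta_1,\theta_2\in\Theta$ and any $a\in\mcal{A}$ and write
\begin{align*}
|a\transp(\theta_1-\theta_2)| \leq \|a\|_{V_\tau^{-1}}\,\|\theta_1-\theta_2\|_{V_\tau}\leq 2\sqrt{\beta_\tau(\delta)}\,\|a\|_{V_\tau^{-1}},
\end{align*}
so it suffices to bound $\max_{a\in\mcal{A}}\|a\|_{V_\tau^{-1}}^2$. Because $V_t = V_{t-1}+a_ta_t\transp/\kappa$, the rescaled vectors $a_t/\sqrt{\kappa}$ have norm at most $1/\sqrt{\kappa}$ and the Elliptical Potential Lemma (\cref{lemma:ellipticalpotential}) gives $\sum_{t=1}^\tau\|a_t\|_{V_{t-1}^{-1}}^2/\kappa \leq 2d\log(1+\tau/(d\kappa\lambda))$, hence $\sum_t\|a_t\|_{V_{t-1}^{-1}}^2 \leq \const\,\kappa d\log\tau$. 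The greedy rule and the monotonicity $V_{t-1}\preceq V_\tau$ imply $\max_{a\in\mcal{A}}\|a\|_{V_\tau^{-1}}^2 \leq \|a_t\|_{V_{t-1}^{-1}}^2$ for every $t\leq\tau$, and averaging over $t$ yields
\begin{align*}
\max_{a\in\mcal{A}}\|a\|_{V_\tau^{-1}}^2 \;\leq\; \frac{1}{\tau}\sum_{t=1}^\tau\|a_t\|_{V_{t-1}^{-1}}^2 \;\leq\; \frac{\const\,\kappa d\log\tau}{\tau}.
\end{align*}

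Plugging this into the diameter bound gives $\diam_\mcal{A}(\Theta)^2 \leq \const\,\beta_\tau(\delta)\,\kappa d\log\tau/\tau = \const\,S^6\kappa d^2\log(\tau/\delta)^2/\tau$ after expanding $\beta_\tau(\delta)$. Requiring this to be $\leq 1$ is satisfied by the stated choice $\tau = \const\,\kappa S^6 d^2\log(T/\delta)^2$, which closes a short log-vs-polynomial self-consistency check. The main obstacle I expect is the G-optimal-design step: one must be careful that the averaging argument against the greedy rule is clean despite the $1/\kappa$ rescaling baked into $V_t$, and that constants in $\beta_\tau(\delta)$ from part (1) are traceable so that the final threshold on $\tau$ is indeed independent of $\kappa$ in its logarithmic dependence on $T$.
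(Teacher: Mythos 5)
Your proposal is correct and follows essentially the same route as the paper: part (1) via the $\Hmat_\tau(\theta_\star)$-norm concentration of the unconstrained MLE downgraded through $\Vmat_\tau \preceq \Hmat_\tau(\theta_\star)$, and part (2) via Cauchy--Schwarz plus the greedy-rule averaging trick and the Elliptical Potential Lemma applied to the $1/\sqrt{\kappa}$-rescaled actions. The only point you defer as ``standard inflation'' is exactly what the paper proves as \cref{prop:concentraitonnoproj} (removing the non-convex projection from the bound of \citet{faury2020improved} via the self-concordance machinery of \citet{abeille2020instance}, at the cost of the factor $(5/2+(S+3/2)^2+S)^2$, i.e.\ order $S^4$ rather than $S$, which is what produces the $S^6$ in $\beta_\tau(\delta)$ and in the stated $\tau$).
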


\begin{proof}
The set  $\Theta$ returned by \warmup{\tau} is: 
\begin{align}
	\Theta = \Big\{ \theta, \big\|\theta-\hat\theta_{\tau+1}\big\|^2_{\Vmat_{\tau}} \leq \beta_\tau(\delta)\Big\}\; . \label{eq:def_theta_wu}
\end{align}
where $\beta_t(\delta)$ is defined in \cref{eq:def_beta}. It satisfies $\beta_t(\delta)\leq \const S^6d\log(t/\delta)$.\\
To prove (1) we claim \cref{prop:concentraitonnoproj} which proof is deferred to \cref{app:concentration_refines_faury}. 
 \begin{restatable}{lemma}{propconcentrationnoproj}
 \label{prop:concentraitonnoproj}
 	Let $\delta\in(0,1]$. Then:
	\begin{align*}
		\mbb{P}\Big( \forall t\geq 1, \; \big\| \theta_\star - \hat\theta_{t+1} \big\|^2_{\Hmat_{t}(\theta_\star)} \leq \beta_t(\delta) \Big)\geq 1 - \delta\; .
	\end{align*}
 \end{restatable}
 
The proof of (1) directly follows:
\begin{align*}
	\mbb{P}\left(\theta_\star\in\Theta\right) &= \mbb{P}\left( \big\|\theta_\star-\hat\theta_{\tau+1}\big\|^2_{\Vmat_{\tau}} \leq \beta_\tau(\delta)\right) &(\text{def. of }\Theta)\\
	&\geq \mbb{P}\left( \big\|\theta_\star-\hat\theta_{\tau+1}\big\|^2_{\Hmat_{\tau}(\theta_\star)} \leq \beta_\tau(\delta) \right) &(\Vmat_\tau \preceq  \Hmat_\tau(\theta_\star), \; \text{\cref{eq:Ht2Vt}}) \\
	&\geq 1-\delta \; .&(\text{\cref{prop:concentraitonnoproj}})
\end{align*}
To prove (2) we claim \cref{prop:warmup} which proof is provided in \cref{app:length_warmup}:

\begin{restatable}{lemma}{propwarmup}\label{prop:warmup}
Let $T\in\mbb{N}^+$ and $\tau\in[T]$. Let $\Theta$ the set returned by \textnormal{\texttt{WarmUp}($\tau$)}. Then:
\begin{align*}
	\diam_\mcal{A}(\Theta) \leq  4\sqrt{\frac{\kappa\beta_T(\delta)d\log(1+T)}{\tau}} \; .
\end{align*}
\end{restatable}
Therefore $\tau = 16 \kappa d\beta_T(\delta)\log(1+T)$ ensures that $\diam_\mcal{A}(\Theta)\leq 1$. 
Since $\beta_T(\delta) \leq \const S^6d\log(T/\delta)$ setting:
\begin{align*}
	\tau =  \const \kappa S^6 d^2\log(T/\delta)^2\; , 
\end{align*}
yields $\diam_\mcal{A}(\Theta)\leq 1$ which finishes proving (2). 
\end{proof}

\subsection{Proof of \cref{prop:warmup}}
\label{app:length_warmup}

\propwarmup*
\begin{proof}
The proof is  inspired by the demonstration of Lemma 8 of from \cite{valko14spectral}. Recall:
\begin{align*}
	\Theta = \left\{ \theta, \big\|\theta-\hat\theta_{\tau+1}\big\|^2_{\Vmat_{\tau}} \leq\beta_\tau(\delta)\right\}\; ,
\end{align*}
with $\Vmat_\tau = \sum_{s=1}^\tau a_sa_s\transp/\kappa + \lambda_\tau(\delta)\mbold{I}_d$ and for all $s\leq \tau$:
\begin{align}\label{eq:defactionwarmup}
	a_s\in\argmax_{\mcal{A}} \| a\|_{\Vmat_{s-1}^{-1}}\; .
\end{align}
Therefore:
\begin{align*}
	\diam_{\mcal{A}}(\Theta) &= \max_{a\in\mcal{A}} \max_{\theta_1,\theta_2} \vert a\transp(\theta_1-\theta_2)\vert\; .\\
						&\leq \max_{a\in\mcal{A}} \max_{\theta_1,\theta_2} \|a\|_{\Vmat_{\tau}^{-1}} \| \theta_1-\theta_2\|_{\Vmat_{\tau}} &(\text{Cauchy-Schwarz})\\
						&\leq 2\sqrt{\beta_\tau(\delta)}\max_{a\in\mcal{A}} \|a\|_{\Vmat_{\tau}^{-1}}  &(\theta_1,\theta_2\in\Theta)\\
						&= 2\sqrt{\beta_\tau(\delta)} \sqrt{\max_{a\in\mcal{A}} \|a\|^2_{\Vmat_{\tau}^{-1}}}  &\\%(\theta_1,\theta_2\in\Theta)\\
						&= 2\sqrt{\beta_\tau(\delta)}\tau^{-1/2}\sqrt{\sum_{s=1}^{\tau} \max_{a\in\mcal{A}}\|a\|^2_{\Vmat_{\tau}^{-1}}} \\
						&\leq  2\sqrt{\beta_\tau(\delta)}\tau^{-1/2}\sqrt{\sum_{s=1}^{\tau} \max_{a\in\mcal{A}}  \|a\|^2_{\Vmat_{s-1}^{-1}}} &(\Vmat_\tau \succeq \Vmat_{s-1})\\
						&\leq   2\sqrt{\beta_\tau(\delta)}\tau^{-1/2}\sqrt{\sum_{s=1}^{\tau} \|a_s\|^2_{\Vmat_{s-1}^{-1}}} &(\text{\cref{eq:defactionwarmup}})\\
						&=   2\sqrt{\beta_\tau(\delta)}\tau^{-1/2}\sqrt{\kappa}\sqrt{\sum_{s=1}^{\tau} \|a_s/\sqrt{\kappa}\|^2_{\Vmat_{s-1}^{-1}}} &\\
						&\leq   4\sqrt{\beta_\tau(\delta)}\tau^{-1/2}\sqrt{\kappa}\sqrt{d\log(1+\tau/d)} &(\text{\cref{lemma:ellipticalpotential}})
\end{align*}
which yields the announced result since $T\geq \tau$. Notice the re-normalization of the action by $\kappa$ so that we can apply the Elliptical Potential Lemma (\cref{lemma:ellipticalpotential}) directly. 

\end{proof}

\subsection{ONS and Warm-Up} \label{app:ons_warmup}
The ONS-like approaches of \cite{zhang2016online,jun2017scalable} do not use a warm-up procedure and rely on a ``crude'' parameter set $\Theta_0 = \{\theta,\; \|\theta\|\leq S\}$. As discussed in the main paper (see \cref{sec:ecolog_disc}) it is natural to wonder whether their mechanisms could be directly improved (by order of magnitude $\kappa$) by using the refined parameter $\Theta$ returned by the warm-up procedure. This is unfortunately not the case; both approaches hard-codes the $\kappa$-dependency in the size of their parameter updates. This dependency can only be \emph{marginally} reduced when using $\Theta$.   

For instance \cite{jun2017scalable} rely the exp-concavity constant of the log-loss to design their update rule. Formally, for a parameter set $\Theta'$ it is defined as (see \citet[Definition 4.1]{hazan2016intro}):
\begin{align*}
	\rho(\Theta') \defeq \sup_{r>0} \left \{ r \text{ s.t }\, \nabla_\theta^2\ell(a\transp\theta, r) \succeq r \nabla_\theta\ell(a\transp\theta, r)\nabla_\theta\ell(a\transp\theta, r)\transp,\; \forall \theta\in\Theta', \forall (a,r)\in\mcal{A}\times\{0,1\}\right\}\; .
\end{align*}
After some straight-forward manipulations it writes as: 
\begin{align*}
	\rho(\Theta') &= \sup_{r>0} \left \{ r \text{ s.t }\, r \leq \dot\mu(a\transp\theta)/(\mu(a\transp\theta)-r)^2 ,\; \forall \theta\in\Theta', \forall (a,r)\in\mcal{A}\times\{0,1\}\right\}\\
	&\leq 2 \min_{\theta\in\Theta'} \min_{a\in\mcal{A}} \dot\mu(a\transp\theta)\; .
\end{align*}
The update rule designed by \cite{jun2017scalable} hard-codes a factor $\rho(\Theta_0)^{-1}$ in their update rule and therefore in the radius of the associated confidence regions. This induces exponentially inflated confidence sets as:
\begin{align*}
	\rho(\Theta_0)^{-1} \geq  \kappa/2 = \const \exp(S) \; .
\end{align*}
Refining this dependency by using  a smaller $\Theta$ does not remove such exponential dependencies in problem-dependent constants (\emph{e.g.} $\|\theta_\star\|$, $S$). Indeed if $\Theta$ is the set returned by \texttt{WarmUp}($\tau$) under the conditions of \cref{prop:diameter_warmup}:
\begin{align*}
	\rho(\Theta)^{-1} \geq \const \exp(\|\theta_\star\|) \; .
\end{align*}
A similar argument holds for the update mechanism \cite{zhang2016online}, which rely on the strong-convexity constant of the log-loss.

\newpage

\section{CONCENTRATION AND CONFIDENCE SETS}\label{app:cs}

\subsection{Refinement of \cite{faury2020improved}}
\label{app:concentration_refines_faury}
In the following, we consider that we have adaptively collected the dataset $\{a_t,r_{t+1}\}_t$. We denote:
\begin{align*}
	\hat\theta_{t+1} \defeq \argmin_{\theta} \sum_{s=1}^t \ell_{s+1}(\theta) + \lambda_t(\delta)\| \theta\|^2/2\; ,
\end{align*}
where $\lambda_t(\delta)$ is defined in \cref{eq:def_lambda}. Directly following the proof of \citet[Lemma 11]{faury2020improved}:
 \begin{align}\label{eq:concentration_proj_faury}
 	\mbb{P}\left( \forall t\geq 1, \; \big\| \theta_\star - \widetilde\theta_{t+1}\big \|_{\Hmat_{t}(\theta_\star)}^2 \leq 4(1+2S)^2\gamma_t(\delta) \right)\geq 1 - \delta\; ,
 \end{align} 
 where $\widetilde{\theta}_{t+1}$ is obtained by ``projecting'' $\hat\theta_{t+1}$ on the ball $\{\|\theta\|\leq S\}$ through a \emph{non-convex} minimization routine. The slowly growing function $\gamma_t(\delta)$ is obtained after applying simple upper-bounding operations to \citet[Theorem 1]{faury2020improved} and is formally defined in \cref{eq:def_gamma}. It checks:
 \begin{align*}
 	\gamma_t(\delta) \leq \const S^2 d\log(t/\delta)\; .
 \end{align*}
The following proposition establishes that \cref{eq:concentration_proj_faury} still holds when $\tilde\theta_{t+1}$ is replaced by $\hat\theta_{t+1}$, at the price of only a minor degradation of the bound. This essentially removes the need to solve a non-convex program whenever $\|\hat\theta_{t+1}\|\geq S$. The function $\beta_t(\delta)$ is defined in \cref{eq:def_beta} and checks $\beta_t(\delta)\leq \const S^6 d\log(t/\delta)$. 

\propconcentrationnoproj*
 
 \begin{rem}\label{rem:about_proj}
Whenever $\|\hat\theta_{t+1}\|\leq S$ one can directly use the bound given in \cref{eq:concentration_proj_faury}, which is then valid for $\tilde\theta_{t+1}=\hat\theta_{t+1}$. 
\end{rem}

 \begin{proof}
 	The proof leverages the self-concordance property of the logistic function by using some intermediary results from \cite{abeille2020instance}. In the following, we denote for all $\theta$:
	\begin{align*}
		g_t(\theta) \defeq \sum_{s=1}^t \mu(a_s\transp\theta)a_s + \lambda\theta \quad \text{ and } \quad \Gmat_t(\theta) = \sum_{s=1}^t \alpha(a_s\transp\theta, a_s\transp\theta_\star)a_sa_s\transp\; , 
	\end{align*}
	where $\alpha(x,y)$ is defined in \cref{app:notations}. Further, define the event $E_\delta$ as follows:
	\begin{align*}
		E_\delta \defeq \left\{\forall t\geq 1, \; \left\| g_t(\theta_\star) - g_t(\hat\theta_{t+1})\right\|_{\Hmat_{t}(\theta_\star)^{-1}}^2 \leq \gamma_t(\delta)\right\} \; .
	\end{align*}
	By Lemma 1 of \cite{faury2020improved} we have that  $\mbb{P}(E_\delta) \geq 1-\delta$. 
	From the demonstration of Lemma 2 from \cite{abeille2020instance} it can also be extracted that if $E_\delta$ holds then for any $t\geq 1$: 
	\begin{align}
		\Hmat_t(\theta_\star)&\preceq \left(1+\gamma_t(\delta)/\lambda_t(\delta) + \sqrt{\gamma_t(\delta)/\lambda_t(\delta)}\right)\Gmat_t(\hat\theta_{t+1})\notag\\ 
		&=  \left(5/2+(S+3/2)^2 + S\right)\Gmat_t(\hat\theta_{t+1}) \;. \label{eq:fromabeille}
	\end{align}
	Finally, recall that by the mean-value theorem we have the following identity for any $\theta$:
	\begin{align}\label{eq:mvt}
		g_t(\theta) - g_t(\theta_\star) = \Gmat_t(\theta) (\theta-\theta_\star)\; .
	\end{align}
	We conclude by chaining inequalities, assuming that $E_\delta$ holds (which happens with probability at least $1-\delta$);
	\begin{align*}
		\left \| \theta_\star - \hat\theta_{t+1} \right\|^2_{\Hmat_{t}(\theta_\star)} &\leq\left(5/2+(S+3/2)^2 + S\right)\left\| \theta_\star - \hat\theta_{t+1} \right\|^2_{\Gmat_{t}(\hat\theta_{t+1}) } &(\text{\cref{eq:fromabeille}})\\
		 &=\left(5/2+(S+3/2)^2 + S\right)\left\| g_t(\theta_\star) - g_t(\hat\theta_{t+1}) \right\|^2_{\Gmat_{t}(\hat\theta_{t+1})^{-1} } &(\text{\cref{eq:mvt}})\\
		 &\leq \left(5/2+(S+3/2)^2 + S\right)^2 \left\| g_t(\theta_\star) - g_t(\hat\theta_{t+1}) \right\|^2_{\Hmat_{t}(\theta_\star)^{-1} }&(\text{\cref{eq:fromabeille}}) \\
		 &\leq\left(5/2+(S+3/2)^2 + S\right)^2\gamma_t(\delta) =  \beta_t(\delta)\; , &(E_\delta \text{ holds})
	\end{align*}
	which proves the announced result. 
 \end{proof}

 \subsection{Statement of \cref{thm:conf_set}}\label{app:our_conf_set}
 
 The goal of this section is to justify the confidence sets used in the main paper through the statement of the more general \cref{thm:conf_set} (see below). In particular, we deal here with the optimization errors introduced when running the \texttt{ECOLog} procedure.
 
 \setcounter{algorithm}{3}
\begin{algorithm}[h!]
   \caption{\underline{E}ffi\underline{c}ient L\underline{o}cal Learning for \underline{Log}istic Bandits (\texttt{ECOLog}, sequential form)}
   \label{alg:ecolog2}
	\begin{algorithmic}
	\REQUIRE{Compact convex sets $\{\Theta_t\}_t$, optimization accuracies $\{\eps_t\}_t$.}
	\STATE Let $\Wmat_1 \leftarrow \mbold{I}_d$, $\theta'_1\in\Theta_1$. \hfill\COMMENT{initialization} 
	\STATE Let $D \leftarrow \sup_{t\geq 1} \diam_\mcal{A}(\Theta_t)$.
   \FOR{$t\geq 1$}
   \STATE Receive the pair $(a_t, r_{t+1})$.
   \STATE Define $\theta_{t+1}$ as:
   $$
   	\theta_{t+1} = \argmin_{\theta\in\Theta_t} \left(\frac{1}{2+D}\elltwo{\theta-\theta'_t}^2_{\Wmat_t} + \ell_{t+1}(\theta)\right)\; .
   $$
   \STATE Compute $\theta'_{t+1}$ by solving the above program to accuracy $\eps_t$.
   \STATE Update $\Wmat_{t+1} \leftarrow \Wmat_t + \dot\mu(a_t\transp\theta'_{t+1})a_ta_t\transp$.
   \ENDFOR
\end{algorithmic}
\end{algorithm}

We detail in \cref{alg:ecolog2} the pseudo-code for \texttt{ECOLog} in its sequential form. It takes as input a sequence of compact convex sets $\{\Theta_t\}_t$ and a sequence $\{\eps_t\}_t$ of optimization accuracy. Note the use of:
\begin{align*}
	D \defeq \sup_{t\geq 1} \diam_\mcal{A}(\Theta_t)
\end{align*}
If this quantity is unknown, $D$ is replaced by an upper-bound on the supremum (the tighter, the better). Our use of \texttt{ECOLog} in both \cref{alg:ofuecolog,alg:dilutedofuecolog} falls under this general description. For instance \cref{alg:ofuecolog} instantiates this procedure with $\Theta_t \equiv \Theta$ (the set returned by the warm-up) for which $D\leq1$ (see \cref{prop:diameter_warmup}). 

We assume that at each round $t\geq 1$ the true minimizer:
\begin{align}
	\theta_{t+1} = \argmin_{\theta\in\Theta_t} \left(\frac{1}{2+D}\elltwo{\theta-\theta_t}^2_{\Wmat_t} + \ell_{t+1}(\theta)\right)\label{eq:def_thetap}\; ,
\end{align}
can computed up to accuracy $\eps_t$. In other words, we have access to $\theta'_{t+1}$ such that:
\begin{align}
	\left\| \theta_{t+1}-\theta'_{t+1}\right\|  \leq \eps_t\; .\label{eq:eps_approx}
\end{align}
We discuss in \cref{app:cost} how such $\theta'_{t+1}$ can be efficiently computed. 
We denote $\{(\theta'_{t+1},\Wmat_{t+1})\}_t$ the sequence of parameters maintained by \ecolog{\{\Theta_t\}_t,\,\{\eps_t\}_t} and claim the following concentration bound. The function $\nu_t(\delta)$ is defined in \cref{eq:def_nu}. Numerical constants can be improved by a more careful analysis. 

\begin{restatable}{thm}{thmconfset}\label{thm:conf_set}
	Let $\delta\in(0,1]$ and assume that $\theta_\star \in \Theta_t$ for all $t\geq 1$. Then:
	\begin{align*}
		\mbb{P}\left(\forall t\geq 1, \; \elltwo{\theta'_{t+1}-\theta_\star}_{\Wmat_{t+1}}^2 \leq  8S^2 + 4\sum_{s=1}^{t} s\varepsilon_s^2 + 2D^2  + (2+D)^2\nu_t(\delta)/2 + 2(2+D)^2\exp(D)d\log(1+t/(4d))\right)\\\geq 1-\delta\; .
	\end{align*}
\end{restatable}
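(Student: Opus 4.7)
The plan is to follow the sketch given in Section~\ref{sec:ecolog_disc} and make each step quantitative. The starting point is a \emph{local} quadratic lower bound for the logistic loss: applying \eqref{eq:mvt_ell} together with the self-concordance inequality \eqref{eq:link_alpha_tilde}, for any $\theta\in\Theta_s$ we have
\begin{align*}
\ell_{s+1}(\theta_\star)\;\geq\;\ell_{s+1}(\theta)+\nabla\ell_{s+1}(\theta)^{\!\top}(\theta_\star-\theta)+\tfrac{1}{2+D}\dot\mu(a_s^{\!\top}\theta)\bigl(a_s^{\!\top}(\theta_\star-\theta)\bigr)^2,
\end{align*}
using that $|a_s^{\!\top}(\theta-\theta_\star)|\leq D$ since $\theta_\star,\theta\in\Theta_s$. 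I would then chain this with the first-order optimality condition for $\theta_{s+1}$ in~\eqref{eq:def_thetap} (applied at the feasible point $\theta_\star\in\Theta_s$) to obtain, after telescoping, an inequality of the form
\begin{align*}
\tfrac{1}{2+D}\|\theta_\star-\theta_{t+1}\|_{\Wmat_{t+1}}^{2}\;\lesssim\;\|\theta_\star-\theta_1\|_{\Wmat_1}^{2}+\sum_{s=1}^{t}\bigl[\ell_{s+1}(\theta_\star)-\ell_{s+1}(\theta_{s+1})\bigr],
\end{align*}
where I absorb the exact $\theta_{s+1}$ weighting in $\Wmat_{t+1}$ by using the update rule $\Wmat_{t+1}=\Wmat_t+\dot\mu(a_t^{\!\top}\theta'_{t+1})a_ta_t^{\!\top}$ (this is why the algorithm scales the proximal term by $1/(2+D)$). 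The initial term produces the $8S^2$ contribution through $\|\theta_1-\theta_\star\|\leq 2S$.

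The heart of the argument is then to control $\sum_s\ell_{s+1}(\theta_\star)-\ell_{s+1}(\theta_{s+1})$. Following the sketch, I introduce the $\mcal{F}_s$-measurable anchor
\begin{align*}
\bar\theta_s=\argmin_{\theta\in\Theta_s}\Bigl[\tfrac{1}{2+D}\|\theta-\theta'_s\|_{\Wmat_s}^{2}+\ell(a_s^{\!\top}\theta,0)+\ell(a_s^{\!\top}\theta,1)\Bigr],
\end{align*}
and split the sum as in~\eqref{eq:loss_decomp_main}. The first piece, $\sum_s\ell_{s+1}(\theta_\star)-\ell_{s+1}(\bar\theta_s)$, is a martingale-type term: since $\bar\theta_s$ is $\mcal{F}_s$-measurable, I apply the Bernoulli log-loss concentration inequality of~\cite[Theorem~1]{faury2020improved}, which directly yields the $\nu_t(\delta)/2$ contribution with probability at least $1-\delta$.

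For the second piece $\sum_s\ell_{s+1}(\bar\theta_s)-\ell_{s+1}(\theta_{s+1})$, I use convexity of $\ell_{s+1}$ together with the optimality conditions that $\theta_{s+1}$ and $\bar\theta_s$ satisfy for their respective proximal problems to obtain a bound of the form $\dot\mu(a_s^{\!\top}\bar\theta_s)\|a_s\|_{\Wmat_{s+1}^{-1}}^{2}$ (a standard ONS-style bound). Then I convert $\dot\mu(a_s^{\!\top}\bar\theta_s)$ into $\dot\mu(a_s^{\!\top}\theta'_{s+1})$ via the self-concordance link \eqref{eq:link_mu}, which incurs the $\exp(D)$ factor and crucially only requires the \emph{local} diameter bound $D$, not $\kappa$. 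Summing and applying the Elliptical Potential Lemma (\cref{lemma:ellipticalpotential}) to $\sum_s\dot\mu(a_s^{\!\top}\theta'_{s+1})\|a_s\|_{\Wmat_{s+1}^{-1}}^{2}$ produces the final $d\log(1+t/(4d))$ term.

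The last ingredient is accounting for the optimization error $\|\theta_{s+1}-\theta'_{s+1}\|\leq\varepsilon_s$. I would propagate it through two places: (i) replacing $\theta_{s+1}$ by $\theta'_{s+1}$ in the definition of $\Wmat_{t+1}$ introduces an $O(\varepsilon_s)$ perturbation of each rank-one update (bounded using $|\dot\mu'|\leq 1$ and $\|a_s\|\leq 1$), and (ii) the first-order optimality step above becomes a $\varepsilon_s$-approximate optimality. Tracking both perturbations and using the elementary inequality $ab\leq a^2+b^2/4$ to re-absorb the cross terms yields the additive $4\sum_{s=1}^{t}s\varepsilon_s^{2}$ and the $2D^2$ slack (a bound on one term involving $\|a_s^{\!\top}(\theta_{s+1}-\theta'_{s+1})\|$ that can be loosened to $D$ for free). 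The main technical obstacle I anticipate is this bookkeeping of the approximation errors while preserving the clean structure of the ONS-style argument; the remaining steps are algebraic consequences of self-concordance and the Elliptical Potential Lemma.
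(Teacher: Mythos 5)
Your proposal is correct and follows essentially the same route as the paper's proof: the same local quadratic lower bound (\cref{lemma:local_lb}), the same first-order-optimality-plus-telescoping reduction of $\elltwo{\theta_\star-\theta_{t+1}}^2_{\Wmat_{t+1}}$ to the excess loss (\cref{prop:kktapplied,lemma:decomp}), the same antipodally regularized anchor $\bar\theta_s$ split, with \citet[Theorem 1]{faury2020improved} for the martingale part and the ONS-style gradient-product bound, self-concordance, and the elliptical potential for the rest. The only (harmless) bookkeeping discrepancy is that in the paper the $2D^2$ slack comes from absorbing the self-normalized martingale deviation against the negative curvature term in \cref{lemma:concentration_loss}, not from the optimization-error propagation, which the paper handles more simply than you anticipate by centering each proximal step at the computed iterate $\theta'_s$, so the KKT condition remains exact and the errors enter only as $\bigo{s\eps_{s-1}^2}$ perturbations in the telescoping.
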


\subsection{Proof of \cref{thm:conf_set}}

An important technical piece of our analysis resides in the following Lemma, which derives a \emph{local} quadratic lower-bound for the logistic loss $\ell_{t+1}(\theta)$. It is extracted from the self-concordance analysis of \cite{abeille2020instance}. A slightly stronger form, derived through other means, also appears in \cite{jezequel2020efficient}. The proof is deferred to \cref{app:proofquadlb}.

\begin{restatable}[Local Quadratic Lower-Bound]{prop}{lemmaquadlowerbound}
\label{lemma:local_lb}
For all $t\geq 1$ and any $\theta,\theta_r\in\Theta_t$:
\begin{align*}
	\ell_{t+1}(\theta) \geq \ell_{t+1}(\theta_r) + \nabla\ell_{t+1}(\theta_r)\transp(\theta-\theta_r) + \frac{\dot\mu(a_t\transp\theta_r)}{2+\diam_\mcal{A}(\Theta_t)}(a_t\transp(\theta-\theta_r))^2\; .
\end{align*}
\end{restatable}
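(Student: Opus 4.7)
The plan is to derive the bound directly from the exact second-order Taylor expansion of the logistic loss, combined with the self-concordance inequality recalled in the preliminaries.

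First, I would apply the identity \eqref{eq:mvt_ell}, which (being an exact Taylor expansion with integral remainder) yields
\begin{align*}
\ell_{t+1}(\theta) - \ell_{t+1}(\theta_r) = \nabla\ell_{t+1}(\theta_r)^\top(\theta-\theta_r) + \widetilde{\alpha}(a_t^\top\theta_r, a_t^\top\theta)\,(a_t^\top(\theta-\theta_r))^2.
\end{align*}
This reduces the claim to showing that the remainder coefficient satisfies
\begin{align*}
\widetilde{\alpha}(a_t^\top\theta_r, a_t^\top\theta) \;\geq\; \frac{\dot\mu(a_t^\top\theta_r)}{2 + \diam_{\mathcal{A}}(\Theta_t)}.
\end{align*}

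Next, I would invoke the self-concordance bound \eqref{eq:link_alpha_tilde} with $x = a_t^\top\theta_r$ and $y = a_t^\top\theta$, which gives
\begin{align*}
\widetilde{\alpha}(a_t^\top\theta_r, a_t^\top\theta) \;\geq\; \frac{\dot\mu(a_t^\top\theta_r)}{2 + |a_t^\top(\theta-\theta_r)|}.
\end{align*}
The final step is purely a definitional one: since $\theta, \theta_r \in \Theta_t$ and $a_t \in \mathcal{A}$, the definition of $\diam_{\mathcal{A}}(\Theta_t)$ immediately implies $|a_t^\top(\theta-\theta_r)| \leq \diam_{\mathcal{A}}(\Theta_t)$, so the denominator above is upper-bounded by $2 + \diam_{\mathcal{A}}(\Theta_t)$. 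Combining these three ingredients yields the claim.

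There is no real obstacle here; the result is essentially a repackaging of the self-concordance machinery of \cite{abeille2020instance}. The only point worth emphasising is the \emph{localised} character of the bound: the curvature term depends on $\dot\mu$ evaluated at the reference point $\theta_r$ rather than on a global constant like $1/\kappa$, and the price paid is the factor $(2+\diam_{\mathcal{A}}(\Theta_t))^{-1}$. This is precisely what makes the bound useful downstream, since the warm-up (or the adaptive mechanism) controls $\diam_{\mathcal{A}}(\Theta_t)$ by a constant, so the denominator becomes an absolute constant independent of problem-dependent quantities.
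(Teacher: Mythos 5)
Your proof is correct and takes essentially the same route as the paper's own argument: an exact second-order Taylor expansion via \cref{eq:mvt_ell}, followed by the self-concordance lower bound \cref{eq:link_alpha_tilde} on the integral remainder, and finally the definition of $\diam_{\mathcal{A}}(\Theta_t)$ to bound $\vert a_t\transp(\theta-\theta_r)\vert$. If anything, you are slightly more careful than the paper's write-up about the order of the arguments of $\widetilde\alpha$ (which, unlike $\alpha$, is not symmetric): applying \cref{eq:link_alpha_tilde} with $x=a_t\transp\theta_r$ correctly yields $\dot\mu$ evaluated at the reference point, as the statement requires.
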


Another important intermediary result is given by the following Lemma. It is obtained by directly leveraging the update rule. The proof is deferred to \cref{app:kktapplied}. 

\begin{restatable}{lemma}{propkktapplied}\label{prop:kktapplied}
At any round $t\geq 1$:
\begin{align*}
	\nabla\ell_{t+1}(\theta_{t+1})\transp(\theta_{t+1}-\theta_\star) \leq (1+D/2)^{-1}(\theta_{t+1}-\theta'_t)\transp\Wmat_t(\theta_\star-\theta_{t+1}) \; .
\end{align*}
\end{restatable}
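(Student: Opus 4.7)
The plan is straightforward: this is a direct application of the first-order optimality (variational) inequality for a constrained convex program, combined with a one-line gradient computation. There is essentially no ``hard part''; the only thing to be careful about is confirming that $\theta_\star$ is an admissible test point.

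First, I observe that the objective
\begin{align*}
f_t(\theta) \defeq (2+D)^{-1}\|\theta-\theta'_t\|^2_{\Wmat_t} + \ell_{t+1}(\theta)
\end{align*}
is convex and differentiable on $\mbb{R}^d$: the quadratic part is convex because $\Wmat_t\succeq 0$ (it starts at $\Imat_d$ and accumulates PSD rank-one updates), and the logistic log-loss is convex. By definition $\theta_{t+1}$ is the minimizer of $f_t$ over the compact convex set $\Theta_t$. The standard first-order optimality condition for a differentiable convex minimization over a convex feasible set then yields the variational inequality
\begin{align*}
\nabla f_t(\theta_{t+1})\transp(\theta - \theta_{t+1}) \geq 0, \qquad \forall\, \theta\in\Theta_t.
\end{align*}
Since the lemma is used inside the proof of \cref{thm:conf_set}, whose hypothesis assumes $\theta_\star\in\Theta_t$ for every $t\geq 1$, I may legitimately substitute $\theta=\theta_\star$ into this inequality.

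Second, I compute the gradient of the quadratic regularizer explicitly: $\nabla_\theta\bigl[(2+D)^{-1}\|\theta-\theta'_t\|^2_{\Wmat_t}\bigr] = \tfrac{2}{2+D}\Wmat_t(\theta-\theta'_t) = (1+D/2)^{-1}\Wmat_t(\theta-\theta'_t)$, so
\begin{align*}
\nabla f_t(\theta_{t+1}) = (1+D/2)^{-1}\Wmat_t(\theta_{t+1}-\theta'_t) + \nabla \ell_{t+1}(\theta_{t+1}).
\end{align*}
Plugging this into the variational inequality at $\theta=\theta_\star$, rearranging to isolate the log-loss gradient term, and then multiplying by $-1$ to flip the inner-product direction on the left (turning $(\theta_\star - \theta_{t+1})$ into $(\theta_{t+1}-\theta_\star)$) gives exactly the displayed bound.

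The one subtlety worth flagging: the statement implicitly relies on $\theta_\star$ being feasible, i.e.\ $\theta_\star\in\Theta_t$. This is inherited from the global assumption of \cref{thm:conf_set}. If that failed, one would need to replace $\theta_\star$ by its projection onto $\Theta_t$ and absorb the resulting projection error, which would change the constants but not the qualitative form of the inequality. Everything else is bookkeeping: no self-concordance, no concentration, no probabilistic argument is required for this lemma --- it is purely a deterministic consequence of the KKT conditions for \cref{eq:def_thetap}.
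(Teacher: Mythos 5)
Your proof is correct and is exactly the paper's argument: apply the first-order optimality condition (\cref{fact:kkt}) for the convex program defining $\theta_{t+1}$ over the convex set $\Theta_t$, compute the gradient of the quadratic term as $(1+D/2)^{-1}\Wmat_t(\theta_{t+1}-\theta'_t)$, and test with $\theta=\theta_\star$, whose feasibility is indeed inherited from the standing assumption of \cref{thm:conf_set}. Nothing to change.
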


Combining \cref{lemma:local_lb,prop:kktapplied} yields the following result, tying the deviation between $\theta'_{t+1}$ and $\theta_\star$ with the excess loss incurred by  $\{\theta_{s+1}\}_{s=1}^t$. The proof is deferred to \cref{app:lemmadecomp}.

\begin{restatable}{lemma}{lemmadecomp}\label{lemma:decomp}
	For any $t\geq 1$ the following holds:
	\begin{align*}
		\elltwo{\theta'_{t+1}-\theta_\star}_{\Wmat_{t+1}}^2 \leq  4S^2 + 4\sum_{s=1}^{t} s\eps_s^2 + (4+2D)\left[\sum_{s=1}^{t} \ell_{s+1}(\theta_\star) - \ell_{s+1}(\theta_{s+1})\right] \; .
	\end{align*}
\end{restatable}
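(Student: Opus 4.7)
The plan is to derive a per-round inequality that admits telescoping, then sum it up while carefully tracking optimization errors. The starting point is to combine the two structural results available to us: the local quadratic lower bound (\cref{lemma:local_lb}) and the first-order optimality-like inequality (\cref{prop:kktapplied}). Applying \cref{lemma:local_lb} at $\theta=\theta_\star$ and $\theta_r=\theta_{t+1}$ produces, after rearrangement,
\[
\tfrac{\dot\mu(a_t\transp\theta_{t+1})}{2+D}\bigl(a_t\transp(\theta_\star-\theta_{t+1})\bigr)^2 \le \ell_{t+1}(\theta_\star)-\ell_{t+1}(\theta_{t+1}) + \nabla\ell_{t+1}(\theta_{t+1})\transp(\theta_{t+1}-\theta_\star),
\]
and \cref{prop:kktapplied} bounds the last gradient term by $\tfrac{1}{1+D/2}(\theta_{t+1}-\theta'_t)\transp\Wmat_t(\theta_\star-\theta_{t+1})$.

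The next step is to linearize the cross-term via the polarization identity $2u\transp\Wmat_t v = \|u+v\|^2_{\Wmat_t} - \|u\|^2_{\Wmat_t} - \|v\|^2_{\Wmat_t}$, applied with $u=\theta_{t+1}-\theta'_t$ and $v=\theta_\star-\theta_{t+1}$. Dropping the nonpositive $-\|u\|^2$ term and multiplying through by $(2+D)$ collapses the inequality into
\[
\bigl\|\theta_\star-\theta_{t+1}\bigr\|^2_{\tilde\Wmat_{t+1}} \le \bigl\|\theta_\star-\theta'_t\bigr\|^2_{\Wmat_t} + (2+D)\bigl[\ell_{t+1}(\theta_\star)-\ell_{t+1}(\theta_{t+1})\bigr],
\]
where $\tilde\Wmat_{t+1}\defeq \Wmat_t + \dot\mu(a_t\transp\theta_{t+1})a_ta_t\transp$ is the ``ideal'' design matrix formed with the \emph{exact} minimizer. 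This is precisely the telescoping form we want, except for the discrepancy between $(\theta_{t+1},\tilde\Wmat_{t+1})$ and $(\theta'_{t+1},\Wmat_{t+1})$.

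Reconciling these two pairs is where the $4\sum_s s\eps_s^2$ term enters. On the matrix side, the self-bounding property $|\ddot\mu|\le\dot\mu\le 1/4$ makes $\dot\mu$ Lipschitz, so $|\dot\mu(a_t\transp\theta_{t+1})-\dot\mu(a_t\transp\theta'_{t+1})|\lesssim \eps_t$ and hence $\tilde\Wmat_{t+1}$ and $\Wmat_{t+1}$ differ by a rank-one perturbation of spectral norm $O(\eps_t)$. On the vector side, a triangle inequality plus Young converts $\|\theta_\star-\theta_{t+1}\|^2_{\Wmat_{t+1}}$ into $\|\theta_\star-\theta'_{t+1}\|^2_{\Wmat_{t+1}}$ up to an additive $\|\theta_{t+1}-\theta'_{t+1}\|^2_{\Wmat_{t+1}}$. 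The key quantitative observation is that $\Wmat_{t+1} \preceq \mbold{I}_d + \tfrac14\sum_{s\le t}a_sa_s\transp$, so $\|\Wmat_{t+1}\|_{\mathrm{op}}\lesssim t$; consequently every optimization error $\eps_s$ contributes at most $s\,\eps_s^2$ once the $\Wmat$ norm is unrolled back to $s$ — giving exactly the claimed $\sum_{s=1}^t s\eps_s^2$ shape.

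Chaining the corrected recursion from $t$ down to $s=1$ telescopes the $\|\theta_\star-\theta'_s\|^2_{\Wmat_s}$ terms, leaves a cumulative loss sum on the right, and produces the base contribution $\|\theta'_1-\theta_\star\|^2_{\Wmat_1} = \|\theta'_1-\theta_\star\|^2\le(2S)^2=4S^2$ since $\Wmat_1=\mbold{I}_d$ and both $\theta'_1,\theta_\star$ lie in the $S$-ball. Adjusting absolute constants through the various Young/triangle steps yields the stated $4S^2 + 4\sum_{s=1}^t s\eps_s^2 + (4+2D)\sum_{s=1}^t[\ell_{s+1}(\theta_\star)-\ell_{s+1}(\theta_{s+1})]$ bound. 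The delicate part — and the one to execute carefully — is the error-propagation bookkeeping in the third paragraph: each polarization/triangle step must be performed in the right order so that the $\eps_s$ errors do not blow up the telescoping structure and so that the Lipschitz mismatch between $\tilde\Wmat$ and $\Wmat$ can be absorbed into the $s\eps_s^2$ accumulator rather than into the dominating excess-loss term.
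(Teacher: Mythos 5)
Your proposal matches the paper's proof essentially step for step: the same combination of \cref{lemma:local_lb} (applied at $\theta=\theta_\star$, $\theta_r=\theta_{s+1}$) with \cref{prop:kktapplied}, the same polarization of the cross term $(\theta_{s+1}-\theta'_s)\transp\Wmat_s(\theta_\star-\theta_{s+1})$ into telescoping $\Wmat_s$-norm differences with the nonpositive $-\|\theta_{s+1}-\theta'_s\|^2_{\Wmat_s}$ term dropped, the same absorption of the local curvature term into the updated matrix, the same $\lambda_{\max}(\Wmat_s)\lesssim s$ accounting converting each optimization error into an $s\eps_s^2$ contribution, the base term $\|\theta'_1-\theta_\star\|^2_{\Wmat_1}\leq 4S^2$ from $\Wmat_1=\mbold{I}_d$, and the final $(a+b)^2\leq 2(a^2+b^2)$ swap from $\theta_{t+1}$ to $\theta'_{t+1}$. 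If anything, you are slightly more explicit than the paper in reconciling the ideal matrix $\Wmat_t+\dot\mu(a_t\transp\theta_{t+1})a_ta_t\transp$ with the actual update $\Wmat_t+\dot\mu(a_t\transp\theta'_{t+1})a_ta_t\transp$ via the Lipschitzness of $\dot\mu$ --- a mismatch the paper's displayed chain silently identifies away --- so the approach is the same up to constant-level bookkeeping.
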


To obtain a valid confidence set from \cref{lemma:decomp} we are left to bound $\sum_{s=1}^{t} \ell_{s+1}(\theta_\star)-\ell_{s+1}(\theta_{s+1})$. To do so, and inspired by the analysis of \cite{jezequel2020efficient} in the online convex optimization setting, we introduce:
\begin{align}\label{eq:def_theta_bar}
	\bar\theta_t \defeq \argmin_{\Theta} \left(\frac{1}{2+D}\elltwo{\theta-\theta_t}_{\Wmat_t}^2 + \ell(a_t\transp\theta,0)+\ell(a_t\transp\theta,1)\right)\; .
\end{align}
Note that $\bar\theta_t$ is $\mcal{F}_t$-measurable ($\theta_{t+1}$ is $\mcal{F}_{t+1}$ measurable). We rely on the following decomposition and bound each term of the r.h.s separately:
\begin{align}
	\sum_{s=1}^{t} \ell_{s+1}(\theta_\star)-\ell_{s+1}(\theta_{s+1}) = \left[\sum_{s=1}^{t} \ell_{s+1}(\theta_\star)-\ell_{s+1}(\bar\theta_{s})\right] + \left[\sum_{s=1}^{t} \ell_{s+1}(\bar\theta_s)-\ell_{s+1}(\theta_{s+1})\right]\; .
	\label{eq:loss_decomp}
\end{align}

The first term is bounded with high probability as stated below. The proof is deferred to \cref{app:concentration_loss} and uses a 1-dimensional version of a concentration result from \cite{faury2020improved}. 
\begin{restatable}{lemma}{lemmaconcentrationloss}\label{lemma:concentration_loss}
Let $\delta\in(0,1]$. We have: 
\begin{align*}
	\mbb{P}\left(\forall t\geq 1, \; \sum_{s=1}^t \ell_{s+1}(\theta_\star)-\ell_{s+1}(\bar\theta_s) \leq (2+D)\nu_t(\delta)/4 + D^2(2+D)^{-1}\right)\geq 1-\delta'\; .
\end{align*}
\end{restatable}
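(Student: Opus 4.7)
My plan is to (i) decompose each per-round log-loss difference $Z_s := \ell_{s+1}(\theta_\star) - \ell_{s+1}(\bar\theta_s)$ into a negative curvature term plus a predictable-weighted martingale increment, (ii) bound the cumulative martingale via a one-dimensional self-normalized inequality, and (iii) absorb the fluctuation into the curvature term through AM--GM.

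First, I would write $\ell(x, r) = m(x) - r x$ with $m(x) := \log(1+e^x)$, so $m' = \mu$ and $m'' = \dot\mu$. A second-order Taylor expansion of $m$ around $a_s^\top \theta_\star$ evaluated at $a_s^\top\bar\theta_s$, combined with the decomposition $r_{s+1} = \mu(a_s^\top\theta_\star) + \epsilon_s$, yields, with $d_s := a_s^\top(\theta_\star - \bar\theta_s)$ and $\epsilon_s := r_{s+1}-\mu(a_s^\top\theta_\star)$:
\[
Z_s \;=\; -\tilde\alpha(a_s^\top\theta_\star,\, a_s^\top\bar\theta_s)\,d_s^2 \;-\; \epsilon_s d_s.
\]
Since $\theta_\star,\bar\theta_s \in \Theta_s$ we have $|d_s| \leq D$, and invoking self-concordance (\cref{eq:link_alpha_tilde}) gives $\tilde\alpha(a_s^\top\theta_\star, a_s^\top\bar\theta_s) \geq (2+D)^{-1}\dot\mu(a_s^\top\theta_\star)$. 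Summing over $s \leq t$, and denoting $V_t := \sum_{s \leq t}\dot\mu(a_s^\top\theta_\star)\,d_s^2$ and $S_t := \sum_{s \leq t} \epsilon_s d_s$ (with $(d_s)$ predictable w.r.t.\ $\mcal{F}_s$ because $\bar\theta_s$ is $\mcal{F}_s$-measurable), yields
\[
\sum_{s=1}^t Z_s \;\leq\; -\frac{V_t}{2+D} + |S_t|.
\]

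Second, I would invoke a one-dimensional analogue of the self-normalized concentration of \citet[Theorem 1]{faury2020improved}, applied to the scalar predictable sequence $(d_s)$ and the bounded conditionally centered noise $(\epsilon_s)$ of conditional variance $\dot\mu(a_s^\top\theta_\star)$, with regularizer $\lambda = D^2$. This should yield, with probability at least $1-\delta$ uniformly over $t \geq 1$,
\[
S_t^2 \;\leq\; (D^2 + V_t)\,\nu_t(\delta),
\]
and the precise shape of $\nu_t(\delta) = 1/2 + 2\log(2\sqrt{1+t/4}/\delta)$ emerges because the log-determinant-type term $\tfrac12\log(1 + V_t/D^2)$ is bounded a priori by $\tfrac12\log(1 + t/4)$ (using $\dot\mu \leq 1/4$ and $|d_s|\leq D$). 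I expect this step to be the main obstacle: one must carefully state the scalar Bernstein-type self-normalized inequality, ensure uniformity in $t$, and track constants so that the resulting bound matches $\nu_t(\delta)$ exactly.

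Third, I would close with AM--GM using tuning parameter $(2+D)/2$:
\[
\sqrt{(D^2 + V_t)\,\nu_t(\delta)} \;\leq\; \frac{D^2 + V_t}{2+D} + \frac{(2+D)\nu_t(\delta)}{4}.
\]
Substituting into the previous bound, the $V_t/(2+D)$ terms cancel and one obtains $\sum_{s \leq t} Z_s \leq D^2/(2+D) + (2+D)\nu_t(\delta)/4$, which is exactly the claimed inequality.
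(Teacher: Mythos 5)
Your proposal is correct and follows essentially the same route as the paper: the exact second-order Taylor decomposition of $Z_s$ with the self-concordant bound $\tilde\alpha \geq (2+D)^{-1}\dot\mu(a_s\transp\theta_\star)$ is precisely what the paper packages as \cref{lemma:local_lb}, the one-dimensional application of \citet[Theorem 1]{faury2020improved} to the predictable sequence $(d_s)$ matches the paper's (the paper normalizes $x_s = d_s/D$ with $\lambda = 1$ rather than using $\lambda = D^2$, which is equivalent by homogeneity and sidesteps the theorem's requirement $\vert x_s\vert \leq 1$), and your AM--GM with $\zeta = (2+D)/2$ coincides with the paper's choice $\zeta = 2D(2+D)^{-1}$ after accounting for the factor $D$. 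The obstacle you flag in step (ii) is real but benign: the rescaled application gives exactly $S_t^2 \leq (D^2+V_t)\,\nu_t(\delta)$ with $\nu_t(\delta)$ as defined in \cref{eq:def_nu}, just as in the paper.
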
 

We now turn on bounding the second term in \cref{eq:loss_decomp} - that is:
\begin{align*}
	\sum_{s=1}^{t} \ell_{s+1}(\bar\theta_s)-\ell_{s+1}(\theta_{s+1}) \; .
\end{align*}
We claim the following intermediary result, which proof is deferred to \cref{app:last_term_int}.
\begin{restatable}{lemma}{lemmalasttermint}\label{lemma:last_term_int}
	The following result holds for any $t\geq 1$:
	\begin{align*}
		\sum_{s=1}^{t}\ell_{s+1}(\bar\theta_s)-\ell_{s+1}(\theta_{s+1}) \leq (1+D)\sum_{s=1}^t \dot\mu(a_s\transp\bar\theta_s)\elltwo{a_s}^2_{\Wmat_{s+1}^{-1}}\; .
	\end{align*}
\end{restatable}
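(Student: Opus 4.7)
The bound is a per-round stability estimate: $\theta_{s+1}$ and $\bar\theta_s$ arise from the same proximal/prior term $\frac{1}{2+D}\|\theta-\theta_s\|^2_{\Wmat_s}$ but with loss $\ell_{s+1}$ versus $\ell_{s+1}+\bar\ell_{s+1}$, so they should only differ along the $a_s$ direction, by an amount driven by the gradient $\nabla\bar\ell_{s+1}(\bar\theta_s)$. The plan is to convert this stability, combined with convexity of $\ell_{s+1}$, into the target bound, and then to switch from $\Wmat_s^{-1}$ to $\Wmat_{s+1}^{-1}$ via Sherman--Morrison.

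\textbf{Step 1 (convexity and a magic identity).} Convexity of $\ell_{s+1}$ yields $\ell_{s+1}(\bar\theta_s)-\ell_{s+1}(\theta_{s+1})\le \nabla\ell_{s+1}(\bar\theta_s)^\top(\bar\theta_s-\theta_{s+1}) = (\mu(a_s^\top\bar\theta_s)-r_{s+1})\,a_s^\top(\bar\theta_s-\theta_{s+1})$. The crux is that for $r_{s+1}\in\{0,1\}$ one has the algebraic identity $(\mu(x)-r)(\mu(x)-(1-r))=-\dot\mu(x)$, so $|\mu(a_s^\top\bar\theta_s)-r_{s+1}|\cdot|\mu(a_s^\top\bar\theta_s)-(1-r_{s+1})|=\dot\mu(a_s^\top\bar\theta_s)$. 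This is precisely where the $\dot\mu(a_s^\top\bar\theta_s)$ factor in the bound comes from, provided we can control $|a_s^\top(\bar\theta_s-\theta_{s+1})|$ by a quantity proportional to $|\mu(a_s^\top\bar\theta_s)-(1-r_{s+1})|$.

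\textbf{Step 2 (KKT stability).} I would combine the two first-order optimality conditions $\nabla F_s(\theta_{s+1})^\top(\bar\theta_s-\theta_{s+1})\ge 0$ and $\nabla G_s(\bar\theta_s)^\top(\theta_{s+1}-\bar\theta_s)\ge 0$ (with $F_s$ and $G_s=F_s+\bar\ell_{s+1}$ denoting the objectives of $\theta_{s+1}$ and $\bar\theta_s$ respectively). Adding them, using the mean-value expression $\nabla\ell_{s+1}(\theta_{s+1})-\nabla\ell_{s+1}(\bar\theta_s) = -\alpha\cdot a_s^\top(\bar\theta_s-\theta_{s+1})\,a_s$ with $\alpha\ge 0$, and dropping that nonnegative quadratic term, produces
\[
\tfrac{2}{2+D}\|\bar\theta_s-\theta_{s+1}\|^2_{\Wmat_s} \;\le\; -\nabla\bar\ell_{s+1}(\bar\theta_s)^\top(\bar\theta_s-\theta_{s+1}).
\]
A Cauchy--Schwarz in the $\Wmat_s^{-1}/\Wmat_s$ pairing then yields $\|\bar\theta_s-\theta_{s+1}\|_{\Wmat_s}\le \tfrac{2+D}{2}|\mu(a_s^\top\bar\theta_s)-(1-r_{s+1})|\,\|a_s\|_{\Wmat_s^{-1}}$ and hence $|a_s^\top(\bar\theta_s-\theta_{s+1})|\le \tfrac{2+D}{2}|\mu(a_s^\top\bar\theta_s)-(1-r_{s+1})|\,\|a_s\|^2_{\Wmat_s^{-1}}$.

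\textbf{Step 3 (combine and switch $\Wmat_s\to\Wmat_{s+1}$).} Plugging the bound from Step 2 into Step 1 and applying the magic identity gives $\ell_{s+1}(\bar\theta_s)-\ell_{s+1}(\theta_{s+1})\le \tfrac{2+D}{2}\dot\mu(a_s^\top\bar\theta_s)\,\|a_s\|^2_{\Wmat_s^{-1}}$. The final step is Sherman--Morrison on $\Wmat_{s+1}=\Wmat_s+\dot\mu(a_s^\top\theta'_{s+1})a_sa_s^\top$, giving $\|a_s\|^2_{\Wmat_s^{-1}} = \|a_s\|^2_{\Wmat_{s+1}^{-1}}/(1-\dot\mu(a_s^\top\theta'_{s+1})\|a_s\|^2_{\Wmat_{s+1}^{-1}})$. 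Since $\Wmat_{s+1}\succeq\mbold{I}_d$ and $\|a_s\|\le 1$ give $\|a_s\|^2_{\Wmat_{s+1}^{-1}}\le 1$, and $\dot\mu\le 1/4$, the denominator is bounded below so that $\tfrac{2+D}{2}/(1-\dot\mu\|a_s\|^2_{\Wmat_{s+1}^{-1}}) \le 1+D$. Summing the resulting per-round bound over $s\in[1,t]$ yields the claim.

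\textbf{Main obstacle.} The delicate piece is getting the tight $(1+D)$ constant out of the Sherman--Morrison conversion — naively the bound in Step 2 is expressed in $\Wmat_s^{-1}$, and pushing it into $\Wmat_{s+1}^{-1}$ requires that the inflation ratio stay within the $(2+D)/2 \to (1+D)$ budget, which is exactly what the uniform smoothness bounds $\dot\mu\le 1/4$, $\|a_s\|\le 1$ and the warm-up bound $D\le 1$ afford. The KKT step also has to be written with some care for constrained optima, which is handled by testing only feasible directions $\theta_{s+1}\in\Theta_s$ and $\bar\theta_s\in\Theta_s$ in the respective first-order conditions.
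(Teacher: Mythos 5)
Your skeleton is essentially the paper's: convexity of $\ell_{s+1}$, the two first-order optimality conditions for $\theta_{s+1}$ and $\bar\theta_s$ tested against each other (\cref{fact:kkt}), and the identity $\vert\mu(x)-r\vert\,\vert\mu(x)-(1-r)\vert=\dot\mu(x)$ for $r\in\{0,1\}$ to produce the factor $\dot\mu(a_s\transp\bar\theta_s)$. You diverge in one structural move: you \emph{drop} the monotonicity cross-term $\alpha(a_s\transp(\bar\theta_s-\theta_{s+1}))^2$ when adding the two KKT conditions, obtain everything in the $\Wmat_s^{-1}$ metric, and convert to $\Wmat_{s+1}^{-1}$ by Sherman--Morrison at the end. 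The paper instead keeps that term, lower-bounds it via self-concordance (\cref{eq:link_alpha}) by $(1+D)^{-1}\dot\mu(a_s\transp\theta_{s+1})(a_s\transp(\bar\theta_s-\theta_{s+1}))^2$ --- which is, up to the constant, exactly the rank-one increment defining $\Wmat_{s+1}$ --- so that, using $\tfrac{2}{2+D}\geq\tfrac{1}{1+D}$, the stability estimate comes out directly as $\|\bar\theta_s-\theta_{s+1}\|_{\Wmat_{s+1}}\leq(1+D)\|\nabla\bar\ell_{s+1}(\bar\theta_s)\|_{\Wmat_{s+1}^{-1}}$, and both Cauchy--Schwarz steps live in the $\Wmat_{s+1}$ pairing from the start, with no norm conversion needed.

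This difference is where your proof has a genuine gap. Your final claim $\tfrac{2+D}{2}\,(1-x)^{-1}\leq 1+D$, with $x=\dot\mu(a_s\transp\theta'_{s+1})\|a_s\|^2_{\Wmat_{s+1}^{-1}}$, is equivalent to $x\leq\tfrac{D}{2(1+D)}$. Your bounds $\dot\mu\leq 1/4$ and $\|a_s\|^2_{\Wmat_{s+1}^{-1}}\leq 1$ only give $x\leq 1/4$, and $\tfrac{D}{2(1+D)}\geq 1/4$ holds if and only if $D\geq 1$. So your argument proves the lemma only for $D\geq 1$: it happens to close (with equality) at $D=1$, the value used in the analysis of \texttt{OFU-ECOLog}, but the lemma is stated for arbitrary $D=\sup_t\diam_\mcal{A}(\Theta_t)$ and for $D<1$ the step is false as written. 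Moreover no sharpening of the crude bounds rescues it: as $D\to 0$ the target constant $1+D$ tends to $1$ while the Sherman--Morrison inflation $(1-x)^{-1}$ does not vanish (even the refined bound $x\leq\dot\mu\|a_s\|^2/(1+\dot\mu\|a_s\|^2)\leq 1/5$ still requires $D\geq 2/3$). Note also that your closing remark has the logic backwards --- the warm-up bound $D\leq 1$ is not what ``affords the budget''; it is precisely the problematic regime, and the conversion works only when $D\geq 1$. The repair is the paper's move: retain the cross-term with the self-concordant lower bound on $\alpha$, absorb it together with $\|\cdot\|^2_{\Wmat_s}$ into $\tfrac{1}{1+D}\|\bar\theta_s-\theta_{s+1}\|^2_{\Wmat_{s+1}}$, and avoid Sherman--Morrison altogether.
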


We finish the bound by the following result, a consequence of the self-concordance property of the logistic function. The proof is deferred to \cref{app:last_term_end}.  

\begin{restatable}{lemma}{lemmalasttermend}\label{lemma:last_term_end}
	The following result holds for any $t\geq 1$:
	\begin{align*}
		\sum_{s=1}^t \dot\mu(a_s\transp\bar\theta_s)\elltwo{a_s}^2_{\Wmat_{s+1}^{-1}} \leq \exp(D) d\log\left((1+t/(4d)\right) \; .
	\end{align*}
\end{restatable}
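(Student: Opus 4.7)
The plan is to first use self-concordance to swap $\bar\theta_s$ for $\theta'_{s+1}$ inside the reward-sensitivity factor, and then apply an Elliptical-Potential-style argument tailored to the fact that $\Wmat_{s+1}$ already includes the rank-one update built from $\theta'_{s+1}$.

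\textbf{Step 1 (self-concordance substitution).} Since $\bar\theta_s$ and $\theta'_{s+1}$ both lie in $\Theta_s$ (up to the optimization accuracy, which I will absorb into $D$ or treat as negligible), we have $|a_s\transp(\bar\theta_s-\theta'_{s+1})|\le \diam_\mcal{A}(\Theta_s)\le D$. Applying \cref{eq:link_mu} gives
\[
\dot\mu(a_s\transp\bar\theta_s)\le \dot\mu(a_s\transp\theta'_{s+1})\exp(D),
\]
so it suffices to bound $\sum_{s=1}^t \dot\mu(a_s\transp\theta'_{s+1})\|a_s\|^2_{\Wmat_{s+1}^{-1}}$.

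\textbf{Step 2 (log-det identity).} Set $b_s \defeq \sqrt{\dot\mu(a_s\transp\theta'_{s+1})}\,a_s$, so that by the definition of $\Wmat_{s+1}$ in \cref{alg:ecolog2} we have $\Wmat_{s+1}=\Wmat_s+b_sb_s\transp$. I will use Sherman--Morrison to obtain the exact identity
\[
\|b_s\|^2_{\Wmat_{s+1}^{-1}} = \frac{\|b_s\|^2_{\Wmat_s^{-1}}}{1+\|b_s\|^2_{\Wmat_s^{-1}}} = 1 - \frac{\det(\Wmat_s)}{\det(\Wmat_{s+1})},
\]
where the last equality uses the matrix determinant lemma. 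Combining with $1-x\le -\log x$ and telescoping yields
\[
\sum_{s=1}^t \|b_s\|^2_{\Wmat_{s+1}^{-1}} \le \log\frac{\det(\Wmat_{t+1})}{\det(\Wmat_1)}.
\]

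\textbf{Step 3 (determinant bound).} Since $\Wmat_1 = \mbold{I}_d$, $\|a_s\|\le 1$, and $\dot\mu\le 1/4$, the trace of $\Wmat_{t+1}$ is at most $d+t/4$. The AM--GM inequality on eigenvalues gives $\det(\Wmat_{t+1})\le (1+t/(4d))^d$, so
\[
\sum_{s=1}^t \dot\mu(a_s\transp\theta'_{s+1})\|a_s\|^2_{\Wmat_{s+1}^{-1}} \le d\log(1+t/(4d)).
\]
Multiplying by $\exp(D)$ from Step 1 closes the proof.

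The only non-routine point is the handling of the optimization error in Step 1: strictly, $\theta'_{s+1}$ is only $\eps_s$-close to a point of $\Theta_s$, so $|a_s\transp(\bar\theta_s-\theta'_{s+1})|$ may exceed $D$ by an $\eps_s$-term. I expect this to be absorbed harmlessly by taking $\eps_s$ small enough (as is done in \cref{thm:conf_set}), or by replacing $D$ with $D+\eps_s$ before exponentiating. Otherwise the argument is a clean combination of a self-concordance inequality with a single-step (rather than one-step-delayed) elliptical potential bound, which is why the factor $1/(4d)$ appears in the logarithm instead of the more common $1/d$.
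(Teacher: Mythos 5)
Your proof is correct and takes essentially the same route as the paper: the identical self-concordance substitution via \cref{eq:link_mu} (using $\bar\theta_s,\theta'_{s+1}\in\Theta_s$ and $D\geq\diam_\mcal{A}(\Theta_s)$), followed by the same log-det telescoping — the paper invokes the trace inequality $\text{Tr}\big(\Mmat_{s+1}^{-1}(\Mmat_{s+1}-\Mmat_s)\big)\leq \log\left(\det\Mmat_{s+1}/\det\Mmat_s\right)$ (Lemma 4.6 of \cite{hazan2016intro}) where you use Sherman--Morrison plus the matrix determinant lemma and $1-x\leq-\log x$, an equivalent derivation of the same bound — and the same determinant--trace step (\cref{lemma:determinant_trace_inequality} with $\|x_s\|^2\leq 1/4$) yielding $d\log(1+t/(4d))$. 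Your one flagged concern in Step 1 is moot: $\theta'_{s+1}$ is produced by \emph{projected} gradient descent and therefore lies in $\Theta_s$ itself, so $\vert a_s\transp(\bar\theta_s-\theta'_{s+1})\vert\leq D$ holds exactly with no $\eps_s$ inflation, exactly as the paper implicitly assumes.
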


Combining \cref{lemma:last_term_int,lemma:last_term_end} yields that:
\begin{align*}
	\sum_{s=1}^{t}\ell_{s+1}(\bar\theta_s)-\ell_{s+1}(\theta_{s+1}) \leq (1+D)\exp(D) d\log\left((1+t/(4d)\right)\; .
\end{align*}
Assembling this result with \cref{eq:loss_decomp} yields that $\forall t\geq 1$:
\begin{align*}
	  \sum_{s=1}^{t} \ell_{s+1}(\theta_\star)-\ell_{s+1}(\theta_{s+1}) \leq \left[\sum_{s=1}^{t} \ell_{s+1}(\theta_\star)-\ell_{s+1}(\bar\theta_{s})\right] + (1+D)\exp(D)  d\log\left((1+t/(4d)\right)\; .
\end{align*}
Thanks to \cref{lemma:concentration_loss} this further yields that with probability at least $1-\delta$:
\begin{align*}
	\forall t\geq 1, \;    \sum_{s=1}^{t} \ell_{s+1}(\theta_\star)-\ell_{s+1}(\theta_{s+1}) \leq (2+D)\nu_t(\delta)/4 + D^2(2+D)^{-1} +  (1+D)\exp(D) d\log\left((1+t/(4d)\right)\; .
\end{align*}
Assembling this result with \cref{lemma:decomp} along with simple bounding operations yield the announced result.

\subsubsection{Proof of \cref{lemma:local_lb}}\label{app:proofquadlb}
\lemmaquadlowerbound*

\begin{proof}
	By a exact second-order Taylor of $\ell_{t+1}(\theta)$ decomposition around $\theta_r$ yields (see \cref{eq:mvt_ell}):
	\begin{align*}
		\ell_{t+1}(\theta) = \ell_{t+1}(\theta_r) + \nabla\ell_{t+1}(\theta_r)\transp(\theta-\theta_r)+ \widetilde\alpha(a_t\transp\theta, a_t\transp\theta_r)(a_t\transp(\theta-\theta_r))^2\; ,
	\end{align*}
	Further by \cref{eq:link_alpha_tilde} we have:
	\begin{align*}
		 \widetilde\alpha(a_t\transp\theta, a_t\transp\theta_r) &\geq \dot{\mu}(a_t\transp\theta_r)/(2+\vert a_t\transp(\theta_r-\theta)\vert)&\, \\
		 &\geq  \dot{\mu}(a_t\transp\theta_r)/(2+\diam_\mcal{A}(\Theta_t))\; , &(\theta_r,\theta\in\Theta_t, \, \text{def. of }\diam_\mcal{A}(\Theta_t))
	\end{align*}
	which concludes the proof. 
\end{proof}

\subsubsection{Proof of \cref{prop:kktapplied}}\label{app:kktapplied}
\propkktapplied*

\begin{proof}
	Denote $\widetilde L_{t+1}(\theta) :=(2+D)^{-1}\elltwo{\theta-\theta'_t}^2_{\Wmat_t} + \ell_{t+1}(\theta)$ the function minimized by $\theta_{t+1}$ over $\Theta_t$. Since $\Theta_t$ is a convex set and $\widetilde L_{t+1}$ a convex function, we have that for any $\theta\in\Theta_t$ (see \cref{fact:kkt}); 
	\begin{align*}
		0 &\leq \nabla\widetilde L_{t+1}(\theta_{t+1})\transp(\theta-\theta_{t+1})\\
		&= \left((1+D/2)^{-1}\Wmat_t(\theta_{t+1}-\theta'_t) + \nabla \ell_{t+1}(\theta_{t+1})\right)\transp(\theta-\theta_{t+1})\\
		&= (1+D/2)^{-1}(\theta_{t+1}-\theta'_t)\transp\Wmat_t(\theta-\theta_{t+1}) + \nabla \ell_{t+1}(\theta_{t+1})\transp(\theta-\theta_{t+1})
	\end{align*}
Taking $\theta=\theta_\star\in\Theta_t$ (by assumption) in the above inequality yields the announced result. 
\end{proof}

\subsubsection{Proof of \cref{lemma:decomp}}\label{app:lemmadecomp}
\lemmadecomp*

\begin{proof}
	By \cref{lemma:local_lb}, and because $\theta_{t+1},\theta_\star\in\Theta_t$ (by construction for $\theta_{t+1}$ and by assumption for $\theta_\star$) the following holds for any $s\geq 1$:
	\begin{align*}
		\ell_{s+1}(\theta_\star) &\geq \ell_{s+1}(\theta_{s+1}) + \nabla\ell_{s+1}(\theta_{s+1})\transp(\theta_\star-\theta_{s+1})  + \frac{\mu(a_s\transp\theta_{s+1})}{2+\diam_\mcal{A}(\Theta_t)}(a_s\transp(\theta_\star-\theta_{s+1}))^2\\
		 &\geq \ell_{s+1}(\theta_{s+1}) + \nabla\ell_{s+1}(\theta_{s+1})\transp(\theta_\star-\theta_{s+1})  + \frac{\mu(a_s\transp\theta_{s+1})}{2+D}(a_s\transp(\theta_\star-\theta_{s+1}))^2\; . &(D\geq \diam_\mcal{A}(\Theta))
\end{align*}		
After re-arranging this yields:
\begin{align*}
	\ell_{s+1}(\theta_{s+1}) - \ell_{s+1}(\theta_\star) \leq \nabla\ell_{s+1}(\theta_{s+1})\transp(\theta_{s+1}-\theta_\star) - (2+D)^{-1} \dot{\mu}(a_s\transp\theta_{s+1})(a_s\transp(\theta_{s+1}-\theta_\star))^2\; .
\end{align*}
Using \cref{prop:kktapplied} in the above inequality gives:
\begin{align*}
	(1+D/2)\left(\ell_{s+1}(\theta_{s+1}) - \ell_{s+1}(\theta_\star)\right) &\leq (\theta_{s+1}-\theta'_s)\transp \Wmat_s(\theta_\star-\theta_{s+1}) -\frac{1}{2}\dot{\mu}(a_s\transp\theta_{s+1})(a_s\transp(\theta_{s+1}-\theta_\star))^2\; ,\\
	&= -\frac{1}{2}\elltwo{\theta_{s+1}-\theta_\star}_{\Wmat_{s}}^2 + \frac{1}{2}\elltwo{\theta'_{s}-\theta_\star}_{\Wmat_{s}}^2 - \frac{1}{2}\elltwo{\theta_{s+1}-\theta'_s}_{\Wmat_s}^2-\frac{1}{2}\dot{\mu}(a_s\transp\theta_{s+1})(a_s\transp(\theta_{s+1}-\theta_\star))^2\; ,\\
	&=  -\frac{1}{2}\elltwo{\theta_{s+1}-\theta_\star}_{\Wmat_{s+1}}^2 + \frac{1}{2}\elltwo{\theta'_s-\theta_\star}_{\Wmat_{s}}^2 - \frac{1}{2}\elltwo{\theta_{s+1}-\theta'_s}_{\Wmat_s}^2 \\
	&\leq -\frac{1}{2}\elltwo{\theta_{s+1}-\theta_\star}_{\Wmat_{s+1}}^2 + \frac{1}{2}\elltwo{\theta'_s-\theta_\star}_{\Wmat_{s}}^2\\
	&\leq -\frac{1}{2}\elltwo{\theta_{s+1}-\theta_\star}_{\Wmat_{s+1}}^2 + \frac{1}{2}\elltwo{\theta_{s}-\theta_\star}_{\Wmat_{s}}^2 + \frac{1}{2}\elltwo{\theta_s-\theta'_s}_{\Wmat_{s}}^2 \\
	&\leq  -\frac{1}{2}\elltwo{\theta_{s+1}-\theta_\star}_{\Wmat_{s+1}}^2 + \frac{1}{2}\elltwo{\theta'_{s}-\theta_\star}_{\Wmat_{s}}^2 + s\eps_{s-1}^2\, 
\end{align*}
since $\elltwo{\theta'_{s}-\theta_\star}_{\Wmat_{s}}^2 \leq \lambda_\text{max}(\Wmat_s)  \elltwo{\theta'_{s}-\theta_s}^2 \leq 2s\eps_{s-1}^2$. 
By re-arranging:
\begin{align*}
	(2+D)\left( \ell_{s+1}(\theta_\star)-\ell_{s+1}(\theta_{s+1})\right)-\elltwo{\theta_{s+1}-\theta_s}_{\Wmat_s}^2
 \geq \elltwo{\theta_{s+1}-\theta_\star}_{\Wmat_{s+1}}^2 - \elltwo{\theta_{s}-\theta_\star}_{\Wmat_{s}}^2 - 2s\eps_{s-1}^2
 \end{align*}
and summing from $s=1$ to $t$:
\begin{align*}
	 (2+D)\sum_{s=1}^t \ell_{s+1}(\theta_\star)-\ell_{s+1}(\theta_{s+1}) &\geq \sum_{s=1}^{t} \left[\elltwo{\theta_{s+1}-\theta_\star}_{\Wmat_{s+1}}^2 - \elltwo{\theta_{s}-\theta_\star}_{\Wmat_{s}}^2\right]  - 2\sum_{s=1}^t s\eps_{s-1}^2\\
	 &=  \elltwo{\theta_{t+1}-\theta_\star}_{\Wmat_{t+1}}^2 -  \elltwo{\theta_{1}-\theta_\star}_{\Wmat_{1}}^2 - 2\sum_{s=1}^t s\eps_{s-1}^2&\text{(telescopic sum)}\\
	 &=  \elltwo{\theta_{t+1}-\theta_\star}_{\Wmat_{t+1}}^2 - \elltwo{\theta_1 - \theta_\star}^2- 2\sum_{s=1}^t s\eps_{s-1}^2 &(\Wmat_{1}=\mbold{I}_d)
\end{align*}
After re-arranging and setting $\eps_0=0$ (there is no program to solve at $t=0$); 
\begin{align*}
	  \elltwo{\theta_{t+1}-\theta_\star}_{\Wmat_{t+1}}^2 \leq 4S^2 + 2\sum_{s=1}^{t-1} s\eps_{s}^2 +  (2+D)\left[\sum_{s=1}^t \ell_{s+1}(\theta_\star)-\ell_{s+1}(\theta_{s+1})\right]\; .
\end{align*}
This concludes the proof as:
\begin{align*}
	 \elltwo{\theta'_{t+1}-\theta_\star}_{\Wmat_{t+1}}^2 &\leq  2\elltwo{\theta_{t+1}-\theta_\star}_{\Wmat_{t+1}}^2 + 2\elltwo{\theta'_{t+1}-\theta_{t+1}}_{\Wmat_{t+1}}^2 &((a+b)^2\leq 2(a^2+b^2)\\
	  &\leq 2 \elltwo{\theta_{t+1}-\theta_\star}_{\Wmat_{t+1}}^2 + 2(1+t)\elltwo{\theta'_{t+1}-\theta_{t+1}}^2 &(\Wmat_{t+1}\preceq (1+t)\mbold{I}_d)\\
	  &\leq 2  \elltwo{\theta'_{t+1}-\theta_\star}_{\Wmat_{t+1}}^2 + 4t\eps_t^2 \; . 
\end{align*}
\end{proof}

\subsubsection{Proof of \cref{lemma:concentration_loss}}\label{app:concentration_loss}

\lemmaconcentrationloss* 

\begin{proof}
	Using \cref{lemma:local_lb} with $\theta=\theta_\star$ and $\theta_r = \bar\theta_s$ yields:
	\begin{align}
		\sum_{s=1}^t \ell_{s+1}(\theta_\star)-\ell_{s+1}(\bar\theta_s) &\leq \sum_{s=1}^t \nabla\ell_{s+1}(\theta_\star)\transp(\theta_\star-\bar\theta_s) - (2+D)^{-1}\sum_{s=1}^t \dot\mu(a_s\transp\theta_\star)(a_s\transp(\theta_\star-\bar\theta_{s}))^2 \notag \\
		&=  \sum_{s=1}^t(\mu(a_s\transp\theta_\star)-r_{s+1})a_s\transp(\theta_\star-\bar\theta_s) - (2+D)^{-1}\sum_{s=1}^t \dot\mu(a_s\transp\theta_\star)(a_s\transp(\theta_\star-\bar\theta_{s}))^2 \notag \\
		&=  D\sum_{s=1}^t \eta_{s+1} x_s - D^2(2+D)^{-1}X_t\; ,\label{eq:stoc_int}
	\end{align}
	where we denoted $x_s \defeq a_s\transp(\theta_\star-\bar\theta_s)/D$, $X_t \defeq \sum_{s=1}^t \dot\mu(a_s\transp\theta_\star)x_s^2$ and $\eta_{s+1}\defeq \mu(a_s\transp\theta_\star)-r_{s+1}$. 
We use a 1-dimensional version of the concentration result provided by Theorem 1 of \cite{faury2020improved} to bound $\sum_{s=1}^t \eta_{s+1} x_s$. We remind its general form below for the sake of completeness.

\begin{thm}[Theorem 1 of \cite{faury2020improved}]
	Let $\{\mcal{F}_t\}_{t=1}^\infty$ be a filtration. Let $\{x_t\}_{t=1}^{\infty}$ be a stochastic process in $\mcal{B}_2(d)$ such that $x_t$ is $\mcal{F}_{t}$-measurable. Let $\{\eta_{t}\}_{t=2}^\infty$ be a martingale difference sequence such that $\eta_{t+1}$ is $\mcal{F}_{t+1}$ measurable. Furthermore, assume that conditionally on $\mcal{F}_t$ we have $ \vert \eta_{t+1}\vert \leq 1$ almost surely, and note $\sigma_t^2 \defeq \mbb{E}\left[\eta_{t+1}^2\vert \mcal{F}_t \right]$. Let $\lambda>0$ and for any $t\geq 1$ define:
\begin{align*}
\mbold{H}_t\defeq\sum_{s=1}^{t}\sigma_s^2 x_sx_s^T + \lambda\mbold{I}_d, \qquad S_{t+1}\defeq \sum_{s=1}^{t} \eta_{s+1}x_s.
\end{align*}
Then for any $\delta\in(0,1]$:
\begin{align*}
\mbb{P}\Bigg(\exists t\geq 1, \, \left\lVert S_{t+1}\right\rVert_{\mbold{H}_t^{-1}} \!\geq\! \frac{\sqrt{\lambda}}{2}\!+\!\frac{2}{\sqrt{\lambda}}\log\!\left(\frac{\det\left(\mbold{H_t}\right)^{\frac{1}{2}}\!\lambda^{-\frac{d}{2}}}{\delta}\right)+\frac{2}{\sqrt{\lambda}}d\log(2)\Bigg)\leq \delta.
\end{align*}
\label{thm:concentration_general}
\end{thm}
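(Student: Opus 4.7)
The plan is to adapt the method of mixtures of de la Peña, Klass and Lai, tailored to the bounded--noise setting via a Bernstein--type MGF bound and a uniform mixing measure on a Euclidean ball. The logarithmic (rather than square-root logarithmic) dependence in $\delta$ in the stated inequality strongly suggests a compactly supported prior, so the construction must respect the constraint $|\xi^\top x_s|\leq 1$.

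\textbf{Step 1: Pointwise supermartingale.} For any $\xi \in \mathbb{R}^d$, define
\[
M_t(\xi) \;=\; \exp\!\left(\xi^\top S_{t+1} - \sum_{s=1}^{t} \sigma_s^2\bigl(e^{\xi^\top x_s}-1-\xi^\top x_s\bigr)\right).
\]
Because $|\eta_{s+1}|\leq 1$ with conditional variance $\sigma_s^2$, the classical Bennett/Bernstein MGF bound yields $\mathbb{E}[\exp(\xi^\top x_s\,\eta_{s+1})\mid\mathcal{F}_s]\leq\exp\bigl(\sigma_s^2(e^{\xi^\top x_s}-1-\xi^\top x_s)\bigr)$, so $\{M_t(\xi)\}_t$ is a non-negative supermartingale (unconditionally in $\xi$, which is crucial for Fubini later). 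Using $e^z-1-z\leq z^2$ for $|z|\leq 1$ together with $\|x_s\|\leq 1$, I obtain the lower bound
\[
M_t(\xi)\;\geq\;\exp\!\left(\xi^\top S_{t+1} - \tfrac{1}{2}\|\xi\|_{\mathbf{H}_t-\lambda\mathbf{I}_d}^2 - \tfrac{1}{2}\sum_s\sigma_s^2(\xi^\top x_s)^2\right)\quad\text{whenever}\quad\|\xi\|\leq 1.
\]

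\textbf{Step 2: Mixture over a ball.} Let $\pi$ be the uniform probability measure on $\{\xi:\|\xi\|\leq 1\}$ and define $\bar M_t = \int M_t(\xi)\,d\pi(\xi)$. By Fubini $\bar M_t$ is again a non-negative supermartingale with $\bar M_0=1$. The point of choosing a uniform (as opposed to Gaussian) prior is twofold: (i) its support is exactly the region on which the quadratic lower bound above is valid, and (ii) the normalizing constant $\mathrm{Vol}(B_d)^{-1}$ contributes a $d\log 2$ type term rather than the $\sqrt{d}$ of a Gaussian mixture, matching the form in the theorem.

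\textbf{Step 3: Lower bound of the mixture.} Restrict the integration to the sub-ellipsoid $E_t=\{\xi:\|\xi-\mathbf{H}_t^{-1}S_{t+1}\|_{\mathbf{H}_t}\leq\sqrt{\lambda}\}\cap\{\|\xi\|\leq 1\}$. On $E_t$ the quadratic form inside the exponential in Step 1 is controlled, which yields after a standard completion-of-the-square calculation
\[
\bar M_t \;\geq\; \frac{\mathrm{Vol}(E_t)}{\mathrm{Vol}(B_d)}\,\det(\mathbf{H}_t)^{-1/2}\lambda^{d/2}\,\exp\!\left(\tfrac{1}{2}\|S_{t+1}\|_{\mathbf{H}_t^{-1}}^2 - \tfrac{\lambda}{2}\right),
\]
where the volume ratio is at least $2^{-d}$ once one verifies that the relevant ellipsoid fits inside the unit ball, which is precisely why the radical $\sqrt{\lambda}$ (the geometric ``noise floor'') appears additively in the final bound.

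\textbf{Step 4: Ville's inequality and inversion.} Applying Ville's maximal inequality gives $\mathbb{P}(\sup_t\bar M_t\geq 1/\delta)\leq\delta$. Taking logarithms, substituting the lower bound of Step 3, and solving the resulting quadratic inequality for $\|S_{t+1}\|_{\mathbf{H}_t^{-1}}$ produces the stated bound term by term: the $\sqrt{\lambda}/2$ from the $-\lambda/2$ offset, the $\log\!\bigl(\det(\mathbf{H}_t)^{1/2}\lambda^{-d/2}/\delta\bigr)$ from Ville's inequality combined with the determinant factor, and the $d\log 2$ from the volume normalization. A union argument over a countable time-grid is avoided because the entire event $\{\exists t\colon\bar M_t\geq 1/\delta\}$ is already captured by Ville's inequality for continuous-time supermartingales.

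\textbf{Main obstacle.} The delicate point is Step 3: one has to check that the ellipsoidal sub-level set $E_t$ on which the quadratic lower bound is valid is simultaneously (a) large enough (in volume) to produce the correct determinant term, and (b) contained in the unit ball, where the Bernstein expansion $e^z-1-z\leq z^2$ holds. Balancing these two requirements forces the radius parameter $\sqrt{\lambda}$ to appear additively in the bound, and is what distinguishes this Bernstein-type self-normalized inequality from its Gaussian counterpart of Abbasi-Yadkori et al. The rest of the argument is essentially bookkeeping.
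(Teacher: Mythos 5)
Your skeleton---a Bennett-type pointwise supermartingale, a mixture against the uniform measure on the unit ball, then Ville's inequality---is exactly the strategy of the original proof in \cite{faury2020improved}, and Steps 1, 2 and 4 are sound modulo one fixable slip (the compensator must be $\sigma_s^2(e^{|\xi^\top x_s|}-1-|\xi^\top x_s|)$; the signed version $e^{z}-1-z$ does not dominate the log-MGF for $z<0$, as $\eta=\pm 1$ w.p.\ $\sigma^2/2$ each shows). The genuine gap is Step 3. Your set $E_t$ is centered at $\mathbf{H}_t^{-1}S_{t+1}$, but in the regime the theorem is about---$\|S_{t+1}\|_{\mathbf{H}_t^{-1}}$ large---this center typically lies far outside the unit ball: take $d=1$, $x_s\equiv 1$ and tiny conditional variances, so that $\mathbf{H}_t\approx\lambda$ and the center is $S_{t+1}/\lambda$, which drifts to infinity. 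Then $E_t\cap\{\|\xi\|\le 1\}$ is empty, the volume-ratio claim of $2^{-d}$ fails, and the completed-square value $\exp\bigl(\tfrac12\|S_{t+1}\|^2_{\mathbf{H}_t^{-1}}-\lambda/2\bigr)$ is unobtainable, since the quadratic lower bound of Step 1 is only valid on the unit ball. This is not reparable bookkeeping: if Step 3 held as stated, Ville's inequality would deliver a sub-Gaussian self-normalized bound, $\|S_{t+1}\|_{\mathbf{H}_t^{-1}}\lesssim\sqrt{\log(1/\delta)}$, with the \emph{true conditional variances} as proxy---which is false for bounded noise (the same $\eta=\pm 1$ w.p.\ $\sigma^2/2$ example has log-MGF $\approx\sigma^2(\cosh z-1)$, sub-exponential rather than sub-Gaussian with proxy $\sigma^2$). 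The inconsistency is already visible in your Step 4: solving your quadratic inequality yields a $\sqrt{\log(1/\delta)}$ dependence and cannot reproduce the stated $\frac{2}{\sqrt{\lambda}}\log(\cdot)$ term ``term by term''; the linear-in-$\log(1/\delta)$ form of the theorem is precisely the signature of the sub-exponential regime that your completion of the square erases.

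The repair, which is what the cited proof does, is to abandon the unconstrained maximizer and integrate over the ellipsoid of $\mathbf{H}_t$-radius $\sqrt{\lambda}/2$ centered at the \emph{rescaled} point $\xi_t^\ast=\frac{\sqrt{\lambda}}{2}\,\mathbf{H}_t^{-1}S_{t+1}/\|S_{t+1}\|_{\mathbf{H}_t^{-1}}$. Containment in the unit ball is then automatic, with no case analysis: $\mathbf{H}_t\succeq\lambda\mathbf{I}_d$ gives $\|\xi\|\le\|\xi\|_{\mathbf{H}_t}/\sqrt{\lambda}\le 1$ on this set. On it, the linear term contributes $\xi_t^{\ast\top}S_{t+1}=\frac{\sqrt{\lambda}}{2}\|S_{t+1}\|_{\mathbf{H}_t^{-1}}$---linear, not quadratic, in the self-normalized norm---the compensator is $O(\lambda)$, and the volume ratio is $(\sqrt{\lambda}/2)^d\det(\mathbf{H}_t)^{-1/2}$, which is exactly where the $\lambda^{-d/2}$ inside the logarithm and the $d\log 2$ of the statement come from; Ville's inequality then inverts directly (no quadratic to solve) into the stated bound. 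Your ``Main obstacle'' paragraph correctly identified the tension between volume and containment, but the resolution is to shrink and recenter the mixing region so that its $\mathbf{H}_t$-geometry forces Euclidean containment, not to hope that the ellipsoid around the maximizer fits in the ball---it does not.
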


Recall that we use the filtration $\mcal{F}_{t} \defeq \sigma\left(a_1,r_2,\ldots, a_t\right)$. In our case, $x_s$ is 1-dimensional, is $\mcal{F}_s$-measurable and satisfies $\vert x_s\vert \leq 1$ almost surely (by definition of $D$, and since both $\theta_\star, \bar\theta_s\in\Theta_t$). Further, $\eta_{s+1}$ is $\mcal{F}_{s+1}$-measurable and thanks to \cref{eq:model} we have:
	\begin{align*}
		\mbb{E}\left[ \eta_{s+1}\vert \mcal{F}_s\right] = 0 \quad \text{ and }\mbb{E}\left[ \eta_{s+1}^2\vert \mcal{F}_s\right] = \dot\mu(a_s\transp\theta_\star)\; .
	\end{align*}
Furthermore, note that with the notations of \cref{thm:concentration_general} we have $\mbold{H}_t = X_t + 1$.
By a direct application of \cref{thm:concentration_general} we obtain that with probability at least $1-\delta$:
\begin{align*}
		\forall t\geq 1, \quad \sum_{s=1}^t \eta_{s+1} x_s &\leq \sqrt{X_t + 1}\sqrt{1/2 +2\log\left(\frac{2\sqrt{X_t+1}}{\delta}\right)}&\\
		&=  \sqrt{X_t + \lambda}\sqrt{1/2 + 2\log\left(\frac{2\sqrt{\sum_{s=1}^t \dot\mu(a_s\transp\theta_\star)z_s^2+1}}{\delta}\right)}&\\
		&\leq   \sqrt{X_t + 1}\sqrt{1/2 + 2\log\left(\frac{2\sqrt{t/4+1}}{\delta}\right)} & (\dot\mu \leq 1/4, \vert z_s\vert\leq 1)\\
		&=\sqrt{\nu_t(\delta) }\sqrt{X_t + 1}  &(\text{def. of } \nu_t(\delta))\\
		&\leq \frac{\nu_t(\delta)}{4D(2+D)^{-1}} + D(2+D)^{-1}(X_t+1)
\end{align*}
	where in the second to last inequality we used the fact that $\forall a,b,\zeta>0$ we have $\sqrt{ab}\leq a/(2\zeta) + \zeta b/2$ (applied with $a=\gamma_t(\delta)$, $b=X_t+\lambda$ and $\zeta = 2D(2+D)^{-1}$. Re-injecting in \cref{eq:stoc_int} yields that with probability at least $1-\delta$:
\begin{align*}
		\forall t\geq 1, \quad \sum_{s=1}^{t} \ell_{s+1}(\theta_\star)-\ell_{s+1}(\bar\theta_{s}) \leq (2+D)\nu_t(\delta)/4 + D^2(2+D)^{-1}\; .
\end{align*}
\end{proof}

\subsubsection{Proof of \cref{lemma:last_term_int}}\label{app:last_term_int}
\lemmalasttermint*

\begin{proof}
	By convexity of $\ell_{s+1}(\cdot)$ one has that for all $s\geq 1$:
	\begin{align}
		\ell_{s+1}(\bar\theta_s)-\ell_{s+1}(\theta_{s+1}) &\leq \nabla\ell_{s+1}(\bar\theta_s)\transp(\bar\theta_s-\theta_{s+1})\notag \\
		&\leq  \elltwo{\nabla\ell_{s+1}(\bar\theta_s)}_{\Wmat_{s+1}^{-1}}\elltwo{\bar\theta_s-\theta_{s+1}}_{\Wmat_{s+1}} &\text{(Cauchy-Schwarz)}\label{eq:almost_done}
	\end{align}
	Since $\ell(a_s\transp\theta,0)+ \ell(a_s\transp\theta, 1) = \ell_{s+1}(\theta)+\bar\ell_{s+1}(\theta)$, one can re-write the computation of $\bar\theta_s$ as:
\begin{align*}
	\bar\theta_s = \argmin_{\theta\in\Theta_s}\frac{1}{2+D}\elltwo{\theta-\theta_s}_{\Wmat_t}^2 +\ell_{s+1}(\theta)+\bar\ell_{s+1}(\theta)\; .
\end{align*}
	  By convexity of the objective function minimized by $\bar\theta_s$ and convexity of $\Theta_s$ we therefore have the following inequality (see \cref{fact:kkt}) for any $s\geq 1$:
	\begin{align*}
		(1+D/2)^{-1}(\bar\theta_s-\theta_s)\transp\Wmat_s(\theta_{s+1}-\bar\theta_s) + \nabla\ell_{s+1}(\bar\theta_s)\transp(\theta_{s+1}-\bar\theta_s)+ \nabla\bar\ell_{s+1}(\bar\theta_s)\transp(\theta_{s+1}-\bar\theta_s) \geq 0\; .
	\end{align*}
	since $\theta_{s+1}\in\Theta_s$ by definition. 
	By re-arranging this yields:
	\begin{align*}
		(1+D/2) \nabla\bar\ell_{s+1}(\bar\theta_s)\transp(\theta_{s+1}-\bar\theta_s) &\geq (\bar\theta_s-\theta_s)\transp\Wmat_s(\bar\theta_s-\theta_{s+1}) + (1+D/2)\nabla\ell_{s+1}(\bar\theta_s)\transp(\bar\theta_s-\theta_{s+1}) \\
		 &= \elltwo{\bar\theta_s-\theta_{s+1}}_{\Wmat_s}^2 +  (\theta_{s+1}-\theta_s)\transp\Wmat_s(\bar\theta_s-\theta_{s+1})+(1+D/2)\nabla\ell_{s+1}(\bar\theta_s)\transp(\bar\theta_s-\theta_{s+1})\; .
	\end{align*}
By the same argument, since $\bar\theta_s\in\Theta_s$ we also have the inequality:
\begin{align*}
	(\theta_{s+1}-\theta_s)\transp\Wmat_s(\bar\theta_s-\theta_{s+1}) \geq  (1+D/2)\nabla\ell_{s+1}(\theta_{s+1})\transp(\theta_{s+1}-\bar\theta_s)\; .
\end{align*}
Re-injecting above this yields that:
	\begin{align*}
		 (1+D/2)\nabla\bar\ell_{s+1}(\bar\theta_s)\transp(\theta_{s+1}-\bar\theta_s) &\geq   \elltwo{\bar\theta_s-\theta_{s+1}}_{\Wmat_s}^2 + (1+D/2)(\bar\theta_s-\theta_{s+1})\transp(\nabla\ell_{s+1}(\bar\theta_s)-\nabla\ell_{s+1}(\theta_{s+1}))\\
		 &=   \elltwo{\bar\theta_s-\theta_{s+1}}_{\Wmat_s}^2 + (1+D/2)(\mu(a_s\transp\bar\theta_s)-\mu( a_s\transp\theta_{s+1}))a_s\transp(\bar\theta_s-\theta_{s+1})\\
		 &=   \elltwo{\bar\theta_s-\theta_{s+1}}_{\Wmat_s}^2 + (1+D/2)\alpha(a_s\transp\bar\theta_s, a_s\transp\theta_{s+1})(a_s\transp(\bar\theta_s-\theta_{s+1}))^2\\
		 &\geq \elltwo{\bar\theta_s-\theta_{s+1}}_{\Wmat_s}^2 + (1+D/2)(1+D)^{-1}\dot\mu(a_s\transp\theta_{s+1})(a_s\transp(\bar\theta_s-\theta_{s+1}))^2
	\end{align*}
where in the second to last inequality we used \cref{eq:link_alpha} to obtain $\alpha(a_s\transp\bar\theta_s, a_s\transp\theta_{s+1}) \geq (1+D)^{-1}\dot\mu(a_s\transp\theta_{s+1})$ (since $\theta_{s+1},\bar\theta_s\in\Theta_s$). 
After easy manipulations this yields:
\begin{align*}
	 \elltwo{\bar\theta_s-\theta_{s+1}}_{\Wmat_{s+1} }^2 &\leq  (1+D)\nabla\bar\ell_{s+1}(\bar\theta_s)\transp(\theta_{s+1}-\bar\theta_s)\\
	 &\leq (1+D) \elltwo{\nabla\bar\ell_{s+1}(\bar\theta_s)}_{\Wmat_{s+1}^{-1}}\elltwo{\bar\theta_s-\theta_{s+1}}_{\Wmat_{s+1}} 
\end{align*}
and therefore we obtain that $ \elltwo{\bar\theta_s-\theta_{s+1}}_{\mbold{\widetilde{V}}_{s+1} }\leq (1+D) \elltwo{\nabla\bar\ell_{s+1}(\bar\theta_s)}_{\Wmat_{s+1}^{-1}}$. Assembling with \cref{eq:almost_done};
\begin{align*}
	\ell_{s+1}(\bar\theta_s)-\ell_{s+1}(\theta_{s+1}) &\leq (1+D)\elltwo{\nabla\ell_{s+1}(\bar\theta_s)}_{\Wmat_{s+1}^{-1}}\elltwo{\nabla\bar\ell_{s+1}(\bar\theta_s)}_{\Wmat_{s+1}^{-1}}\\
	&= (1+D)\vert \mu(a_s\transp\bar\theta_s)-r_{s+1}\vert\vert\mu(a_s\transp\bar\theta_s)-1+r_{s+1}\vert \elltwo{a_s}^2_{\Wmat_{s+1}^{-1}}\\
	&= (1+D)\vert \mu(a_s\transp\bar\theta_s)\vert \vert\mu(a_s\transp\bar\theta_s)-1\vert \elltwo{a_s}^2_{\Wmat_{s+1}^{-1}} &(r_{s+1}\in\{0,1\})\\
	&= (1+D)\dot\mu(a_s\transp\bar\theta_s)\elltwo{a_s}^2_{\Wmat_{s+1}^{-1}} &(\mu(1-\mu)=\dot\mu)
\end{align*}
Summing yields the announced result. 
\end{proof}

\subsubsection{Proof of \cref{lemma:last_term_end}}\label{app:last_term_end}
\lemmalasttermend*

\begin{proof}
	By \cref{eq:link_mu}, for all $s\geq 1$:
	\begin{align*}
		\dot\mu(a_s\transp\bar\theta_s) &\leq \exp\left({\vert a_s\transp(\theta'_{s+1}-\bar\theta_s)\vert} \right)\dot\mu(a_s\transp\theta_{s+1})\\
		&\leq \exp(D) \dot\mu(a_s\transp\theta'_{s+1}) \; .&(\theta_{s+1},\bar\theta_s\in\Theta_s, \, D\geq\diam_\mcal{A}(\Theta_s))
	\end{align*}
Denoting $x_s = \sqrt{\mu(a_s\transp\theta'_{s+1})}a_s$ and $\Mmat_{t+1} = \sum_{s=1}^t x_sx_s\transp$, we have:
\begin{align*}
		\sum_{s=1}^t \dot\mu(a_s\transp\bar\theta_s)\elltwo{a_s}^2_{\Wmat_{s+1}^{-1}} &\leq \exp(D)\dot \sum_{s=1}^t \mu(a_s\transp\theta'_{s+1} ) \elltwo{a_s}_{\Wmat_{s+1}^{-1}}^2 \\
		&\leq \exp(D) \sum_{s=1}^t  \elltwo{x_s}_{\Mmat_{s+1}^{-1}}^2 \\
		&= \exp(D) \sum_{s=1}^t  \text{Tr}(\Mmat_{s+1}^{-1} x_sx_s\transp)\\
		&=  \exp(D) \sum_{s=1}^t  \text{Tr}(\Mmat_{s+1}^{-1}( \Mmat_{s+1} - \Mmat_{s}))\\
		&\leq  \exp(D)\log\left(\left\vert \Mmat_{t+1}\right\vert/\left\vert \Mmat_{1}\right\vert\right) & (\text{Lemma 4.6 of \cite{hazan2016intro}})\\
		&\leq  \exp(D)d\log(1+t/(4d))\; ,
\end{align*}
where we last used \cref{lemma:determinant_trace_inequality} along with $\elltwo{x_s}^2\leq \dot\mu(a_s\transp\theta'_{s+1})\leq 1/4$. 
\end{proof}

\subsection{Proof of \cref{prop:confset}}\label{app:proof_our_confset}
We prove below \cref{prop:confset} from the main paper. It justifies the confidence sets used in \texttt{OFU-ECOLog}. 

In this context, we have $\Theta_t \equiv \Theta$, the set returned by the warm-up procedure run with the conditions of \cref{prop:diameter_warmup} and $\eps_s = 1/s$. 
\detailtrue
\thmourconfset*
\detailfalse

The function $\sigma_t(\delta)$ is defined in \cref{eq:def_sigma} and checks $\sigma_t(\delta) \leq \const S^2d\log(t/\delta)$.

\begin{proof}
By \cref{prop:diameter_warmup} we know that $\diam_\mcal{A}(\Theta)\leq 1$ so we can set $D=1$. For the rest of the proof we assume that the event $\{\theta_\star\in\Theta\}$ holds - this happens with probability at least $1-\delta$ according to \cref{prop:diameter_warmup}. \cref{thm:conf_set} therefore applies since $\Theta$ is convex and compact. This yields:
\begin{align*}
	\mbb{P}\left( \forall t\geq 1, \; \left\|\theta_\star-\theta_{t+1}\right\|^2_{\Wmat_t}\leq 
	8S^2 + 4\sum_{s=1}^{t} s\varepsilon_s^2 + 2 + 9\nu_t(\delta) + 18\exp(1)d\log(1+t/(4d)) \right)\geq 1-\delta\; .
\end{align*}
After a classic bound on the harmonic function; for $t\geq 1$:
\begin{align*}
\sum_{s=1}^{t} s\varepsilon_s^2 &= \sum_{s=1}^{t} 1/s\leq 1 + \log(t)\; ,
\end{align*}
we are left to apply a naive union bound with the event $\{\theta_\star\in\Theta\}$ to finish the proof. 
\end{proof}

\subsection{A Data-Dependent Version}

The following result justifies the confidence regions used in \texttt{ada-OFU-ECOLog}.

\begin{restatable}{prop}{thm:conf_set_data_dependent}\label{thm:conf_set_dd}
Let $\delta\in(0,1]$ and $\{(\theta_t, \Wmat_t, \Theta_t)\}_t$ maintained by \cref{alg:dilutedofuecolog}. Then:
\begin{align*}
		\mbb{P}\left(\forall t\geq 1\;, \theta_\star\in\Theta_t \text{ and } \elltwo{\theta_\star-\theta'_{t+1}}_{\Wmat_{t+1}}^2 \leq \eta_t(\delta) \right)\geq 1-2\delta\; .
	\end{align*}
\end{restatable}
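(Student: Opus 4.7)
The statement concerns the joint event $\{\theta_\star\in\Theta_t \text{ and } \|\theta_\star-\theta'_{t+1}\|^2_{\Wmat_{t+1}}\le\eta_t(\delta)\text{ for all }t\}$. The plan is to show each of the two conjuncts with probability at least $1-\delta$ and combine them by a union bound. The first is a pure concentration argument applied to the rejected subsequence $\mcal{H}_t$; the second is a lift of \cref{thm:conf_set} to the time-varying $\Theta_t$, in which the adaptive condition \eqref{eq:cond_measurable} substitutes for the self-concordance step that would otherwise blow up the constants.

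\paragraph{Validity of $\Theta_t$.}
I would first verify that the ``accept/reject'' decision at round $s$ is $\mcal{F}_s$-measurable. Indeed, $\bar\theta_s$ and the two candidates $\theta_s^0,\theta_s^1$ from \cref{eq:defbaru} all depend only on $\Theta_s,\theta_s,\Wmat_s,a_s$ and on the deterministic targets $u\in\{0,1\}$, not on $r_{s+1}$. Hence the index set selected into $\mcal{H}_{t+1}$ is a predictable subset, the arm-sequence $\{a_s\}_{s\in\mcal{H}_{t+1}}$ is a valid $\mcal{F}$-adapted process, and the rewards on those rounds still obey \cref{eq:model}. \Cref{prop:concentraitonnoproj} therefore applies verbatim to the restricted dataset, yielding with probability at least $1-\delta$:
\begin{align*}
\forall t\ge 1,\qquad \big\|\theta_\star-\hat\theta^{\mcal{H}}_{t+1}\big\|^2_{\Hmat^{\mcal{H}}_t(\theta_\star)}\le \beta_t(\delta).
\end{align*}
Since $\Vmat^{\mcal{H}}_t\preceq\Hmat^{\mcal{H}}_t(\theta_\star)$ by \cref{eq:Ht2Vt}, this yields $\theta_\star\in\Theta_t$ for every $t$.

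\paragraph{Ellipsoidal bound.}
Conditional on $\theta_\star\in\Theta_t$ for all $t$, I would adapt the proof of \cref{thm:conf_set} to the time-varying compact convex sequence $\Theta_t$, using the uniform upper bound $D=2S$ on $\diam_\mcal{A}(\Theta_t)$ (valid since $\Theta_1\supseteq \Theta_t$ for all $t$ and $\Theta_1=\{\|\theta\|\le S\}$). The local quadratic lower-bound (\cref{lemma:local_lb}), the first-order optimality inequality (\cref{prop:kktapplied}), the deviation-vs-excess-loss decomposition (\cref{lemma:decomp}), \cref{lemma:last_term_int} and the loss-concentration bound (\cref{lemma:concentration_loss}) all go through unchanged: each only requires $\theta_{s+1},\theta_\star\in\Theta_s$ (which is now ensured) and $\bar\theta_s$ being $\mcal{F}_s$-measurable. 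Rejected rounds contribute nothing to the telescoped quantities since the algorithm sets $\theta_{s+1}=\theta_s$ and $\Wmat_{s+1}=\Wmat_s$.

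\paragraph{Key substitution and main obstacle.}
The only step that needs genuine reworking is \cref{lemma:last_term_end}, where a self-concordance step of the form $\dot\mu(a_s\transp\bar\theta_s)\le \exp(D)\dot\mu(a_s\transp\theta'_{s+1})$ would now cost a prohibitive $\exp(2S)$. This is precisely where the adaptive mechanism pays off: on every accepted round, $\theta_{s+1}\in\{\theta_s^0,\theta_s^1\}$, so \eqref{eq:cond_measurable} yields $\dot\mu(a_s\transp\bar\theta_s)\le 2\dot\mu(a_s\transp\theta_{s+1})$ deterministically. Plugging this in and invoking the elliptical-potential lemma (\cref{lemma:ellipticalpotential}) gives $\sum_s \dot\mu(a_s\transp\bar\theta_s)\|a_s\|^2_{\Wmat_{s+1}^{-1}}\lesssim d\log(1+t/(4d))$ without any exponential factor. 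Tracking constants with $D=2S$ through the three contributing terms—$4S^2$ from $\|\theta_1-\theta_\star\|^2$, $(2+D)^2\nu_t(\delta)/2$ from \cref{lemma:concentration_loss}, and $4(1+D)d\log(1+t/(4d))$ from the replacement of \cref{lemma:last_term_end}—produces exactly the slow function $\eta_t(\delta)$ of \cref{eq:def_eta}. A final union bound over the two $(1-\delta)$-events closes the argument. The main obstacle is the measurability check of the first paragraph: once predictability of the filter is confirmed, all standard ingredients compose cleanly and the adaptive condition absorbs the only problematic exponential term.
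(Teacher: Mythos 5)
Your proposal is correct and follows essentially the same route as the paper's own proof: establish $\theta_\star\in\Theta_t$ via \cref{prop:concentraitonnoproj} applied to the rejected subsequence together with \cref{eq:Ht2Vt}, rerun the proof of \cref{thm:conf_set} on the time-varying sets, use the adaptive condition \eqref{eq:cond_measurable} to replace the $\exp(D)$ factor in \cref{lemma:last_term_end} by the constant $2$, and close with a union bound. In fact you make explicit two points the paper only treats implicitly — the $\mcal{F}_s$-predictability of the accept/reject filter (the very reason \eqref{eq:cond} was replaced by \eqref{eq:cond_measurable}) and the fact that rejected rounds leave $(\theta_t,\Wmat_t)$ untouched so the telescoping is undisturbed.
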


The function $\eta_t(\delta)$ is defined in \cref{eq:def_eta} and checks $\eta_t(\delta)\leq \const S^2d\log(t/\delta)$.

\begin{proof}
This result can be easily be retrieved from the proof of \cref{thm:conf_set}. The sets:
\begin{align*}
	\Theta_{t+1} = \Big\{ \theta, \big\| \theta-\hat\theta^\mcal{H}_{t+1}\big\|^2_{\Vmat^\mcal{H}_t}\leq \beta_t(\delta) \Big\}\; ,
\end{align*}
maintained in \cref{alg:dilutedofuecolog} are indeed compact and convex. Further, they contain $\theta_\star$ with high probability:
\begin{align*}
	1-\delta &\leq \mbb{P}\Big(\forall t\geq 1, \; \big\| \theta_\star -\hat\theta^\mcal{H}_{t+1}\big\|^2_{\Hmat^\mcal{H}_t(\theta_\star)}\leq \beta_t(\delta)\Big) &(\text{\cref{prop:concentraitonnoproj}})\\
	&\leq \mbb{P}\Big(\forall t\geq 1, \; \big\| \theta_\star -\hat\theta^\mcal{H}_{t+1}\big\|^2_{\mbold{V}^\mcal{H}_t}\leq \beta_t(\delta)\Big) &(\text{\cref{eq:Ht2Vt}})\\
	&=  \mbb{P}\left(\forall t\geq 1,\theta_\star \in\Theta_{t}\right)\; .
\end{align*}
Further, recall that the inequality:
\begin{align*}
	 \dot\mu(a_s\transp\bar\theta_s) \leq 2\dot\mu(a_s\transp\theta_{s+1})\; ,
\end{align*}
holds \emph{by construction} in \cref{alg:dilutedofuecolog}. When it is not satisfied, the couple $(a_s,r_{s+1})$ is not fed to the \texttt{ECOLog} procedure. This essentially allows to replace $\exp(D)$ in \cref{lemma:last_term_end} by a constant factor (here, 2). From there, following the demonstration of \cref{thm:conf_set} up to straight-forward adaptations (\emph{e.g} to deal with the fact that some rounds are ignored from the learning when the above inequality is not satisfied) yields that under the event $\{\forall t\geq 1, \theta_\star\in\Theta_{t}\}$:
\begin{align*}
	\mbb{P}\Big(\forall t\geq 1, \; \big\| \theta_\star -\theta'_{t+1}\big\|^2_{\Wmat_{t+1}}\leq \eta_t(\delta) \Big) \geq 1-\delta\; . 
\end{align*}

 A union bound finishes the proof.
\end{proof}

\newpage
\section{REGRET BOUNDS}\label{app:regret}

To reduce clutter and fit with the notations adopted in the main text, we go back in this section to identifying $\theta_t$ and its $\eps$-approximation $\theta'_{t}$. This does not impact the validity of the regret bounds - the effects of optimization errors are fully dealt with in the radius of the confidence sets we designed in \cref{app:our_conf_set}.

\subsection{Proof of \cref{thm:regret_ofuecolog}}\label{app:regret_ofu}
\detailtrue
\thmregretoful*
\detailfalse

\setcounter{algorithm}{0}
\begin{algorithm*}[h!]
   \caption{\texttt{OFU-ECOLog}}
   \label{alg:ofuecolog_app}
	\begin{algorithmic}
	\REQUIRE{failure level $\delta$, warm-up length $\tau$.}
	\STATE Set $\Theta \leftarrow \texttt{WarmUp}(\tau)$ (see \cref{alg:warmup}). \hfill\COMMENT{forced-exploration}
   \STATE Initialize $\theta_{\tau+1}\in\Theta$, $\Wmat_{\tau+1} \leftarrow \mbold{I}_d$ and $\mcal{C}_{\tau+1}(\delta)\leftarrow \Theta$.
   \FOR{$t\geq \tau+1$}
    \begin{spacing}{1.2}
   \STATE Play $a_t \in \argmax_{a\in\mcal{A}} \max_{\theta\in\mcal{C}_t(\delta)} a\transp\theta\; .$ \hfill\COMMENT{planning} 
   \STATE Observe reward $r_{t+1}$, construct loss $\ell_{t+1}(\theta)=\ell(a_t\transp\theta, r_{t+1})$.
   \STATE  Compute $(\theta_{t+1},\, \Wmat_{t+1})\leftarrow \texttt{ECOLog}(1/t, \Theta,\ell_{t+1},\Wmat_t,\theta_t)$ (see \cref{alg:ecolog}).\hfill\COMMENT{learning}
   \end{spacing}
   \vspace{2pt}
   \STATE Compute $\mcal{C}_{t+1}(\delta) \leftarrow\left \{\left\| \theta-\theta_{t+1} \right\|^2_{\Wmat_{t+1}} \leq \sigma_t(\delta)\right\}$.\hfill \COMMENT{$\sigma_t(\delta)$ is defined in \cref{eq:def_sigma}}
   \ENDFOR
\end{algorithmic}
\end{algorithm*}

\begin{proof}
According to \cref{prop:confset} (its detailed version in \cref{app:proof_our_confset}) setting $\tau = \const\kappa S^6d^2\log(T/\delta)$ ensures:
\begin{align*}
	\mbb{P}\left(\theta_\star\in\Theta \text{ and } \;  \left\|\theta_{t}-\theta_\star\right\|^2_{\Wmat_t}\leq \sigma_t(\delta) \text{ for all } t\geq \tau+1\right)&\geq 1-2\delta\; ,
\end{align*}
In the rest of the proof \underline{we assume that the above event, denoted $E_\delta$, holds}.

Since $\mu(\cdot)\in(0,1)$ the regret incurred during warm-up can be directly bounded by $\tau$. Therefore for $T\geq \tau +1$;
\begin{align*}
	\regret(T) &\leq \const \kappa S^6d^2\log(T/\delta) +  \sum_{t=\tau+1}^T \mu(a_\star\transp\theta_\star)-\mu(a_t\transp\theta_\star) \\
	&\leq \const \kappa S^6d^2\log(T/\delta) + R(T)\; ,
\end{align*}
where we defined $R(T) =  \sum_{t=\tau+1}^T \mu(a_\star\transp\theta_\star)-\mu(a_t\transp\theta_\star)$. To control this term we follow the usual strategy for bounding the regret of optimistic algorithms, and re-use tools introduced by \cite{faury2020improved,abeille2020instance} - adapted to our confidence set. In the following, we denote for $t\geq \tau +1$:
\begin{align*}
	(a_t,\tilde\theta_t) \in \argmax_{\mcal{A},\mcal{C}_{t}(\delta)} a\transp\theta\; .
\end{align*}
where $\mcal{C}_t(\delta) = \{ \theta, \;   \left\|\theta_{t}-\theta\right\|^2_{\Wmat_t}\leq \sigma_t(\delta)\}$. 
Because $E_\delta$ holds this implies that the couple $(a_t,\tilde\theta_t)$ is optimistic. Formally: $a_t\transp\tilde\theta_t \geq a_\star\transp\theta_\star$. 
We start by tying the regret to the prediction error of $\tilde\theta_{t+1}$ and continue with a second-order Taylor expansion. 
\begin{align*}
	R(T) &= \sum_{t=\tau+1}^T \mu(a_\star\transp\theta_\star)-\mu(a_t\transp\theta_\star)\\
	&\leq \sum_{t=\tau+1}^T \mu(a_t\transp\tilde\theta_t)-\mu(a_t\transp\theta_\star) &(\text{optimism, } \mu\nearrow)\\
	&\leq \sum_{t=\tau+1}^T \dot\mu(a_t\transp\theta_\star) a_t\transp(\tilde\theta_t-\theta_\star) + \tilde\alpha(a_t\transp\theta_\star,a_t\transp\tilde\theta_t)(a_t\transp(\tilde\theta_t-\theta_\star))^2 &(\text{Taylor, }\vert\ddot\mu\vert\leq \dot\mu)\\
	&=: R_1(T) + R_2(T)\; .
\end{align*} 
	Above, we defined $R_1(T)=\sum_{t=\tau+1}^T \dot\mu(a_t\transp\theta_\star) a_t\transp(\tilde\theta_t-\theta_\star)$ and $R_2(T)=\sum_{t=\tau+1}^T \tilde\alpha(a_t\transp\theta_\star,a_t\transp\tilde\theta_t)(a_t\transp(\tilde\theta_t-\theta_\star))^2$. 
We start by bounding $R_2(T)$;
\begin{align*}
	R_2(T)  &\leq  \sum_{t=\tau+1}^T  (a_t\transp(\tilde\theta_t-\theta_\star))^2/2 &(\vert \dot\mu\vert \leq 1)\\ 
	&\leq  \sum_{t=\tau+1}^T \|a_t\|_{\Wmat_t^{-1}}^2\|\tilde\theta_t-\theta_\star\|_{\Wmat_t}^2/2 &(\text{Cauchy-Schwarz})\\
	&\leq  2\sigma_t(\delta)\sum_{t=\tau+1}^T \|a_t\|_{\Wmat_t^{-1}}^2 &(\tilde\theta_t,\theta_\star\in\mcal{C}_t(\delta))\\
	&\leq \const d S^2 \log(T/\delta) \sum_{t=\tau+1}^T \|a_t\|_{\Wmat_t^{-1}}^2 &(\text{\cref{eq:def_sigma}})\\
	&\leq   \const d S^2 \log(T/\delta) \sum_{t=\tau+1}^T \|a_t\|_{\Vmat_t^{-1}}^2 &\\
	&\leq   \const d^2 \kappa S^2 \log(T/\delta)^2 &(\text{\cref{lemma:ellipticalpotential}})\\
\end{align*}
We last applied \cref{lemma:ellipticalpotential} with $x_t = a_t/\sqrt{\kappa}$, and proceeded with some simple upper-bounding operations. The second to last inequality is a consequence of  \citet[Lemma 9]{abeille2020instance} which ensures:
\begin{align*}
	\dot\mu(a_s\transp\theta'_{s+1}) &\geq \dot\mu(a_s\transp\theta_\star) \exp(-\vert a_s\transp(\theta'_{s+1}-\theta_\star)\vert) \\
	&\geq \dot\mu(a_s\transp\theta_\star) \exp(-1)  &(\theta_\star,\theta'_{s+1}\in\Theta, \, \diam_\mcal{A}(\Theta)\leq 1) \\
	&\geq \exp(-1)\kappa\; .
\end{align*}

We now turn our attention to $R_1(T)$. 
\begin{align*}
	R_1(T) &=  \sum_{t=\tau+1}^T \dot\mu(a_t\transp\theta_\star) a_t\transp(\tilde\theta_t-\theta_\star) \\
	&\leq  \sum_{t=\tau+1}^T  \sqrt{ \dot\mu(a_t\transp\theta_\star)}\sqrt{ \exp(\vert a_t\transp(\theta_{t+1}-\theta_\star)\vert)  \dot\mu(a_t\transp\theta_{t+1})} a_t\transp(\tilde\theta_t-\theta_\star) &(\text{Lemma 9 of \cite{abeille2020instance}}))\\
	&\leq \sqrt{e}  \sum_{t=\tau+1}^T  \sqrt{ \dot\mu(a_t\transp\theta_\star)} \sqrt{\dot\mu(a_t\transp\theta_{t+1})} a_t\transp(\tilde\theta_t-\theta_\star) &(\diam_\mcal{A}(\Theta)\leq 1)\\
	&\leq \sqrt{e}  \sum_{t=\tau+1}^T   \sqrt{ \dot\mu(a_t\transp\theta_\star)}\sqrt{\dot\mu(a_t\transp\theta_{t+1})} \|a_t\|_{\Wmat_{t}^{-1}}\|\tilde\theta_t-\theta_\star\|_{\Wmat_t} &(\text{Cauchy-Schwarz})\\
	&\leq e  \sum_{t=\tau+1}^T     \sqrt{ \dot\mu(a_t\transp\theta_\star)}\sqrt{ \dot\mu(a_t\transp\theta_{t+1}) }\|a_t\|_{\Wmat_{t}^{-1}}\left(\|\theta_t-\theta_\star\|_{\Wmat_t}  + \|\tilde\theta_t-\theta_t\|_{\Wmat_t} \right)&(\text{Triangle ineq.})\\
	&\leq 2e\sqrt{\sigma_T(\delta)} \sum_{t=\tau+1}^T     \sqrt{ \dot\mu(a_t\transp\theta_\star)}\sqrt{ \dot\mu(a_t\transp\theta_{t+1})} \|a_t\|_{\Wmat_{t}^{-1}}&(\tilde\theta_t,\theta_\star\in\mcal{C}_t(\delta))\\
	&\leq \const S\sqrt{d\log(T/\delta)}  \sqrt{\sum_{t=\tau+1}^T \dot\mu(a_t\transp\theta_\star)}  \sqrt{\sum_{t=\tau+1}^T \dot\mu(a_t\transp\theta_{t+1}) \|a_t\|^2_{\Wmat_{t}^{-1}}}&(\text{Cauchy-Schwarz})\\
	&\leq \const S\sqrt{d\log(T/\delta)}\sqrt{d\log(1+T/d)}\sqrt{\sum_{t=\tau+1}^T \dot\mu(a_t\transp\theta_{\star})} &(\text{\cref{lemma:ellipticalpotential}})\\
\end{align*}
where \cref{lemma:ellipticalpotential} was used with $x_t = \sqrt{\dot\mu(a_t\transp\theta_{t+1})}a_t$ (and $\dot\mu\leq 1$).
From then, we can directly follow the proof of Theorem 1 from \cite{abeille2020instance} (more precisely, follow the reasoning employed in their Section C.1 page 18) for which we extract that:
\begin{align*}
	\sum_{t=\tau+1}^T \dot\mu(a_t\transp\theta_{\star}) \leq R_T + T\dot\mu(a_\star\transp\theta_\star)\; .
\end{align*}
Assembling the bounds on $R_2(T)$, $R_1(T)$ and $\sum_{t=\tau+1}^T \dot\mu(a_t\transp\theta_{\star})$ we obtain that:
\begin{align*}
		R(T) \leq \const \kappa S^2d^2\log(T/\delta)^2 + \const Sd\log(T/\delta)\sqrt{R_T + T\dot\mu(a_\star\theta_\star)}\; .
\end{align*}
Because $x^2-bx-c\leq 0\, \Rightarrow x^2\leq 2b^2+2c$  we have:
\begin{align*}
	R(T) \leq \const Sd\log(T/\delta)\sqrt{T\dot\mu(a_\star\theta_\star)} + \const \kappa S^2d^2\log(T/\delta)^2\; ,
\end{align*}
which concludes the proof.  
\end{proof}

\begin{rem}
	The scaling w.r.t $S$ of the regret's second-order term is driven by the length $\tau$ of the warm-up phase. As anticipated in \cref{rem:about_proj} this scaling is reduced when $\|\hat\theta_\tau\|\leq S$ which often happens in practice. In this case, we obtain a second-order term which exactly matches the one of \cite{abeille2020instance}.
\end{rem}

\subsection{The \texttt{TS-ECOLog} algorithm}\label{app:regret_ts}
In this section we introduce the TS version of \texttt{OFU-ECOLog} whose pseudo-code is provided in \cref{alg:tsecolog}.

\setcounter{algorithm}{2}
\begin{algorithm}[h!]
   \caption{\texttt{TS-ECOLog}}
   \label{alg:tsecolog}
	\begin{algorithmic}
	\REQUIRE{failure level $\delta$, warm-up length $\tau$, distribution $\mcal{D}^{\text{TS}}$}
	\STATE Set $\Theta \leftarrow \texttt{WarmUp}(\tau)$ (see \cref{alg:warmup}). \hfill\COMMENT{forced-exploration}
   \STATE Initialize $\theta_{\tau+1}\in\Theta$, $\Wmat_{\tau+1} \leftarrow \mbold{I}_d$.
   \FOR{$t\geq \tau+1$}
    \begin{spacing}{1.2}
   \STATE  Set $\text{\texttt{reject}} \leftarrow \text{\texttt{true}}$  \hfill\COMMENT{sampling} 
   \WHILE{$\text{\texttt{reject}}$}
   	\STATE Sample $\eta \sim\mcal{D}^{\text{TS}}$, let $\tilde\theta_{t} = \theta_t + \sigma_t(\delta)\Wmat_{t}^{-1/2}\eta$.
	\STATE If $\tilde\theta_t\in\Theta$ set $\text{\texttt{reject}} \leftarrow \text{\texttt{false}}$
   \ENDWHILE
   \STATE Play $a_t \in\argmax_{a\in\mcal{A}} a\transp\tilde\theta_t$.
   \STATE Observe reward $r_{t+1}$, construct loss $\ell_{t+1}(\theta)=\ell(a_t\transp\theta, r_{t+1})$.
   \STATE  Compute $(\theta_{t+1},\, \Wmat_{t+1})\leftarrow \texttt{ECOLog}(1/t, \Theta,\ell_{t+1},\Wmat_t,\theta_t)$ (see \cref{alg:ecolog}).\hfill\COMMENT{learning}
   \end{spacing}
   \vspace{2pt}
   \ENDFOR
   \end{algorithmic}
\end{algorithm}

The algorithm display little novelty compared to the linear case studied by \cite{agrawal2013thompson,abeille2017linear}. The only difference is a rejection sampling step on $\Theta$. The analysis is also similar, up to minor modifications. The following statement provides a regret guarantee for \texttt{TS-ECOLog}.

\begin{thm} Let $\delta\in(0,1]$ and $\mcal{D}^{\text{\textnormal{TS}}}$ a distribution satisfying Definition 1 of \citet{abeille2017linear}. Setting $\tau= \const \kappa S^6 d^2 \log(T/\delta)^2$ ensures that the regret of \textnormal{\texttt{TS-ECOLog}($\delta,\tau,\mcal{D}^{\text{TS}}$)} satisfies with probability at least $1-\delta$:
\begin{align*}
	\regret(T) \leq \const Sd^{3/2}\log(T/\delta)\sqrt{T\dot\mu(a_\star\transp\theta_\star)} + \const S^6\kappa d^3 \log(T/\delta)^2\; .
\end{align*}
\end{thm}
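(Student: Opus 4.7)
\textbf{Proof plan for the regret bound of \texttt{TS-ECOLog}.} The plan is to adapt the standard Thompson Sampling analysis of \cite{abeille2017linear} to our logistic-bandit setting, using exactly the same ingredients that drove the proof of \cref{thm:regret_ofuecolog}: the warm-up phase of \cref{prop:diameter_warmup} to guarantee $\diam_{\mcal A}(\Theta)\le 1$ and hence control the local reward sensitivity via self-concordance, and the ellipsoidal confidence set $\mcal{C}_t(\delta)=\{\|\theta-\theta_t\|_{\Wmat_t}^2\le\sigma_t(\delta)\}$ of \cref{prop:confset}. The only structural difference with the optimistic case is that $\tilde\theta_t$ is now random and is no longer guaranteed to be optimistic.

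First, I would define the inflated confidence set $\mcal{E}_t(\delta)=\{\|\theta-\theta_t\|_{\Wmat_t}^2\le d\cdot\sigma_t(\delta)\}$ and, using the concentration property of $\mcal{D}^{\text{TS}}$ (Definition 1 of \cite{abeille2017linear}), show that conditionally on $\mcal{F}_t$ the sampled $\tilde\theta_t$ lies in $\mcal{E}_t(\delta)$ with high probability $1-\delta/T$. A small care is required for the rejection sampling on $\Theta$: since $\theta_\star\in\Theta$ on the good event of \cref{prop:confset} and since the Gaussian-like ellipsoid $\{\tilde\theta_t\in\mcal{E}_t(\delta)\}$ has a non-trivial intersection with $\Theta$ (because $\theta_t\in\Theta$ and $\Theta$ is non-degenerate), the rejection only deflates the optimism probability by a universal constant and does not break the anti-concentration.

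Next, I would run the standard decomposition. Split each per-round regret into
\begin{align*}
\mu(a_\star\transp\theta_\star)-\mu(a_t\transp\theta_\star) = \underbrace{\mu(a_\star\transp\theta_\star)-\mu(a_t\transp\tilde\theta_t)}_{\text{optimism gap}} + \underbrace{\mu(a_t\transp\tilde\theta_t)-\mu(a_t\transp\theta_\star)}_{\text{prediction error}}.
\end{align*}
The prediction error is handled exactly as in the proof of \cref{thm:regret_ofuecolog}: a second-order Taylor expansion with self-concordance, combined with $\tilde\theta_t,\theta_\star\in\mcal{E}_t(\delta)$, yields a first-order term $\lesssim \sqrt{d\sigma_T(\delta)}\sqrt{\dot\mu(a_t\transp\theta_\star)\dot\mu(a_t\transp\theta_{t+1})}\,\|a_t\|_{\Wmat_t^{-1}}$ and a second-order term $\lesssim d\sigma_T(\delta)\|a_t\|_{\Wmat_t^{-1}}^2$. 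The extra $\sqrt d$ compared to \texttt{OFU-ECOLog} comes from the $d\cdot\sigma_t(\delta)$ radius of $\mcal{E}_t$. Summing and applying Cauchy–Schwarz together with the Elliptical Potential Lemma (as in \cref{app:regret_ofu}) gives a bound of order $Sd^{3/2}\sqrt{T\dot\mu(a_\star\transp\theta_\star)}\log(T/\delta)$ plus a second-order $\kappa S^2 d^3\log(T/\delta)^2$ contribution.

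The main obstacle is the optimism gap, and this is where the anti-concentration property of $\mcal{D}^{\text{TS}}$ enters. Following \cite{abeille2017linear}, define $p\in(0,1)$ as a universal lower bound on the probability (conditionally on $\mcal{F}_t$ and on the rejection succeeding on $\Theta$) that $\tilde\theta_t$ is optimistic, i.e.\ $a_\star\transp\theta_\star \le \max_{a\in\mcal A} a\transp\tilde\theta_t$. One then bounds $\mathbb E[\mu(a_\star\transp\theta_\star)-\mu(a_t\transp\tilde\theta_t)\mid \mcal F_t]$ by a $1/p$-multiple of $\mathbb E[\mu(a_t\transp\tilde\theta_t')-\mu(a_t\transp\tilde\theta_t)\mid \mcal F_t]$ for an independent copy $\tilde\theta_t'$, and then re-absorbs this into a term of the same shape as the prediction error, once again through a self-concordant Taylor expansion confined to $\mcal{E}_t(\delta)$. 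The hard part is to verify that all concentration martingale-type steps from \cite{abeille2017linear} go through in spite of the rejection on $\Theta$; this reduces to showing that the conditional law of $\tilde\theta_t$ given rejection still satisfies the anti-concentration property up to a constant factor, which holds because $\theta_t$ lies in the interior of $\Theta$ and $\diam_{\mcal A}(\Theta)=O(1)$. Adding the warm-up regret $\tau=O(\kappa S^6d^2\log(T/\delta)^2)$ and solving the usual self-bounding inequality $x^2\le bx+c$ gives the claimed bound.
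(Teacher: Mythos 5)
Your overall architecture coincides with the paper's: the same decomposition of the regret into $\tau + R^{\text{TS}}(T) + R^{\text{PRED}}(T)$, the same treatment of the prediction error by recycling the proof of \cref{thm:regret_ofuecolog} with a $\sqrt{d}$-inflated radius, and the same self-bounding inequality at the end. Where you genuinely diverge is in the optimism gap $R^{\text{TS}}(T)$. The paper never runs the anti-concentration argument on $\mu$-differences. Writing $J(\theta)=\max_{a\in\mcal{A}}a\transp\theta$, it observes $a_t\transp\tilde\theta_t = J(\tilde\theta_t)$ and $a_\star\transp\theta_\star = J(\theta_\star)$, then uses convexity of $J$ (its subgradients lie in $\mcal{A}$), the fact that the rejection step forces $\tilde\theta_t\in\Theta$, and $\theta_\star\in\Theta$ on the good event to get $\vert J(\theta_\star)-J(\tilde\theta_t)\vert \leq \diam_\mcal{A}(\Theta)\leq 1$; an exact first-order Taylor expansion plus self-concordance then gives $R^{\text{TS}}(T)\leq 2\dot\mu(a_\star\transp\theta_\star)\sum_t \bigl(J(\theta_\star)-J(\tilde\theta_t)\bigr)$. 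The residual sum is \emph{exactly} the quantity analyzed by \cite{abeille2017linear} in the linear case, so their proof (independent copies, optimism probability $p$) is invoked as a black box to obtain $\lesssim \sqrt{d}\sqrt{\sigma_T(\delta)}\sum_t \|a_t\|_{\Wmat_t^{-1}} + \sqrt{T}$. Your plan instead re-derives the $1/p$ independent-copy argument directly at the level of $\mu$-values and re-absorbs the result into prediction-error-shaped terms. Both routes can be made to work, but the paper's factorization buys two things: the lower-bound-matching factor $\dot\mu(a_\star\transp\theta_\star)$ is extracted once, upfront, rather than recovered through reabsorption into played-arm sensitivities and the self-bounding trick; and none of the martingale or anti-concentration machinery of the linear TS analysis needs to be re-proved in the nonlinear setting — the nonlinearity is disposed of in a single Taylor expansion.

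One step, as you wrote it, would fail: the self-concordant slope comparison cannot be ``confined to $\mcal{E}_t(\delta)$''. For small $t$ (when $\Wmat_t$ is close to $\mbold{I}_d$) the $\mcal{A}$-diameter of $\mcal{E}_t(\delta)$ is of order $\sqrt{d\,\sigma_t(\delta)}$, so factors of the form $\exp\bigl(\vert a\transp(\theta_1-\theta_2)\vert\bigr)$ over $\mcal{E}_t(\delta)$ are not $O(1)$ and the reabsorption would re-introduce exponential problem-dependent constants. The constant-factor comparability of reward sensitivities must come from $\tilde\theta_t,\theta_\star\in\Theta$ together with $\diam_\mcal{A}(\Theta)\leq 1$ (\cref{prop:diameter_warmup}) — this is precisely why the rejection-sampling step is in the algorithm, and how the paper uses it. Finally, on the interaction between rejection and anti-concentration: your explicit concern is legitimate and in fact more careful than the paper, which only asserts the analysis goes through ``up to minor modifications''; but your justification (a ``non-trivial intersection'' of the sampling ellipsoid with $\Theta$) would need to be made quantitative, e.g.\ by lower-bounding the acceptance probability under the good event $\theta_\star\in\Theta$, before the constant deflation of $p$ can be claimed.
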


\begin{proof}
According to \cref{prop:confset} (its detailed version in \cref{app:proof_our_confset}) setting $\tau = \const\kappa S^6d^2\log(T/\delta)$ ensures:
\begin{align*}
	\mbb{P}\left(\theta_\star\in\Theta \text{ and }  \left\|\theta_{t}-\theta_\star\right\|^2_{\Wmat_t}\leq \sigma_t(\delta) \text{ for all } t\geq \tau+1\right)&\geq 1-2\delta\; .
\end{align*}

As in the proof of \cref{thm:regret_ofuecolog} \underline{we assume that the above event, denoted $E_\delta$, holds}.

We decompose the regret as:
\begin{align*}
	\regret(T) &\leq \tau + \sum_{t=\tau+1}^T \mu(a_\star\transp\theta_\star) - \mu(a_t\transp\theta_\star)\\
	&= \tau + \sum_{t=\tau+1}^T \mu(a_\star\transp\theta_\star) - \mu(a_t\transp\tilde\theta_t) + \sum_{t=\tau+1}^T \mu(a_t\transp\tilde\theta_t) - \mu(a_t\transp\theta_\star) \\
	&\leq \const \kappa S^6 d^2 \log(T/\delta)^2 + R^{\text{TS}}(T) + R^{\text{PRED}}(T)  \; .
\end{align*}
Above, we defined $R^{\text{TS}}(T) =  \sum_{t=\tau+1}^T\mu(a_\star\transp\theta_\star) - \mu(a_t\transp\tilde\theta_t)$ and $R^{\text{PRED}}(T) = \sum_{t=\tau+1}^T \mu(a_t\transp\tilde\theta_t) - \mu(a_t\transp\theta_\star)$. To bound $R^{\text{PRED}}(T)$ one can directly follow the strategy employed in \cref{app:regret_ofu}. The only difference comes from the radius of the ``effective'' confidence set that is used - inflated by $\sqrt{d}$ because of the concentration properties of $\mcal{D}^{\text{TS}}$. This leads to:
\begin{align*}
	R^{\text{PRED}}(T) &\leq  \const Sd^{3/2}\log(T/\delta)\sqrt{T\dot\mu(a_\star\theta_\star)} + \const \kappa S^2d^3\log(T/\delta)^2
\end{align*}
We now turn to $R^{\text{TS}}(T)$. Following \cite{abeille2017linear} we denote $J(\theta) = \max_{a\in\mcal{A}} a\transp\theta$. We have:
\begin{align}
	R^{\text{TS}}(T) &= \sum_{t=\tau+1}^T\mu(a_\star\transp\theta_\star) - \mu(a_t\transp\tilde\theta_t)& \notag\\
	&= \sum_{t=\tau+1}^T \alpha(a_\star\transp\theta_\star, a_t\transp\tilde\theta_t) (a_\star\transp\theta_\star - a_t\transp\tilde\theta_t) &(\text{exact first-order Taylor})\notag\\
	&= \sum_{t=\tau+1}^T \alpha(J(\theta_\star), J(\tilde\theta_t)) (J(\theta_\star) - J(\tilde\theta_t)) &(\text{def. of } J)\label{eq:tmp_rts}
\end{align}
By convexity of $J$ along with the computations of its sub-gradients (see Section C of \cite{abeille2017linear});
\begin{align*}
	\vert J(\theta_\star)-J(\tilde\theta_t)\vert   &\leq \max\left\{ \vert \nabla J(\theta_\star)\transp(\theta_\star - \tilde\theta_t)\vert,  \vert \nabla J(\tilde\theta_t)\transp(\theta_\star - \tilde\theta_t)\vert\right\}  &(\text{convexity of } J)\\
	&\leq \max\left\{ \vert a_\star \transp(\theta_\star - \tilde\theta_t)\vert, \vert a_t \transp(\theta_\star - \tilde\theta_t)\vert \right\}  & (\nabla J(\theta) = \argmax_{a\in\mcal{A}} a\transp\theta)\\
	&\leq \diam_\mcal{A}(\Theta) &(\tilde\theta_t, \theta_\star\in\Theta)\\
	&\leq 1\;. &(\text{\cref{prop:diameter_warmup}})
\end{align*}
Therefore:
\begin{align*}
	\alpha(J(\theta_\star),J(\tilde\theta_t)) &= \int_{v=0}^1 \dot\mu(J(\theta_\star) + v(J(\tilde\theta_t) - J(\theta_\star))dv\\
	&\leq \dot\mu(J(\theta_\star))   \int_{v=0}^1\exp(v\vert J(\tilde\theta_t) - J(\theta_\star)\vert )dv &(\text{Lemma 9 of \cite{abeille2020instance}})\\
	&\leq \dot\mu(J(\theta_\star))   \int_{v=0}^1\exp(v)dv &(\vert J(\theta_\star)-J(\tilde\theta_t)\leq 1) \\
	&\leq 2\dot\mu(J(\theta_\star))\; .
\end{align*}
Plugging the above inequality in \cref{eq:tmp_rts} yields:
\begin{align*}
	R^{\text{TS}}(T) &\leq 2\dot\mu(J(\theta_\star)) \sum_{t=\tau+1}^T J(\theta_\star) - J(\tilde\theta_t)\\
	&= 2\dot\mu(a_\star\transp\theta_\star)  \sum_{t=\tau+1}^T J(\theta_\star) - J(\tilde\theta_t)\; .
\end{align*}
From then on we can follow the proof of \cite{abeille2017linear} which in the linear case studies exactly $\sum_t J(\theta_\star) - J(\tilde\theta_t)$. Directly following their line of proof yields $\sum_t J(\theta_\star) - J(\tilde\theta_t) \lesssim \const \sqrt{d}\sqrt{\sigma_T(\delta)} \sum_t \| a_t\|_{\Wmat_t^{-1}} + \sqrt{T}$. This concludes the proof since $\sigma_T(\delta) \leq \const S^2 d\log(T/\delta)$ and:
\begin{align*}
	\dot\mu(a_\star\transp\theta_\star) \sum_{t=\tau+1}^T  \| a_t\|_{\Wmat_t^{-1}}  &= \sqrt{\dot\mu(a_\star\transp\theta_\star)}\sum_{t=\tau+1}^T \sqrt{\dot\mu(a_\star\transp\theta_\star)} \| a_t\|_{\Wmat_t^{-1}}\\
	&\leq \const \sqrt{\dot\mu(a_\star\transp\theta_\star)}\sum_{t=\tau+1}^T \sqrt{\dot\mu(a_\star\transp\theta_{t+1})} \sqrt{\exp(\vert a_\star\transp(\theta_\star-\theta_{t+1})\vert)}\| a_t\|_{\Wmat_t^{-1}} \\
	&\leq \const \sqrt{\dot\mu(a_\star\transp\theta_\star)}\sum_{t=\tau+1}^T \sqrt{\dot\mu(a_\star\transp\theta_{t+1})} \sqrt{\exp(2\diam_\mcal{A}(\Theta))}\| a_t\|_{\Wmat_t^{-1}}  &(\theta_{t+1},\theta_\star\in\Theta)\\
	&\leq \const \sqrt{\dot\mu(a_\star\transp\theta_\star)}\sum_{t=\tau+1}^T \sqrt{\dot\mu(a_\star\transp\theta_{t+1})}\| a_t\|_{\Wmat_t^{-1}}  &(\diam_\mcal{A}(\Theta)\leq 1)\\
	&\leq  \const \sqrt{T\dot\mu(a_\star\transp\theta_\star)} \sqrt{d\log(T)} \; .&(\text{Cauchy-Schwarz, \cref{lemma:ellipticalpotential}})
\end{align*}
\end{proof}

\subsection{Proof of \cref{thm:dilutedofuecolog}} %
\label{app:ada_ofu_regret}
\detailtrue
\regretdiluted*
\detailfalse

\setcounter{algorithm}{1}

\begin{algorithm*}[h!]
   \caption{\texttt{ada-OFU-ECOLog}}
   \label{alg:dilutedofuecolog_app}
	\begin{algorithmic}
	\REQUIRE{failure level $\delta$.}
   \STATE Initialize $\Theta_1 = \{ \| \theta \| \leq S \}$, $\mcal{C}_1(\delta) \leftarrow \Theta_1$, $\theta_{1}\in\Theta$, $\Wmat_{1} \leftarrow \mbold{I}_d$ and $\mcal{H}_1\leftarrow \emptyset$. 
   \FOR{$t\geq 1$}
    \begin{spacing}{1.2}
    \STATE Play $a_t \in \argmax_{a\in\mcal{A}} \max_{\theta\in\mcal{C}_t(\delta)} a\transp\theta$, observe reward $r_{t+1}$.
    \STATE Compute the estimators $\theta_t^0$, $\theta_t^1$ (see \cref{eq:defbaru}) and $\bar\theta_t$. 
    \IF{$\dot{\mu}(a_t\transp\bar\theta_t)\leq 2\dot{\mu}(a_t\transp\theta_t^0)$ and $\dot{\mu}(a_t\transp\bar\theta_t)\leq 2\dot{\mu}(a_t\transp\theta_t^1)$} 
    	\STATE Form the loss $\ell_{t+1}$ and compute $(\theta_{t+1},\, \Wmat_{t+1})\leftarrow\texttt{ECOLog}(1/t, \Theta_t,\ell_{t+1},\Wmat_t,\theta_t)$. %
	\STATE Compute $\mcal{C}_{t+1}(\delta) \leftarrow \left\{\| \theta-\theta_{t+1} \|^2_{\Wmat_{t+1}} \leq \eta_t(\delta)\right\}$, set $\mcal{H}_{t+1} \leftarrow \mcal{H}_t$.\hfill\COMMENT{$\eta_t(\delta$) is defined in \cref{eq:def_eta}}
	\ELSE
		\STATE Set $\mcal{H}_{t+1} \leftarrow \mcal{H}_t\cup \{ a_t, r_{t+1} \}$ and compute $\hat\theta^{\mcal{H}}_{t+1} = \argmin \sum_{(a,r)\in\mcal{H}_{t+1}} \ell(a\transp\theta, r) + \gamma_t(\delta)\|\theta\|^2$.
		\STATE  Update $\mbold{V}^{\mcal{H}}_{t}\leftarrow \sum_{a\in\mcal{H}_{t+1}} aa\transp/\kappa + \gamma_t(\delta)\mbold{I}_d$, $\theta_{t+1}\leftarrow \theta_t$ and $\Wmat_{t+1}\leftarrow \Wmat_t$.
		\STATE Compute $\Theta_{t+1} = \left\{ \| \theta - \hat\theta^{\mcal{H}}_{t+1} \|^2_{\Vmat^{\mcal{H}}_{t}} \leq \beta_t(\delta)\right\}\cap \Theta_1\; .$\hfill\COMMENT{$\beta_t(\delta)$ is defined in \cref{eq:def_beta}}
    \ENDIF
   \end{spacing}
   \vspace{0pt}
   \ENDFOR
\end{algorithmic}
\end{algorithm*}

\begin{proof}
	We denote $\Tau$ the set of rounds at which condition \eqref{eq:cond_measurable} breaks. Formally;
	\begin{align*}
		 \Tau \defeq \left\{t\in[T], \;  \dot\mu(a_t\transp\bar\theta_t)\geq 2 \dot\mu(a_t\transp\theta_t^1) \text{ or }  \dot\mu(a_t\transp\bar\theta_t)\geq 2 \dot\mu(a_t\transp\theta_t^0)\right\} \; .
	\end{align*}
	We claim the following result bounding the cardinality of $\Tau$. The proof is deferred to \cref{app:boundingtau}. 
	
	\begin{restatable}{lemma}{boundingTau}\label{lemma:bounding_tau}
	The following inequality holds:
	\begin{align*}
		\left\vert \Tau \right\vert \leq \const S^6 \kappa d^2 \log(T/\delta)^2 \; .
	\end{align*}
	\end{restatable}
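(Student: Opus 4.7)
The strategy is to show that every round $t\in\Tau$ must contribute a nontrivial amount of ``energy'' along the direction $a_t$ relative to the data-dependent design matrix used to define $\Theta_t$, and then invoke the Elliptical Potential Lemma to limit how many such rounds can occur.

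First I would translate the violation of \eqref{eq:cond_measurable} into a geometric statement about $\Theta_t$. If $t\in\Tau$, there exists $u\in\{0,1\}$ with $\dot\mu(a_t\transp\bar\theta_t)\geq 2\dot\mu(a_t\transp\theta^u_t)$. Applying the self-concordance inequality \eqref{eq:link_mu} to $x=a_t\transp\bar\theta_t$ and $y=a_t\transp\theta^u_t$ yields $\exp(|a_t\transp(\bar\theta_t-\theta^u_t)|)\geq 2$, so $|a_t\transp(\bar\theta_t-\theta^u_t)|\geq \log 2$. Since both $\bar\theta_t$ and $\theta^u_t$ lie in $\Theta_t$ (by construction in \cref{eq:defbaru} and \cref{eq:def_theta_bar}), this means that the directional diameter of $\Theta_t$ along $a_t$ is at least $\log 2$.

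Next I would relate this directional diameter to the matrix $\tilde{\mbold{V}}_t \defeq \sum_{a\in\mcal{H}_t} aa\transp/\kappa + \gamma_{t-1}(\delta)\mbold{I}_d$ used to define the ellipsoidal part of $\Theta_t$. Since $\Theta_t$ is contained in the ellipsoid $\{\|\theta-\hat\theta^\mcal{H}_t\|^2_{\tilde{\mbold{V}}_t}\leq \beta_{t-1}(\delta)\}$, a Cauchy--Schwarz argument (identical to the one used in the proof of \cref{prop:warmup}) gives
\begin{align*}
\log 2 \;\leq\; |a_t\transp(\bar\theta_t-\theta^u_t)| \;\leq\; 2\sqrt{\beta_{t-1}(\delta)}\,\|a_t\|_{\tilde{\mbold{V}}_t^{-1}}.
\end{align*}
Squaring and rearranging yields the per-round lower bound $\|a_t/\sqrt{\kappa}\|^2_{\tilde{\mbold{V}}_t^{-1}}\geq (\log 2)^2/(4\kappa\beta_T(\delta))$ for all $t\in\Tau$. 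Crucially, $\tilde{\mbold{V}}_t$ is built from arms that entered $\mcal{H}$ \emph{before} round $t$, so $a_t$ is not yet present in $\tilde{\mbold{V}}_t$; moreover, when $t\in\Tau$ the matrix is updated as $\tilde{\mbold{V}}_{t+1}=\tilde{\mbold{V}}_t+a_ta_t\transp/\kappa$ (up to the harmless change of the regularizer), which is exactly the structure required to apply the Elliptical Potential Lemma.

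I would then conclude by invoking \cref{lemma:ellipticalpotential} on the subsequence indexed by $\Tau$ with the rescaled actions $a_t/\sqrt{\kappa}$, giving
\begin{align*}
\sum_{t\in\Tau} \min\!\left(1,\,\|a_t/\sqrt{\kappa}\|^2_{\tilde{\mbold{V}}_t^{-1}}\right) \;\leq\; 2d\log\!\left(1+|\Tau|/d\right).
\end{align*}
Since $\tilde{\mbold{V}}_t\succeq \mbold{I}_d$ ensures $\|a_t/\sqrt{\kappa}\|^2_{\tilde{\mbold{V}}_t^{-1}}\leq 1/\kappa\leq 1$, the $\min$ is inactive and combining with the per-round lower bound yields
\begin{align*}
|\Tau|\cdot \frac{(\log 2)^2}{4\kappa\beta_T(\delta)} \;\leq\; 2d\log(1+|\Tau|/d).
\end{align*}
Solving this transcendental inequality (via the standard $x\leq c\log(1+x)\Rightarrow x\leq 2c\log(1+c)$ trick) and substituting the bound $\beta_T(\delta)\leq \const\, S^6 d\log(T/\delta)$ from \cref{eq:def_beta} produces the claimed estimate $|\Tau|\leq \const\, S^6\kappa d^2\log(T/\delta)^2$. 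The main technical obstacle is the bookkeeping around indexing---keeping $\tilde{\mbold{V}}_t$ strictly $\mcal{F}_t$-measurable (free of $a_t$) so the Elliptical Potential Lemma applies---and handling the fact that $\Theta_t$ is the intersection of the ellipsoid with $\Theta_1$, which only helps (shrinks the diameter) and so does not affect the lower bound argument.
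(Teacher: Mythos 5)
Your proposal is correct and follows essentially the same route as the paper's proof: self-concordance (\cref{eq:link_mu}) turns a violation of \eqref{eq:cond_measurable} into the directional gap $\vert a_t\transp(\bar\theta_t-\theta_t^u)\vert\geq\log 2$, Cauchy--Schwarz against the ellipsoid $\Theta_t$ (whose design matrix excludes $a_t$ and is updated exactly on rejected rounds) converts this into elliptical-potential mass, and \cref{lemma:ellipticalpotential} with the $a_t/\sqrt{\kappa}$ rescaling plus $\beta_T(\delta)\leq\const S^6 d\log(T/\delta)$ yields the bound. The only (immaterial) difference is that you close via a per-round lower bound and a transcendental inequality in $\vert\Tau\vert$, whereas the paper simply sums the squared gaps and bounds the potential by $d\log(T)$ using $\vert\Tau\vert\leq T$.
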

	We follow a naive (but sufficient) bounding strategy. For rounds $t\in\Tau$ we crudely bound the instantaneous regret by its maximal value (\emph{e.g.} 1);
	\begin{align*}
		\regret(T) 
		&\leq  \left\vert \Tau\right\vert + \sum_{ t\in[T]\setminus \Tau} \dot\mu(a_{\star,t}\transp\theta_\star)-\dot\mu(a_t\transp\theta_\star) &(\mu\in(0,1))\\
		&\leq \const S^6 \kappa d^2 \log(T/\delta)^2 + R_T &(\text{\cref{lemma:bounding_tau}})
	\end{align*}
	where $R_T \defeq  \sum_{t\notin\Tau} \dot\mu(a_{\star,t}\transp\theta_\star)-\dot\mu(a_t\transp\theta_\star)$. In the following, we denote for $t\notin\tau$:
\begin{align*}
	(a_t,\tilde\theta_t) \in \argmax_{\mcal{A}_t,\mcal{C}_{t}(\delta)} a\transp\theta\; .
\end{align*}
where $\mcal{C}_t(\delta) = \{ \theta, \;   \left\|\theta_{t}-\theta\right\|^2_{\Wmat_t}\leq \eta_t(\delta)\}$ and $\eta_t(\delta)$ is defined in \cref{eq:def_eta}. 
In the following, \underline{we assume that the following event holds}:
\begin{align*}
	E_\delta = \left\{ t\in[T]\setminus \Tau, \; \theta_\star \in\mcal{C}_t(\delta) \cap\Theta_t\right\} \; ,
\end{align*} 
This happens \underline{with probability at least $1-2\delta$ }according to \cref{thm:conf_set_dd}. This implies that the couple $(a_t,\tilde\theta_t)$ is optimistic. Formally: $a_t\transp\tilde\theta_t \geq a_{\star,t}\transp\theta_\star$. 
 Therefore: 
\begin{align*}
	R(T) &= \sum_{ t\in[T]\setminus \Tau} \mu(a_{\star,t}\transp\theta_\star)-\mu(a_t\transp\theta_\star)\\
	&\leq \sum_{ t\in[T]\setminus \Tau} \mu(a_t\transp\tilde\theta_t)-\mu(a_t\transp\theta_\star) &(\text{optimism, } \mu\nearrow)\\
	&\leq \sum_{ t\in[T]\setminus \Tau} \dot\mu(a_t\transp\theta_\star) a_t\transp(\tilde\theta_t-\theta_\star) + \tilde\alpha(a_t\transp\theta_\star,a_t\transp\tilde\theta_t)(a_t\transp(\tilde\theta_t-\theta_\star))^2 &(\text{Taylor, }\vert\ddot\mu\vert\leq \dot\mu)\\
	&=: R_1(T) + R_2(T)\; .
\end{align*} 
The bound on $R_2(T)$ is directly extracted from the proof of \cref{thm:regret_ofuecolog} presented in \cref{app:regret_ofu}.
\begin{align*}
	R_2(T)  \leq  \const d^2 \kappa S^2 \log(T/\delta)^2\; .
\end{align*}

The story is slightly different for $R_1(T)$ and the proof laid out in \cref{app:regret_ofu} needs to be slightly adapted. We need to differentiate the rounds where $\dot\mu(a_t\transp\theta_\star)\leq \dot\mu(a_t\transp\theta_{t+1})$ and the rounds where $\dot\mu(a_t\transp\theta_\star)\geq \dot\mu(a_t\transp\theta_{t+1})$. \underline{In what follows we focus only on the latter} (for the former we can directly adapt the approach laid out in \cref{app:regret_ofu}).

\begin{align*}
	R_1(T) &=  \sum_{ t\in[T]\setminus \Tau} \dot\mu(a_t\transp\theta_\star) a_t\transp(\tilde\theta_t-\theta_\star) \\
	&\leq  \sum_{ t\in[T]\setminus \Tau} \dot\mu(a_t\transp\theta_{t+1}) a_t\transp(\tilde\theta_t-\theta_\star) + a_t\transp(\tilde\theta_t-\theta_\star)\vert a_t\transp(\theta_\star-\theta_{t+1})\vert &(\text{Taylor, } \vert\ddot\mu\vert\leq 1)\\
	&\leq  \sum_{t\in[T]\setminus \Tau} \dot\mu(a_t\transp\theta_{t+1}) \|a_t\|_{\Wmat_{t}^{-1}}\|\tilde\theta_t-\theta_\star\|_{\Wmat_t} +  \|a_t\|_{\Wmat_{t}^{-1}}^2\|\tilde\theta_t-\theta_\star\|_{\Wmat_t}\|\theta_{t+1}-\theta_\star\|_{\Wmat_t}&(\text{Cauchy-Schwarz}) \\
	&\leq \const \sqrt{\eta_T(\delta)} \sum_{t\in[T]\setminus \Tau} \dot\mu(a_t\transp\theta_{t+1}) \|a_t\|_{\Wmat_{t}^{-1}} + \const \sqrt{\eta_T(\delta)} \sum_{t\in[T]\setminus \Tau} \|a_t\|^2_{\Wmat_{t}^{-1}}\|\theta_{t+1}-\theta_\star\|_{\Wmat_t} &(\tilde\theta_t\in\mcal{C}_t(\delta), E_\delta \text{ holds})\\
	&\leq \const \sqrt{\eta_T(\delta)} \sum_{t\in[T]\setminus \Tau} \dot\mu(a_t\transp\theta_{t+1}) \|a_t\|_{\Wmat_{t}^{-1}} + \const \sqrt{\eta_T(\delta)} \sum_{t\in[T]\setminus \Tau} \|a_t\|^2_{\Wmat_{t}^{-1}}\|\theta_{t+1}-\theta_\star\|_{\Wmat_{t+1}} &(\Wmat_t \succeq \Wmat_{t+1})\\
	&\leq \const \sqrt{\eta_T(\delta)} \sum_{t\in[T]\setminus \Tau} \dot\mu(a_t\transp\theta_{t+1}) \|a_t\|_{\Wmat_{t}^{-1}} + \const \eta_T(\delta)\sum_{t\in[T]\setminus \Tau} \|a_t\|^2_{\Wmat_{t}^{-1}} &(\theta_{t+1}\in\mcal{C}_{t+1}(\delta), E_\delta \text{ holds})
\end{align*}
The second term in the above inequality is bounded exactly as in $R_2(T)$; this yields:
\begin{align*}
	R_1(T) &\leq \const \sqrt{\eta_T(\delta)} \sum_{t\in[T]\setminus \Tau} \dot\mu(a_t\transp\theta_{t+1}) \|a_t\|_{\Wmat_{t}^{-1}} + \const \kappa S^2 d^2\log(T)^2&(\text{cf. bound on } R_2(T))\\
	&\leq \const \sqrt{\eta_T(\delta)} \sqrt{\sum_{t\in[T]\setminus \Tau} \dot\mu(a_t\transp\theta_{t+1})}\sqrt{ \sum_{t\in[T]\setminus \Tau} \dot\mu(a_t\transp\theta_{t+1}\|a_t\|^2_{\Wmat_{t}^{-1}}} + \const \kappa S^2 d^2\log(T/\delta)^2&(\text{Cauchy-Schwarz})\\
	&\leq \const Sd\log(T/\delta) \sqrt{\sum_{t\in[T]\setminus \Tau} \dot\mu(a_t\transp\theta_{t+1})} + \const \kappa S^2 d^2\log(T/\delta)^2&(\text{\cref{lemma:ellipticalpotential}})\\
	&\leq \const Sd\log(T/\delta) \sqrt{\sum_{t\in[T]\setminus \Tau} \dot\mu(a_t\transp\theta_\star)} + \const \kappa S^2 d^2\log(T/\delta)^2&(\text{by hyp.})
\end{align*}

Again, by following the proof of Theorem 1 from \cite{abeille2020instance} we get that:
\begin{align*}
	\sum_{t\in[T]\setminus \Tau} \dot\mu(a_t\transp\theta_{\star}) \leq R_T + \sum_{t\in[T]\setminus \Tau} \dot\mu(a_{\star,t}\transp\theta_\star)\; .
\end{align*}
Assembling the different bounds and solving the implicit inequation on $R_T$ yields the announced result.
\end{proof}

	\subsubsection{Proof of \cref{lemma:bounding_tau}}\label{app:boundingtau}
	
	\boundingTau*
	
	\begin{proof}
	Denote for $u\in\{0,1\}$:
	\begin{align*}
		\Tau_u \defeq \left\{t\in[T], \;  \dot\mu(a_t\transp\bar\theta_t)\geq 2 \dot\mu(a_t\transp\theta_t^u)\right\}\; ,
	\end{align*}
	so that $\Tau = \Tau_0\cup\Tau_1$.
	By \citet[Lemma 9]{abeille2020instance} we know that:
	\begin{align*}
		\dot\mu(a_t\transp\bar\theta_t) \leq \dot\mu(a_t\transp\theta_t^u)\exp\left(\vert a_t\transp(\bar\theta_t-\theta_t^u)\vert\right)
	\end{align*}
	Therefore by straight-forward manipulations:
	\begin{align}
		t\in\Tau \Longrightarrow \exists u\in\{0,1\}\text{ s.t } \vert a_t\transp(\bar\theta_t-\theta_t^{u})\vert \geq \log(2)\; .\label{eq:csq_tau0}
	\end{align}
	We can now bound $\vert \Tau\vert$  thanks to the form of $\Theta_t$ (which contains $\bar\theta_t$ and $\theta_t^u$ by construction) and the Elliptical Potential lemma.
	\begin{align*}
		\vert \Tau \vert \log(2)^2 &\leq  \sum_{t\in\Tau} \left\vert a_t\transp(\bar\theta_t-\theta_t^u)\right \vert^2 \\
		&\leq \sum_{t\in\Tau_0} \left\|a_t\right\|^2_{(\Vmat_t^\mcal{H})^{-1}} \left\|\bar\theta_t-\theta_t^u\right\|^2_{\Vmat^{\mcal{H}}_t} &(\text{Cauchy-Schwarz})\\
		&\leq 4\beta_T(\delta) \sum_{t\in\Tau} \left\|a_t\right\|^2_{(\Vmat_t^\mcal{H})^{-1}} &(\bar\theta_t, \, \theta_t^u\in\Theta_t)\\
		&\leq 8 \kappa \beta_T(\delta) d\log(T) &(\text{\cref{lemma:ellipticalpotential}})\\
	\end{align*}
	We applied \cref{lemma:ellipticalpotential} with $x_s = a_s/\sqrt{\kappa}$, and after checking that in \cref{alg:dilutedofuecolog} the matrix $\Vmat^{\mcal{H}}_t$ is indeed updated in rounds $t\in\Tau$. This conclude the proof since $\beta_t(\delta)\leq \const S^6d\log(T/\delta)$.

	\end{proof}

\newpage
\section{Computational Costs}\label{app:ccost}

\subsection{Proof of \cref{prop:per_round_cost,prop:dilutedcomplexity}}
\label{app:cost}
The goal of this section is to examine the per-round complexity of the algorithms laid out in the main paper. 

\subsubsection{Per-Round Cost of \texttt{ECOLog}}

We start by the main computational bottleneck of our approach, which is the \texttt{ECOLog} procedure (see its sequential form in \cref{alg:ecolog2}). It involves computing $\theta'_{t+1}$ - an $\eps_t$-approximation (in $\ell_2$-norm) of $\theta_{t+1}$, and
updating the matrix $\Wmat_{t+1}$ (along with its inverse which will be used for the planning mechanism). 
We claim the following result, a slightly more detailed version of \cref{prop:mt_cost_ecolog} in the main text. 

\begin{restatable}{prop}{propcomputationalcost}\label{prop:computational_cost}
Fix $t\in\mbb{N}^+$. Assume that $\Theta_t$ is a bounded and closed ellipsoid and $\eps_t>0$. Completing round $t$ of \textnormal{\texttt{ECOLog}} can be done within $\bigo{d^2\log(\diam(\Theta_t))/\varepsilon_t)^2}$ operations.
\end{restatable}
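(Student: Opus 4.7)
The per-round cost of \texttt{ECOLog} decomposes into two pieces: the approximate solve for $\theta_{t+1}$, and the rank-one update $\Wmat_{t+1}\leftarrow \Wmat_t + \dot\mu(a_t^\top\theta'_{t+1})a_t a_t^\top$ together with the parallel Sherman--Morrison maintenance of $\Wmat_{t+1}^{-1}$. The latter is manifestly $O(d^2)$, so the proof reduces to controlling the cost of the minimization.

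For the minimization, the plan is to run a first-order method preconditioned by $\Wmat_t^{-1}$ (equivalently, gradient descent in the $\Wmat_t$-metric). Preconditioning is essential because $\lambda_{\max}(\Wmat_t)$ can grow linearly with $t$, which would give an unbounded condition number for plain gradient descent on $F(\theta) = (2+D)^{-1}\|\theta-\theta'_t\|^2_{\Wmat_t} + \ell_{t+1}(\theta)$. The crucial observation is that $\Wmat_t \succeq I$ by induction from $\Wmat_1 = I$ and positivity of the rank-one updates; combined with $\|a_t\|\leq 1$ this sandwiches the Hessian of $F$ in the preconditioned norm between $\tfrac{2}{2+D}I$ and $(\tfrac{2}{2+D}+\tfrac{1}{4})I$, a uniformly $O(1)$-conditioned quadratic form. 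Standard linear-convergence analysis for strongly convex gradient-Lipschitz objectives then bounds the outer iteration count by $O(\log(\diam(\Theta_t)/\varepsilon_t))$, with the accuracy translated from the $\Wmat_t$-norm to the $\ell_2$-norm using once more $\Wmat_t\succeq I$.

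Each outer iteration requires (i) one gradient evaluation and one application of the preconditioner $\Wmat_t^{-1}$, both $O(d^2)$ using the maintained inverse, and (ii) a projection onto $\Theta_t$ in the $\Wmat_t$-metric. Since $\Theta_t$ is an ellipsoid, this projection is a QCQP whose Lagrangian dual reduces to a monotone one-dimensional equation in the multiplier $\lambda\geq 0$ of the form $\theta(\lambda) = (\Wmat_t + \lambda A_t)^{-1}(\Wmat_t\theta^{k+1/2} + \lambda A_t c_t)$. Bisection solves this to accuracy $\varepsilon_t$ in $O(\log(\diam(\Theta_t)/\varepsilon_t))$ steps, each warm-started from the previous bisection iterate and executable in $O(d^2)$ using the maintained inverse together with a Woodbury update for the rank-one correction by $\lambda A_t$. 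This yields $O(d^2\log(\diam(\Theta_t)/\varepsilon_t))$ per outer iteration, and multiplying by the $O(\log(\diam(\Theta_t)/\varepsilon_t))$ outer iterations gives the announced bound.

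The delicate point is coordinating the accuracy of the inner (approximate) projection with that of the outer (approximate) gradient iteration so the overall procedure still attains $\varepsilon_t$ accuracy. The plan is to invoke an inexact-oracle version of the linear-convergence guarantee (e.g.\ in the spirit of Schmidt, Le Roux and Bach) and set the inner projection tolerance proportional to $\varepsilon_t$, then verify that the outer iteration count remains $O(\log(\diam(\Theta_t)/\varepsilon_t))$. Given the uniform $O(1)$ condition number and the strong convexity of both the outer objective and the projection sub-problem, I expect this step to be routine, but it is where the proof has to commit to explicit constants and to a specific inexact-projection theorem.
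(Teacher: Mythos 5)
Your proposal follows the same architecture as the paper's proof: the $\bigo{d^2}$ rank-one update of $\Wmat_{t+1}$ with Sherman--Morrison maintenance of the inverse, the observation that $\Wmat_t\succeq \mbold{I}_d$, $\|a_t\|\leq 1$ and $\dot\mu\leq 1/4$ sandwich the (preconditioned) Hessian between $\tfrac{2}{2+D}\mbold{I}_d$ and $\bigl(\tfrac{2}{2+D}+\tfrac14\bigr)\mbold{I}_d$, the $\bigo{\log(\diam(\Theta_t)/\eps_t)}$ outer iteration count from linear convergence (the paper's \cref{prop:convex_optim_precision}), and the reduction of the ellipsoidal projection to a one-dimensional dual. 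Gradient descent in the $\Wmat_t$-metric is mathematically the same as the paper's Cholesky change of variables $z=\Lmat_t\transp\theta$.

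However, your cost accounting for the projection subproblem fails at a concrete point: you claim each bisection iterate costs $\bigo{d^2}$ ``using the maintained inverse together with a Woodbury update for the rank-one correction by $\lambda A_t$.'' The shape matrix $A_t$ of $\Theta_t$ is \emph{full rank} (e.g., after warm-up, $\Theta$ is defined by $\Vmat_\tau=\sum_s a_sa_s\transp/\kappa + \lambda_\tau(\delta)\mbold{I}_d$), so $\lambda A_t$ is not a rank-one perturbation; Woodbury gives no $\bigo{d^2}$ route from $\Wmat_t^{-1}$ to $(\Wmat_t+\lambda A_t)^{-1}$, and refactorizing the pencil at each new value of $\lambda$ costs $\bigo{d^3}$, which would inflate your bound to $\bigo{d^3\log(\diam(\Theta_t)/\eps_t)^2}$. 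The difficulty arises precisely because working in the $\Wmat_t$-metric couples the two quadratic forms at projection time. The paper decouples them: it whitens first with the Cholesky factor $\Lmat_t$ of $\Wmat_t$ (maintained online at $\bigo{d^2}$ per round via a rank-one Cholesky update), so that projected gradient descent runs in the Euclidean metric with the same $O(1)$ condition number you derived, and the projection step becomes a Euclidean projection onto a single ellipsoid, reduced by \cref{lemma:proj_ellipsoid} to a one-dimensional strongly concave program whose evaluations cost $\bigo{d^2}$ (matrix-vector products and triangular solves). Substituting this whitening step for your metric-GD-plus-pencil projection repairs the argument. Your final concern about coordinating inner and outer accuracies via an inexact-oracle theorem is also unnecessary here: the paper treats the projections within \cref{prop:convex_optim_precision} as exact and absorbs the overall optimization error $\eps_t$ into the confidence-set radius through the $\sum_s s\eps_s^2$ term of \cref{lemma:decomp} and \cref{thm:conf_set}, so no result in the spirit of Schmidt--Le Roux--Bach is invoked.
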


\begin{proof}
Given $\theta'_{t+1}$ the matrix $\Wmat_{t+1}$ can be updated at cost $\bigo{d^2}$ since:
\begin{align*}
	\Wmat_{t+1} = \Wmat_t + \dot\mu(a_t\transp\theta'_{t+1})a_ta_t \; . 
\end{align*}
The cost of maintaining $\Wmat_{t+1}^{-1}$ is the same thanks to the Sherman-Morrison formula. 
The main computational complexity therefore stems from the computation of $\theta'_{t+1}$. Recall:
\begin{align*}
	\theta_{t+1} = \argmin_{\theta\in\Theta_t} \frac{1}{2+D}\left\|\theta-\theta'_t\right\|_{\Wmat_{t}}^2 + \ell_{t+1}(\theta)\; ,
\end{align*}
where $D\geq \diam_\mcal{A}(\Theta_t)$.  Let $\Wmat_t =\Lmat_t\Lmat_t\transp$ the Cholesky decomposition of $\Wmat_t$ (it exists since $\Wmat_t$ is p.s.d). By denoting $z_t = \Lmat_t\transp\theta'_t$, performing the change of variable $z\leftarrow\Lmat_t\transp\theta$ and removing constants we obtain:
\begin{align*}
	\theta_{t+1} =  \Lmat_t\minustransp \argmin_{\Lmat_t\minustransp z\in\Theta} \left( \bar{L}_{t+1}(z) \defeq \frac{1}{2+D}\|z\|^2 + \frac{2}{2+D}z\transp z_t + \ell_{t+1}(\Lmat_t\minustransp z)\right)\; . 
\end{align*}
By direct computations:
\begin{align*}
	\nabla^2 \bar{L}_{t+1}(z) = (1+D/2)^{-1}\mbold{I}_d + \dot{\mu}(a_t\transp\Lmat_t\minustransp z)\Lmat_t^{-1}a_ta_t\transp\Lmat_t\minustransp\; .
\end{align*}
proving that for all $z\in\mbb{R}^d$ (using the fact that $\dot\mu\in[0,1/4]$ and $\Wmat_{t}\succeq \mbold{I}_d$) :
\begin{align*}
	0 \prec (1+D/2)^{-1} \preceq \nabla^2 \bar{L}_{t+1}(z) \preceq (1+D/2)^{-1}+1/4\; .
\end{align*}
The function $\bar{L}_{t+1}(z)$ is therefore strongly convex and $(5/4+D/8)^{-1}$ well-conditioned. Furthermore, note the convexity of the constraint $\{z,\; \Lmat_t\minustransp z\in\Theta\}$ since $\Theta$ itself is convex. 

Let $\theta'_{t+1}$ be returned by the Projected Gradient Descent algorithm (see \cite[Algorithm 2]{hazan2016intro} for instance) ran for $T$ steps, where:
\begin{align*}
	T = (9/4+D/8)\log(\diam(\Theta_t)/\eps_t)\; .
\end{align*}
By \cref{prop:convex_optim_precision} this ensures that:
\begin{align*}
	\| \theta_{t+1} - \theta_{t+1}'\| \leq \eps_t\; ,
\end{align*}
which is enough to complete round $t$ of the \texttt{ECOLog} procedure. 
Because the gradients of $\bar{L}_{t+1}(\theta)$ only take $\bigo{d^2}$ operations to compute, the cost of running the Projected Gradient Descent algorithm for $T$ rounds is $\bigo{T(d^2+\proj(\Theta_t}))$. The quantity $\proj(\Theta_t)$ is the cost of projection the estimate on the set $\{\Lmat_t\minustransp z\in\Theta_t\}$. This constraint set is ellipsoidal since $\Theta_t$ is an ellipsoid (by assumption). Projecting on this set therefore boils down to solving a one-dimensional convex problem (see \cref{lemma:proj_ellipsoid}). Similarly, this program is solved to accuracy $\eps$ in $\bigo{d^2\log(1/\eps)}$ (it involves some matrix-vector multiplications and triangular inverse solving, hence the $d^2$ dependency). 
To finish the proof we are therefore left with evaluating the cost of computing the Cholesky factor $\Lmat_t$. This quantity can be maintained online and updated at cost $\bigo{d^2}$ thanks to the rank-one nature of $\Wmat_{t}$'s update (see for instance \citet[Section 6.5.4]{golub13}). 
\end{proof}

\subsubsection{Proof of \cref{prop:per_round_cost}}\label{app:proof_per_round_cost_no_more_naming_please}

\propperroundcost*

\begin{proof}
Recall that in \texttt{OFU-ECOLog} we have $\Theta_t\equiv \Theta$ where $D=1$ satisfies $D\geq \diam_\mcal{A}(\Theta)$ (see \cref{prop:diameter_warmup}) and $\eps_t=1/t$. Easy computations further show that $\diam(\Theta)\leq \poly(S)$; for instance, a crude bound yields that for all $\theta_1,\theta_2\in\Theta$
\begin{align*}
	\left\| \theta_1-\theta_2 \right\|^2 &\leq\lambda_\tau(\delta)^{-1} \left\| \theta_1-\theta_2 \right\|^2_{\Vmat_\tau} &(\Vmat_\tau\geq\lambda_\tau(\delta)\mbold{I}_d)\\
	&\leq 4 \lambda_\tau(\delta)^{-1} \beta_\tau(\delta) &(\theta_1,\theta_2\in\Theta) \\
	&= \poly(S) &\text{(see \cref{eq:def_lambda,eq:def_beta})}
\end{align*}
 \cref{prop:computational_cost} hence ensures that the cost running the \texttt{ECOLog} routine at round $t$ of \texttt{OFU-ECOLog} is at most $\const d^2\log(\poly(S)t)^2$. The optimistic planning mechanism requires performing $K$ matrix-vector products, which cost is $\bigo{Kd^2}$. This finishes the proof. 
\end{proof}

\begin{rem*}
The proof discards the cost of the warm-up; its only computational bottleneck is the computation of $\hat\theta_\tau$. This happens only once and boils down to the minimization of a well-conditioned (after preconditioning by $\Vmat_\tau$) convex function - which is therefore cheap, typically $\bigo{\tau\log(T)}$ where $\tau$ is the length of the warm-up. 
\end{rem*}

\subsubsection{Proof of \cref{prop:dilutedcomplexity}}

\propdilutedcomplexity*

\todo{Louis: here we have a slight overselling in the statement; missing some $d$ and $\log(T)$} 

\begin{proof}
The proof is essentially the same as for \cref{prop:per_round_cost}. The main difference is the value of $\diam_\mcal{A}(\Theta)$; it is now bounded by $\poly(S)$ (by using a similar argument that in \cref{app:proof_per_round_cost_no_more_naming_please} when we bounded $\diam(\Theta)$). As discussed in the main text there is however an additional cost inherited from the computations of $\hat\theta_t^\mcal{H}$. This requires minimizing a well-conditioned (after preconditioning by $\Vmat^{\mcal{H}}_t$) convex function which gradients are computed at $\bigo{d\vert \mcal{H}_t\vert}$ cost. \cref{lemma:bounding_tau} proves that $\vert \mcal{H}_t\vert\leq \kappa$; therefore the computational overhead is $\bigo{\kappa d\log(T)}$. Note that precisely because of \cref{lemma:bounding_tau}, it turns out that this extra-cost only needs to be paid at most $\approx \kappa$ times (and not at every round as suggested by \cref{prop:dilutedcomplexity}). 
\end{proof}

\subsection{Computational Costs of Other Approaches}\label{app:cost_others}
We briefly discuss the computational cost we announced in \cref{tab:regret_comparison} for \texttt{GLM-UCB} \cite{filippi2010parametric} and \texttt{OFULog-r} \cite{abeille2020instance}. Both require the computation of the MLE estimator:
\begin{align*}
	\hat\theta _{t+1} = \argmin_\theta \left\{ L_{t+1}(\theta) \defeq \sum_{s=1}^t \ell_{s+1}(\theta) + \lambda\|\theta\|^2\right\}\; .
\end{align*}
An efficient way to solve $\hat\theta_{t+1}$ to $\eps$ accuracy (typically with $\eps=1/T$ to preserve regret guarantees) is to run a gradient descent (GD) algorithm with $\Vmat_t$-preconditioning. This step is important as in all generality $L_{t+1}$ can be $1/t$ well-conditioned; running GD directly on $L_{t+1}$ will therefore require $\bigo{t\log(1/\eps}$ to reach $\eps$-accuracy. Preconditioning allows to reduce this cost to $\bigo{\log(1/\eps)}$. The cost of computing the gradient of $L_{t+1}$ (and its pre-conditioned version) is still high, typically $\Omega(t)$ (more precisely, $\Omega(d^2t)$ for the preconditioned version which involves matrix-vector multiplication). Overall, the cost of computing $\hat\theta_{t+1}$ to $\eps$ accuracy is therefore $\bigo{d^2t\log(1/\eps)}$. 
  \todo{This means that we could put $\bigotilde{}$ for these two approach in the table ..}

For \texttt{GLM-UCB} a $\bigo{d^2K}$ additional cost is to be added to account for the optimistic planning. Things are worse for \texttt{OFULog-r} as at round $t$ and \emph{for every arm} $a\in\mcal{A}$ it needs to solve a convex program of the form:
\begin{align*}
	\max_\theta \left\{ a\transp\theta \text{ s.t } L_{t+1}(\theta) \leq \gamma_t(\delta) \right\}\; .
\end{align*}
Projecting on the set $\{ L_{t+1}(\theta) \leq \gamma_t(\delta)\}$ is as costly as computating of $\hat\theta_{t+1}$, hence the additional $\bigotilde{Kd^2T}$ computational cost.

\newpage
\section{AUXILIARY RESULTS}\label{app:aux}

The following version of the Elliptical Potential lemma (see, \emph{e.g}, \cite[Lemma 11]{abbasi2011improved}) is a direct consequence of \cite[Lemma 15]{faury2020improved} along with the determinant-trace inequality (see \cref{lemma:determinant_trace_inequality}).

\begin{lemma}[Elliptical potential]\label{lemma:ellipticalpotential}
    Let $\lambda\geq 1$ and $\{x_s\}_{s=1}^\infty$ a sequence in $\mbb{R}^d$ such that $\|{x_s}\|\leq X$ for all $s\in\mbb{N}$. Further, define $\mbold{V}_t \defeq \sum_{s=1}^{t} x_sx_s\transp+\lambda\mbold{I}_d$. Then:
    $$
        \sum_{t=1}^{T} \left\lVert x_{t}\right\rVert_{\mbold{V}_{t-1}^{-1}}^2 \leq 2d(1+X^2)\log\left(1 + \frac{TX^2}{d\lambda}\right)
   $$
\end{lemma}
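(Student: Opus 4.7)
The plan is to chain two existing auxiliary results, as already announced in the text. First, I would invoke Lemma~15 of \cite{faury2020improved}, which provides a bound of the form
\begin{align*}
    \sum_{t=1}^T \|x_t\|^2_{\mbold{V}_{t-1}^{-1}} \leq 2(1+X^2)\log\!\left(\frac{\det \mbold{V}_T}{\det (\lambda\mbold{I}_d)}\right).
\end{align*}
The underlying mechanism there is the matrix-determinant lemma identity $\det \mbold{V}_t = \det \mbold{V}_{t-1}\,(1+\|x_t\|^2_{\mbold{V}_{t-1}^{-1}})$, after which one uses the elementary inequality $u \leq c^{-1}(1+c)\log(1+u)$ for $u\in[0,c]$ (or simply $u \leq 2\log(1+u)$ on $[0,1]$) in order to replace $\|x_t\|^2_{\mbold{V}_{t-1}^{-1}}$ by $\log(1+\|x_t\|^2_{\mbold{V}_{t-1}^{-1}})$ and then telescope. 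The condition $\lambda\geq 1$ is what guarantees $\|x_t\|^2_{\mbold{V}_{t-1}^{-1}} \leq X^2/\lambda \leq X^2$, so that the elementary inequality applies with $c = X^2$ and yields the prefactor $(1+X^2)$.

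Second, I would apply the determinant-trace inequality (\cref{lemma:determinant_trace_inequality}), which for a p.s.d. matrix states $\det \mbold{V}_T \leq (\mathrm{tr}\,\mbold{V}_T/d)^d$. Combined with the trivial trace bound $\mathrm{tr}\,\mbold{V}_T \leq d\lambda + \sum_{t=1}^T \|x_t\|^2 \leq d\lambda + TX^2$, and dividing by $\det(\lambda\mbold{I}_d)=\lambda^d$, this yields
\begin{align*}
    \log\!\left(\frac{\det \mbold{V}_T}{\det (\lambda\mbold{I}_d)}\right) \;\leq\; d\log\!\left(1 + \frac{TX^2}{d\lambda}\right).
\end{align*}

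Plugging this into the first bound produces exactly $2d(1+X^2)\log(1+TX^2/(d\lambda))$, as announced. There is no genuinely hard step: the logarithmic potential trick is packaged into the cited Lemma~15, the determinantal volume growth is packaged into the trace-determinant inequality, and the only point that deserves verification is the boundedness $\|x_t\|^2_{\mbold{V}_{t-1}^{-1}} \leq X^2$ needed to apply the elementary scalar inequality — which is immediate from $\mbold{V}_{t-1} \succeq \lambda \mbold{I}_d$ and $\lambda \geq 1$.
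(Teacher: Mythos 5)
Your proof matches the paper's: the lemma is presented there as a direct consequence of Lemma 15 of \cite{faury2020improved} combined with the determinant--trace inequality (\cref{lemma:determinant_trace_inequality}), which is exactly the two-step chain you execute, including the observation that $\lambda \geq 1$ ensures $\|x_t\|^2_{\mbold{V}_{t-1}^{-1}} \leq X^2$. One incidental slip in your parenthetical aside: the elementary scalar inequality should read $u \leq (1+c)\log(1+u)$ for $u\in[0,c]$ (from $\log(1+c)\geq c/(1+c)$), not $u \leq c^{-1}(1+c)\log(1+u)$, which fails for $c>1$ --- but since you invoke Lemma 15 as a black box, this does not affect the validity of your argument.
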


The following is extracted from \cite[Lemma 10]{abbasi2011improved}.
\begin{lemma}[Determinant-Trace inequality]\label{lemma:determinant_trace_inequality}
     Let $\{x_s\}_{s=1}^\infty$ a sequence in $\mbb{R}^d$ such that $\|{x_s}\|\leq X$ for all $s\in\mbb{N}$, and  let $\lambda$ be a non-negative scalar. For $t\geq 1$ define $\mbold{V}_t \defeq \sum_{s=1}^{t} x_sx_s\transp+\lambda\mbold{I}_d$. The following inequality holds:
     \begin{align*}
         \det(\mbold{V}_{t}) \leq \left(\lambda+tX^2/d\right)^d
     \end{align*}
\end{lemma}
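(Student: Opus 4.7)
The plan is to follow the standard spectral/AM-GM argument. The matrix $\mbold{V}_t$ is symmetric positive definite, so it admits a spectral decomposition with real positive eigenvalues $\lambda_1,\ldots,\lambda_d$. The determinant is the product of these eigenvalues and the trace is their sum, so the proof reduces to comparing the geometric and arithmetic means of the $\lambda_i$'s.

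First I would compute the trace directly from the definition of $\mbold{V}_t$. By linearity of the trace and the identity $\text{tr}(x_sx_s\transp) = \|x_s\|^2$, one obtains
\begin{align*}
\text{tr}(\mbold{V}_t) = \lambda\, \text{tr}(\mbold{I}_d) + \sum_{s=1}^t \|x_s\|^2 \leq \lambda d + tX^2,
\end{align*}
where the inequality uses the assumption $\|x_s\|\leq X$ for every $s$.

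Next I would apply the AM-GM inequality to the $d$ positive eigenvalues of $\mbold{V}_t$:
\begin{align*}
\det(\mbold{V}_t) = \prod_{i=1}^d \lambda_i \leq \left(\frac{1}{d}\sum_{i=1}^d \lambda_i\right)^d = \left(\frac{\text{tr}(\mbold{V}_t)}{d}\right)^d.
\end{align*}
Combining with the trace bound yields $\det(\mbold{V}_t) \leq \left(\lambda + tX^2/d\right)^d$, which is the claim.

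There is no real obstacle here: the only thing to verify is that $\mbold{V}_t$ is indeed symmetric positive semidefinite (immediate from $\lambda\geq 0$ and the fact that each $x_sx_s\transp$ is p.s.d.), which legitimizes both the spectral decomposition and the application of AM-GM to nonnegative reals. If $\lambda=0$ and some eigenvalue vanishes, the bound is trivial since the left-hand side is zero.
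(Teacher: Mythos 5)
Your proof is correct and is essentially the standard argument: the paper does not prove this lemma itself but cites Lemma 10 of \cite{abbasi2011improved}, whose proof is exactly your trace computation $\text{tr}(\mbold{V}_t)\leq \lambda d + tX^2$ followed by AM--GM on the eigenvalues. Your handling of the degenerate case $\lambda=0$ (AM--GM over nonnegative eigenvalues) is a sound detail and nothing is missing.
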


The following statements are standard results from the convex optimization literature. 
\begin{lemma}[Section 4.2.3 of \cite{boyd_vandenberghe_2004}] \label{fact:kkt}
	Let $f:\mbb{R}^d\to\mbb{R}$ a differentiable and convex function and $\mcal{C}\subset\mbb{R}^d$ a convex set. Further, denote:
	\begin{align*}
		x_0 := \argmin_{x\in\mcal{C}} f(x)\; .
	\end{align*}
	Then for \emph{any} $y\in\mcal{C}$:
	\begin{align*}
		\nabla f(x_0)\transp(y-x) \geq 0 \; .
	\end{align*}
\end{lemma}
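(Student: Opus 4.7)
\textbf{Proof plan for \cref{fact:kkt}.} The result is the standard first-order optimality condition for differentiable convex minimization over a convex set (note the statement contains a harmless typo: the inequality should read $\nabla f(x_0)^\top (y-x_0) \geq 0$). The plan is to reduce the multivariate statement to a one-dimensional observation about the directional derivative at the minimizer, using convexity of $\mathcal{C}$ to keep the relevant line segment feasible.

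Concretely, fix an arbitrary $y \in \mathcal{C}$. Since $\mathcal{C}$ is convex, the segment $x(t) \defeq x_0 + t(y - x_0)$ lies in $\mathcal{C}$ for every $t \in [0,1]$. Define the one-variable function $\varphi(t) \defeq f(x(t))$, which is differentiable on $[0,1]$ by the chain rule, with $\varphi'(0) = \nabla f(x_0)^\top (y - x_0)$. Because $x_0$ minimizes $f$ over $\mathcal{C}$ and $x(t) \in \mathcal{C}$, we have $\varphi(t) \geq \varphi(0)$ for all $t \in [0,1]$, hence the right derivative at $0$ is nonnegative: $\varphi'(0) = \lim_{t \downarrow 0} (\varphi(t) - \varphi(0))/t \geq 0$. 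This yields the desired inequality.

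There is essentially no obstacle: the argument is purely a differential calculus observation combined with convexity of the constraint set, and does not even require convexity of $f$ (differentiability and the minimizing property suffice). I would keep the write-up to two or three lines. As a remark, if one wants an alternative proof that exploits convexity of $f$ directly, one can use the first-order characterization $f(y) \geq f(x_0) + \nabla f(x_0)^\top (y - x_0)$ combined with $f(y) \geq f(x_0)$, but this actually gives a weaker conclusion; the segment/directional derivative argument is the clean one.
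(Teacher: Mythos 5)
Your proof is correct and is exactly the standard directional-derivative argument from Section 4.2.3 of Boyd and Vandenberghe, which the paper itself does not reprove but simply cites, and your notes (the typo $y-x$ should read $y-x_0$; convexity of $f$ is not actually needed for this direction, only differentiability, minimality of $x_0$, and convexity of $\mcal{C}$) are accurate. One small nit: the alternative you dismiss, combining $f(y)\geq f(x_0)+\nabla f(x_0)^\top(y-x_0)$ with $f(y)\geq f(x_0)$, does not yield a "weaker conclusion" — it yields no conclusion at all, since it only upper-bounds $\nabla f(x_0)^\top(y-x_0)$ by the nonnegative quantity $f(y)-f(x_0)$ — but as you discard it anyway, this is immaterial.
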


\begin{lemma}\label{prop:convex_optim_precision}
	Let $f:\mbb{R}^d\to\mbb{R}$ a twice differentiable and strongly convex function such that for all $x\in\mbb{R}^d$:
	\begin{align*}
		0\preceq \alpha \mbold{I}_d \preceq \nabla^2f(x) \preceq \beta\mbold{I}_d\;. 
	\end{align*}
	Let $\mcal{C}\subset\mbb{R}^d$ a convex set, $x_0 = \argmin_\mcal{C}f(x)$ and $\gamma=\alpha/\beta$. Let $x_{T+1}$ be the estimator returned by the projected gradient descent algorithm (Algorithm 2 in \cite{hazan2016intro}) with step-size $1/\beta$ run for $T$ rounds. For $\varepsilon>0$ setting:
	\begin{align*}
		T = (1+\gamma^{-1}) \log\left(\diam(C)/\varepsilon\right)\; ,
	\end{align*}
	ensures that $\| x_{T+1} - x_0\| \leq \varepsilon$. 
\end{lemma}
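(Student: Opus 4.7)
The statement is the classical linear-convergence guarantee for projected gradient descent on a strongly convex and smooth objective, so the plan is to prove a per-step contraction of the distance to $x_0$ and then iterate. The key device is to rewrite $x_0$ itself as a fixed point of the projected gradient map, which reduces the constrained analysis to a one-line unconstrained contraction.

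First I would invoke the first-order optimality condition at $x_0$ (i.e. \cref{fact:kkt} applied to $f$ on $\mcal{C}$): $\nabla f(x_0)\transp(y-x_0)\geq 0$ for every $y\in\mcal{C}$. This is equivalent to the identity $x_0 = \Pi_\mcal{C}(x_0-(1/\beta)\nabla f(x_0))$. Since the projection $\Pi_\mcal{C}$ onto a convex set is $1$-Lipschitz, this yields
\begin{align*}
    \|x_{t+1}-x_0\| &= \big\|\Pi_\mcal{C}(x_t - (1/\beta)\nabla f(x_t)) - \Pi_\mcal{C}(x_0 - (1/\beta)\nabla f(x_0))\big\|\\
    &\leq \big\|(x_t - x_0) - (1/\beta)(\nabla f(x_t)-\nabla f(x_0))\big\|\; .
\end{align*}

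Next I would use the mean-value expansion $\nabla f(x_t)-\nabla f(x_0) = M_t (x_t-x_0)$ with $M_t := \int_0^1 \nabla^2 f(x_0+s(x_t-x_0))\,ds$. Under the hypothesis $\alpha \mbold{I}_d \preceq \nabla^2 f \preceq \beta\mbold{I}_d$, $M_t$ inherits the same spectral bounds, so $\mbold{I}_d - M_t/\beta$ has eigenvalues in $[0,1-\gamma]$. Squaring the display above gives
\begin{align*}
    \|x_{t+1}-x_0\|^2 \;\leq\; (x_t-x_0)\transp (\mbold{I}_d - M_t/\beta)^2 (x_t-x_0) \;\leq\; (1-\gamma)^2 \|x_t - x_0\|^2\; .
\end{align*}

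Iterating this one-step contraction yields $\|x_{T+1}-x_0\|\leq (1-\gamma)^T \|x_1-x_0\| \leq (1-\gamma)^T\,\diam(\mcal{C})$, and the elementary bound $1-\gamma\leq e^{-\gamma}$ gives $\|x_{T+1}-x_0\|\leq e^{-T\gamma}\diam(\mcal{C})$. Demanding this to be at most $\varepsilon$ leads to the sufficient condition $T \geq \gamma^{-1}\log(\diam(\mcal{C})/\varepsilon)$; the prescribed choice $T=(1+\gamma^{-1})\log(\diam(\mcal{C})/\varepsilon)$ is strictly larger and therefore suffices. There is no real obstacle here: the only subtle point is the passage from the constrained minimizer $x_0$ (whose gradient is not zero in general) to the fixed-point identity under the projected gradient map, which is exactly what makes non-expansiveness of $\Pi_\mcal{C}$ bite on both iterates simultaneously.
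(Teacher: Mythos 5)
Your proof is correct, but it follows a genuinely different route from the paper's. You work directly with the iterate distances $\|x_t-x_0\|$: you convert the first-order optimality condition (\cref{fact:kkt}) into the fixed-point identity $x_0=\Pi_{\mcal{C}}\left(x_0-\nabla f(x_0)/\beta\right)$, invoke non-expansiveness of the projection, and control the unconstrained gradient map via the mean-value matrix $M_t$, getting the per-step contraction $\|x_{t+1}-x_0\|\leq (1-\gamma)\|x_t-x_0\|$. The paper instead argues through function values: it combines the strong-convexity lower bound at $x_0$ (with \cref{fact:kkt} killing the cross term) and the smoothness upper bound along the step, exploiting the variational definition of the projected update (that $x_{t+1}$ minimizes the quadratic model over $\mcal{C}$, so $x_0$ is an admissible comparator), which yields the squared-distance contraction $\|x_{t+1}-x_0\|^2\leq \frac{\beta-\alpha}{\beta+\alpha}\|x_t-x_0\|^2$ and, after $1-u\leq e^{-u}$, produces \emph{exactly} the prescribed $T=(1+\gamma^{-1})\log(\diam(\mcal{C})/\eps)$ — which explains where that particular constant comes from. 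Your route gives the slightly sharper sufficient condition $T\geq \gamma^{-1}\log(\diam(\mcal{C})/\eps)$; indeed your per-step squared rate $(1-\gamma)^2$ is never worse than the paper's $\frac{1-\gamma}{1+\gamma}$, since $(1-\gamma)^2(1+\gamma)/(1-\gamma)=1-\gamma^2\leq 1$, and the prescribed $T$ dominates your threshold, so the conclusion follows as you say. The trade-offs: your argument genuinely uses the Hessian bounds (via the mean-value matrix), whereas the paper's only uses first-order strong-convexity and smoothness inequalities and would survive without twice-differentiability; conversely, your fixed-point observation is arguably cleaner and makes the contraction mechanism transparent. Two small points you share with the paper and that are harmless: the projection requires $\mcal{C}$ closed for $\Pi_{\mcal{C}}$ (and hence $x_0$) to be well-defined, and $\|x_1-x_0\|\leq\diam(\mcal{C})$ needs the initialization $x_1\in\mcal{C}$, which Hazan's Algorithm 2 guarantees; the degenerate case $\eps>\diam(\mcal{C})$ is trivial in both treatments.
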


\begin{proof}
The proof is standard in the convex optimization literature. We remind it briefly for completeness.
\begin{align*}
	f(x_{T+1}) &\geq f(x_0) + \nabla f(x_0)\transp(x_{T+1}-x_0)+  \frac{\alpha}{2}\| x_{T+1}-x_0\|^2 \\
	&\geq  f(x_0) +  \frac{\alpha}{2}\| x_{T+1}-x_0\|^2 &(\text{\cref{fact:kkt}}) 
\end{align*}
Furthermore by convexity:
\begin{align*}
f(x_{T+1}) &\leq f(x_T) + \nabla f(x_T)\transp(x_{T+1}-x_T) + \frac{\beta}{2}\| x_{T+1}- x_T\|^2 \\
&\leq f(x_T) + \nabla f(x_T)\transp(x_0-x_T) - \frac{\beta}{2}\| x_{T+1}- x_{0}\|^2 + \frac{\beta}{2}\| x_{T}- x_0\|^2\\
&\leq f(x_0) -  \frac{\beta}{2}\| x_0- x_{T+1}\|^2+  \frac{\beta-\alpha}{2}\| x_{T}- x_0\|^2\; .
\end{align*}
The second to last inequality uses the definition of $x_{T+1}$ (given by the projected gradient descent algorithm). Plugging everything together yields:
\begin{align*}
	\| x_{T+1}-x_0\|^2  &\leq \frac{\beta-\alpha}{\beta+\alpha} \| x_{T}- x_0\|^2\\
	&\leq \left(\frac{\beta-\alpha}{\beta+\alpha}\right)^T \| x_{1}- x_0\|^2\\
	&\leq \left(1-\frac{2\alpha}{\beta+\alpha}\right)^T\diam(\mcal{C})^2\\
	&\leq \exp(-2T\alpha/(\beta+\alpha))\diam(\mcal{C})^2\; .
\end{align*}
Solving for $\| x_{T+1}-x_0\|^2 \leq \eps^2$ yields the announced result.
\end{proof}

\begin{lemma}[Ellipsoidal Projection]\label{lemma:proj_ellipsoid} Let $x\in\mbb{R}^d$ and $\mbold{A}\in\mbb{R}^{d\times d}$ a p.s.d matrix. Let $y$ be the projection of $x$ onto the set $\{z, \|z\|^2_{\mbold{A}}/2\leq 1\}$. Then $ y =   (\mbold{I}_d+\lambda_\star\mbold{A}^{-1})^{-1} x$ where $\lambda_\star$ is the solution of the following one-dimensional strongly concave program:
\begin{align*}
	\lambda_\star =  \argmax_{\lambda\geq 0} - 2\lambda - x\transp \mbold{A}^{1/2}(\lambda\mbold{I}_d+\mbold{A})^{-1} \mbold{A}^{1/2} x\; .
\end{align*}
\end{lemma}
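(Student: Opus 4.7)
My plan is to treat the projection as a convex quadratic program with a single quadratic inequality constraint and recover the claimed formula by Lagrangian duality. The projection problem is
\[
y = \argmin_{z}\; \tfrac{1}{2}\|z-x\|^2 \quad\text{s.t.}\quad \tfrac{1}{2}\|z\|_{\mathbf{A}}^2 \leq 1,
\]
which is convex with a strictly feasible point ($z=0$ satisfies the constraint with slack), so Slater's condition holds and strong duality applies.

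First I would form the Lagrangian
\[
L(z,\lambda)=\tfrac{1}{2}\|z-x\|^2+\lambda\bigl(\tfrac{1}{2}\|z\|_{\mathbf{A}}^2-1\bigr),
\]
and perform the inner minimization in closed form: setting $\nabla_z L = 0$ gives a linear system in $z$ whose solution, after a Sherman--Morrison-style rearrangement, yields $z^\star(\lambda)$ in the announced form $(\mathbf{I}_d+\lambda_\star \mathbf{A}^{-1})^{-1} x$ (diagonalizing $\mathbf{A}$ makes this transparent).

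Next I would substitute $z^\star(\lambda)$ back into $L$ to get the dual function $g(\lambda)=\inf_z L(z,\lambda)$. Using the stationarity identity $x-z^\star=\lambda\mathbf{A} z^\star$ to cancel cross terms and then rewriting in the eigenbasis of $\mathbf{A}$ collapses the expression into the announced one-dimensional form $-2\lambda - x^\top \mathbf{A}^{1/2}(\lambda\mathbf{I}_d+\mathbf{A})^{-1}\mathbf{A}^{1/2} x$ (up to a harmless additive constant that does not affect the argmax). Strong concavity on $\lambda\geq 0$ follows from a direct computation of $g''(\lambda)$: in the eigenbasis it is a strictly negative sum of terms of the form $-c_i(\lambda+\lambda_i)^{-3}$ with $c_i>0$.

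Finally, strong duality delivers $y = z^\star(\lambda_\star)$ for the unique dual maximizer $\lambda_\star$, which is exactly the claimed formula. The only delicate step I anticipate is bookkeeping the arrangement of $\mathbf{A}^{1/2}$, $\mathbf{A}$ and $\mathbf{A}^{-1}$ so that the primal recovery and the dual objective match consistently (essentially a single diagonalization argument); the rest is routine convex analysis.
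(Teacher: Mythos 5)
Your overall route---Lagrangian duality with a closed-form inner minimization, followed by reduction to a one-dimensional dual---is exactly the paper's; your Slater argument via the strictly feasible point $z=0$ is even slightly more careful than the paper, which invokes strong duality without comment. But there is a concrete false step in your primal recovery. With your Lagrangian $L(z,\lambda)=\tfrac12\|z-x\|^2+\lambda\bigl(\tfrac12 z\transp\mbold{A}z-1\bigr)$, stationarity gives $(\mbold{I}_d+\lambda\mbold{A})z=x$, i.e.\ $z^\star(\lambda)=(\mbold{I}_d+\lambda\mbold{A})^{-1}x$; no Sherman--Morrison-style rearrangement turns this into the announced $(\mbold{I}_d+\lambda\mbold{A}^{-1})^{-1}x$, since the two operators differ for any $\mbold{A}\neq\mbold{A}^{-1}$. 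Substituting your $z^\star(\lambda)$ back likewise produces a dual of the form $-\lambda+\tfrac{1}{2}x\transp\mbold{A}^{1/2}(\lambda^{-1}\mbold{I}_d+\mbold{A})^{-1}\mbold{A}^{1/2}x$, which does not have the announced $(\lambda\mbold{I}_d+\mbold{A})^{-1}$ structure either. The source of the trouble is a typo in the lemma statement itself: the constraint must read $\|z\|^2_{\mbold{A}^{-1}}/2\leq 1$, which is what the paper's own proof actually uses, and only then are the recovery formula and the dual objective mutually consistent. Concretely, with $L(z,\lambda)=\tfrac12\|z-x\|^2+\lambda\bigl(\tfrac12 z\transp\mbold{A}^{-1}z-1\bigr)$ one gets $z(\lambda)=(\mbold{I}_d+\lambda\mbold{A}^{-1})^{-1}x$, hence $x-z(\lambda)=\lambda(\mbold{A}+\lambda\mbold{I}_d)^{-1}x$, and the dual collapses to $-\lambda+\tfrac{\lambda}{2}x\transp(\mbold{A}+\lambda\mbold{I}_d)^{-1}x=-\lambda+\tfrac12\|x\|^2-\tfrac12 x\transp\mbold{A}^{1/2}(\lambda\mbold{I}_d+\mbold{A})^{-1}\mbold{A}^{1/2}x$, whose maximizer agrees with the stated program (the additive $\tfrac12\|x\|^2$ and the overall factor of $2$ are indeed immaterial for the $\argmax$, as you anticipated). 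So the ``delicate bookkeeping'' you flagged is precisely where your write-up fails, and the resolution is to fix the constraint, not to rearrange the stationarity equation.

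With that correction the rest of your plan is sound and coincides with the paper's computation. Two minor caveats: what your eigenbasis computation $g''(\lambda)=-\sum_i c_i(\lambda+\lambda_i)^{-3}$ actually delivers is strict (not strong) concavity on the unbounded domain $\lambda\geq 0$, and it degenerates entirely when $\mbold{A}^{1/2}x=0$, in which case $g$ is linear and $\lambda_\star=0$ trivially---but the paper's statement is equally loose on this point, as it is on the implicit requirement that $\mbold{A}$ be positive definite rather than merely p.s.d.\ so that $\mbold{A}^{-1}$ exists.
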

\begin{proof}
	By definition of the projection onto a convex set:
	\begin{align*}
		y \defeq \argmin_{\frac{1}{2}\|z\|^2_{\mbold{A}^{-1}}\leq 1}\left\{ f(z) \defeq \frac{1}{2} \| x-z\|^2\right\}\; 
	\end{align*}
	Introducing the Lagrangian $L(z,\lambda)\defeq  \| x-z\|^2/2 + \lambda(\|z\|^2_{\mbold{A}^{-1}}/2-1)$ and by strong duality:
	\begin{align*}
		f(y) &= \min_z \max_{\lambda\geq 0} L(z,\lambda) \\
		&= \max_{\lambda\geq 0} \min_z L(z,\lambda) \; .
	\end{align*}
Denoting $z(\lambda) = \argmin_z L(z,\lambda)$, direct computation yields that:
\begin{align*}
	z(\lambda) =  (\mbold{I}_d+\lambda\mbold{A}^{-1})^{-1} x\; .
\end{align*}
Replacing into the dual problem, one obtains $y=z(\lambda_\star)$ where $\lambda_\star$ solves the program:
\begin{align*}
	\lambda_\star = \argmax_{\lambda\geq 0} - \lambda - x\transp \mbold{A}^{1/2}(\lambda\mbold{I}_d+\mbold{A})^{-1} \mbold{A}^{1/2} x/2 \; .
\end{align*}
\end{proof}

\newpage
\section{ADDITIONAL EXPERIMENTS} \label{app:exps}

The results reported in \cref{fig:add_exps} complements \cref{fig:comp} from the main text, for \logb{} instances of higher dimension and varying values of $\kappa$. As promised by the regret bounds, the improvement brought by \texttt{ada-OFU-ECOLog} over its statistically sub-optimal predecessors increases as $\kappa$ grows (\emph{i.e} as the reward signal gets more non-linear). We did not evaluate the performances of \texttt{OFULog-r} in this setting - it is unfortunately too computationally demanding to complete in reasonable time.

\begin{figure*}[!th]
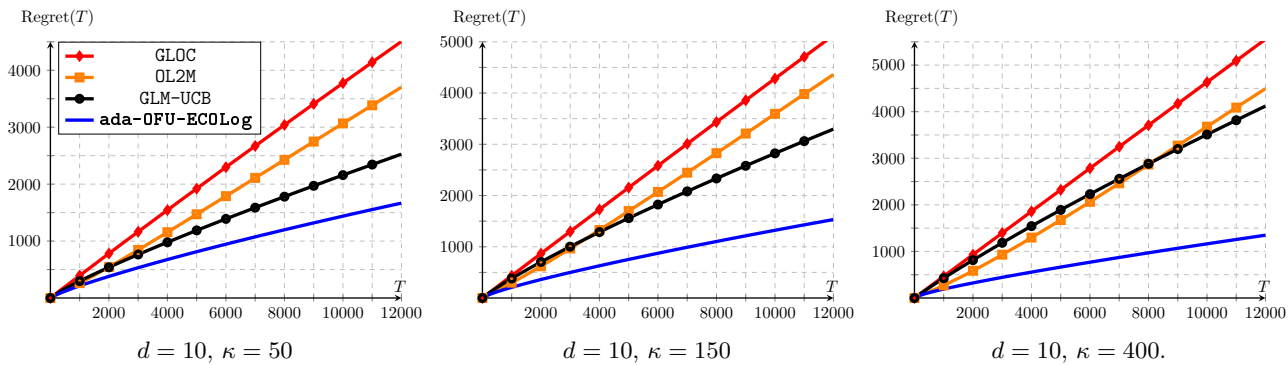

\begin{subfigure}{0.33\textwidth}
	\includegraphics[width=\textwidth]{img/regret_d10_n4.pdf}
	\caption*{$d=10$, $\kappa=50$}
\end{subfigure}
\begin{subfigure}{0.33\textwidth}
	\includegraphics[width=\textwidth]{img/regret_d10_n5.pdf}
	\caption*{$d=10$, $\kappa=150$}
\end{subfigure}
\begin{subfigure}{0.33\textwidth}
	\includegraphics[width=\textwidth]{img/regret_d10_n6}
	\caption*{$d=10$, $\kappa=400$.}
\end{subfigure}
\caption{Numerical simulations on \logb{} problems of dimensions $d=10$ and varying value of $\kappa$. Regret curves are averaged over 100 independent trajectories, for fixed arm-sets of cardinality 200.}\label{fig:add_exps}
\end{figure*}

\end{document}